\def\rvalpha{{\bm{\alpha}}}
\def\rvbeta{{\bm{\beta}}}
\def\rva{{\mathbf{a}}}
\def\rvb{{\mathbf{b}}}
\def\rvu{{\mathbf{i}}}
\def\rvs{{\mathbf{s}}}
\def\rvt{{\mathbf{t}}}
\def\rvu{{\mathbf{u}}}
\def\rvv{{\mathbf{v}}}
\def\rvw{{\mathbf{w}}}
\def\rvx{{\mathbf{x}}}
\def\rvy{{\mathbf{y}}}
\def\rmA{{\mathbf{A}}}
\def\rmB{{\mathbf{B}}}
\def\rmD{{\mathbf{D}}}
\def\rmE{{\mathbf{E}}}
\def\rmF{{\mathbf{F}}}
\def\rmH{{\mathbf{H}}}
\def\rmI{{\mathbf{I}}}
\def\rmM{{\mathbf{M}}}
\def\rmP{{\mathbf{P}}}
\def\rmQ{{\mathbf{Q}}}
\def\rmS{{\mathbf{S}}}
\def\rmT{{\mathbf{T}}}
\def\rmW{{\mathbf{W}}}
\def\rmX{{\mathbf{X}}}
\def\rmZ{{\mathbf{Z}}}
\def\valpha{{\bm{\alpha}}}
\def\vb{{\bm{b}}}
\def\vx{{\bm{x}}}
\def\mE{{\bm{E}}}
\def\mH{{\bm{H}}}
\def\mM{{\bm{M}}}
\def\mQ{{\bm{Q}}}
\def\mS{{\bm{S}}}
\def\mW{{\bm{W}}}
\def\mX{{\bm{X}}}
\DeclareMathAlphabet{\mathsfit}{\encodingdefault}{\sfdefault}{m}{sl}
\SetMathAlphabet{\mathsfit}{bold}{\encodingdefault}{\sfdefault}{bx}{n}
\def\tQ{{\tens{Q}}}
\def\gG{{\mathcal{G}}}
\def\sR{{\mathbb{R}}}
\newtheorem{proposition_app}{Proposition}
\newcommand{\expectation}[2]{\underset{#1}{\mathbb{E}}\left[#2\right]}
\newcommandx{\red}[2][1=]{\todo[linecolor=red,backgroundcolor=red!25,bordercolor=red,#1]{#2}}
\newcommandx{\blue}[2][1=]{\todo[linecolor=blue,backgroundcolor=blue!25,bordercolor=blue,#1]{#2}}
\newcommandx{\green}[2][1=]{\todo[linecolor=OliveGreen,backgroundcolor=OliveGreen!25,bordercolor=OliveGreen,#1]{#2}}
\newcommandx{\purple}[2][1=]{\todo[linecolor=Plum,backgroundcolor=Plum!25,bordercolor=Plum,#1]{#2}}
\newcommand{\lp}{\left(}
\newcommand{\rp}{\right)}
\newcommand{\linnerprod}{\left\langle}
\newcommand{\rinnerprod}{\right\rangle}
\def\rvmu{{\bm{\mu}}}
\newcommand{\xmark}{\ding{55}}%
\DeclareMathOperator{\sign}{sgn}
\newcommand{\rebuttal}[1]{\textcolor{black}{#1}}
\Crefname{equation}{Eq.}{Eqs.}
\Crefname{figure}{Fig.}{Figs.}
\Crefname{appendix}{App.}{App.}
\Crefname{section}{Sec.}{Sec.}
\Crefname{proposition_app}{Proposition}{Propositions}
\DeclareMathOperator*{\argmax}{arg\,max}
\DeclareMathOperator*{\argmin}{arg\,min}
\DeclareMathOperator*{\dual}{P_1}
\DeclareMathOperator*{\dualb}{P_1^b}
\DeclareMathOperator*{\upper}{P_2}
\DeclareMathOperator*{\single}{P_3}
\DeclareMathOperator*{\milp}{P}
\def\mTheta{\mQ}
\def\mtTheta{\widetilde{\mQ}}
\def\tTheta{\widetilde{Q}}
\def\Theta{Q}
\def\tQ{\widetilde{Q}}
\def\mtQ{\widetilde{\mQ}}
\def\mtS{\widetilde{\mS}}
\def\wtG{{\widetilde{\mathbf{G}}}}
\def\rmG{\mathcal{G}}
\def\mG{\mathcal{G}}
\def\mtG{\widetilde{\mG}}
\colorlet{rebuttal_color}{blue} 
\newcommand{\pushright}[1]{\ifmeasuring@#1\else\omit\hfill$\displaystyle#1$\fi\ignorespaces}
\newcommand{\pushleft}[1]{\ifmeasuring@#1\else\omit$\displaystyle#1$\hfill\fi\ignorespaces}
\theoremstyle{plain}
\newtheorem{theorem}{Theorem}[section]
\newtheorem{proposition}[theorem]{Proposition}
\newtheorem{lemma}[theorem]{Lemma}
\theoremstyle{definition}
\newtheorem{definition}[theorem]{Definition}
\theoremstyle{remark}
\title{Provable Robustness of (Graph) Neural Networks \\
Against Data Poisoning and Backdoor Attacks}
\author{\name Lukas Gosch\thanks{Equal contribution.} \email l.gosch@tum.de \\
\addr School of Computation, Information and Technology \& Munich Data Science Institute \\ Technical University of Munich; Munich Center for Machine Learning (MCML)
\AND
\name Mahalakshmi Sabanayagam$^{*}$ \email m.sabanayagam@tum.de \\
\addr School of Computation, Information and Technology \\ Technical University of Munich
\AND
\name Debarghya Ghoshdastidar \email d.ghoshdastidar@tum.de \\
\addr School of Computation, Information and Technology \& Munich Data Science Institute \\ Technical University of Munich; Munich Center for Machine Learning (MCML)
\AND
\name Stephan G\"unnemann \email s.guennemann@tum.de \\
\addr School of Computation, Information and Technology \& Munich Data Science Institute \\ Technical University of Munich; Munich Center for Machine Learning (MCML)
}
\begin{document}

\maketitle

\begin{abstract}
Generalization of machine learning models can be severely compromised by data poisoning, where adversarial changes are applied to the training data. This vulnerability has led to interest in certifying (i.e., proving) that such changes up to a certain magnitude do not affect test predictions. We, for the \textit{first} time, certify Graph Neural Networks (GNNs) against poisoning attacks, including backdoors, targeting the node features of a given graph. Our certificates are white-box and based upon $(i)$ the \textit{neural tangent kernel}, which characterizes the training dynamics of sufficiently wide networks; and $(ii)$ a novel reformulation of the bilevel optimization problem describing poisoning as a mixed-integer linear program. Consequently, we leverage our framework to provide fundamental insights into the role of graph structure and its connectivity on the worst-case robustness behavior of convolution-based and PageRank-based GNNs. We note that our framework is more general and constitutes the \textit{first} approach to derive white-box poisoning certificates for NNs, which can be of independent interest beyond graph-related tasks. 
\end{abstract}

\section{Introduction} \label{sec:intro}

Numerous works showcase the vulnerability of modern machine learning models to data poisoning, where adversarial changes are made to the training data \citep{biggio2012poisoning, munoz2017towards, zugner2019metaattack, wan2023llm}, as well as backdoor attacks affecting both training and test sets \citep{goldblum2023datasetsecurity}. \rebuttal{The practicability of data poisoning attacks on modern web-scale datasets has been impressively showcased by \citet{carlini2024pois} and is a critical concern for practitioners and enterprises \citep{grosse2023industry,kumar2020industry}.}
\rebuttal{Reliable detection of poisoning examples is an unsolved problem \citep{goldblum2023datasetsecurity} and }
empirical defenses against such threats %
are continually at risk of being compromised by 
future attacks 
\citep{koh2022stronger, suiu2018fail}. %
This %
motivates the development of \emph{robustness certificates},
which provide formal guarantees that the prediction for a given test data point remains unchanged under \rebuttal{certain corruptions of the training data}.%

Robustness certificates can be categorized as providing deterministic or probabilistic guarantees, and as being white box, i.e.\ developed for a particular model, or black box (model-agnostic). %
While each approach has its strengths and applications \citep{li2023sok}, 
we focus on %
\textit{white-box} certificates as they can provide a more direct understanding into the worst-case robustness behavior of commonly used models %
and 
architectural choices \citep{tjeng2019evaluating, mao2024understanding, banerjee2024interpreting}. %
The literature on poisoning certificates is less developed than certifying against test-time (evasion) attacks and we provide an overview and categorization in \Cref{tab:related_work}. Notably, white-box certificates are currently available only for decision trees \citep{drews2020proving}, nearest neighbor algorithms \citep{jia2021certknn}, and naive Bayes classification \citep{bian2024naive}. %
In the case of Neural Networks (NNs), the main challenge in white-box poisoning certification comes from capturing their complex training dynamics. As a result, the current literature reveals that deriving white-box poisoning certificates for NNs, and by extension Graph Neural Networks (GNNs), is still an \textit{unsolved} problem, %
raising the question if such certificates can at all be practically computed. 

\begin{table*}[t]
\centering
\caption{Representative selection of data poisoning and backdoor certificates. %
Poisoning refers to (purely) training-time attacks. A backdoor attack refers to joint training and test-time perturbations. Certificates apply to different attack types: $(i)$ Clean-label: modifies the features of the training data; $(ii)$ Label-flipping: modifies the labels of the training data; $(iii)$ Joint: modifies both features and labels; $(iv)$ General attack: allows (arbitrary) insertion/deletion, i.e., %
dataset size doesn't need to be constant; $(v)$ Node injection: particular to graphs, refers to adding nodes with arbitrary features and malicious edges into the graph. It is most related to $(iv)$ but does not allow deletion and can't be compared with $(i)$ and $(ii)$. Note that certificates that only certify against %
$(iii)-(v)$ cannot certify against clean-label or label-flipping attacks individually.}
\label{tab:related_work}
\resizebox{\linewidth}{!}{
\begin{tabular}{llclcclcl}
\toprule
 & \multicolumn{2}{r}{\multirow{2}{*}{\textbf{Deterministic}}} & \multirow{2}{*}{\textbf{Certified Models}} & \multicolumn{3}{c}{\textbf{Perturbation Model}} & \textbf{Applies to} & \multirow{2}{*}{\textbf{Approach}}   \\
 &  & & & Pois. & Backd. & Attack Type & \textbf{Node Cls.} & \\
\midrule  \citep{ma2019data} &
    \multirow{11}{*}{\begin{sideways}Black Box\end{sideways}} & \xmark & Diff. Private Learners & \checkmark & \xmark & Joint & \xmark  & Differential Privacy  \\
 \citep{liu2023antidote} &  & \xmark &  Diff. Private Learners & \checkmark & \xmark & General & \xmark  & Differential Privacy \\
\citep{wang2020certifying} & & \xmark & Smoothed Classifier &  \xmark  & \checkmark & Joint & \xmark  & Randomized Smoothing \\
 \citep{weber2023rab} & & \xmark & Smoothed Classifier & \xmark & \checkmark & Clean-label & \xmark  & Randomized Smoothing  \\
\citep{zhang2022bagflip} && \xmark & Smoothed Classifier & \checkmark & \checkmark & Joint & \xmark  & Randomized Smoothing \\
\citep{lai2024nodeaware} & & \xmark & Smoothed Classifier & \checkmark & \xmark & Node Injection & \checkmark & Randomized Smoothing \\
\citep{jia2021intrinsic} & & \xmark & Ensemble Classifier & \checkmark & \xmark & General & \xmark  & Ensemble (Majority Vote) \\ \cmidrule{3-9}
\citep{rosenfeld2020certified} & & \checkmark & Smoothed Classifier %
& \checkmark & \xmark & Label Flip. & \xmark  & Randomized Smoothing \\
\citep{levine2021deep} && \checkmark & Ensemble Classifier & \checkmark & \xmark & Label Flip./General & \xmark  & Ensemble (Majority Vote)  \\
\citep{wang2022improved} && \checkmark & Ensemble Classifier & \checkmark & \xmark & General & \xmark  & Ensemble (Majority Vote)  \\
\citep{rezaei2023runoff} && \checkmark & Ensemble Classifier & \checkmark & \xmark & General &\xmark  & Ensemble (Run-Off Election)  \\
\midrule
\citep{drews2020proving}&  & \checkmark & Decision Trees & \checkmark & \xmark & General & \xmark  & Abstract Interpretation \\
\citep{meyer2021certifying} &  & \checkmark & Decision Trees & \checkmark  & \xmark & General & \xmark  & Abstract Interpretation \\
\citep{jia2021certknn} & & \checkmark & k-Nearest Neighbors & \checkmark & \xmark & General & \xmark  & Majority Vote  \\
\citep{bian2024naive} &  & \checkmark & Naive Bayes Classifier & \checkmark  & \xmark & Clean-label & \xmark  & Algorithmic \\
\rowcolor{Gainsboro!60} \textbf{Ours} & \multirow{-5}{*}{\begin{sideways}
    White Box\end{sideways}} & \textbf{\checkmark} & \textbf{NNs \& SVMs} %
& \textbf{\checkmark}  & \textbf{\checkmark} & Clean-label & \checkmark  &  NTK \& Linear Programming  \\
\bottomrule
\end{tabular}
}
\vspace{-0.1cm}
\end{table*}
In this work, we give a positive answer to this question %
by developing %
the first approach towards white-box certification of 
NNs against data poisoning and backdoor attacks, and instantiate it for common convolution-based and PageRank-based GNNs. Concretely, poisoning can be modeled as a bilevel optimization problem over the training data $\mathcal{D}$ that includes training on $\mathcal{D}$ as its inner subproblem. %
To overcome the challenge of capturing the 
complex training dynamics of NNs, 
we consider the Neural Tangent Kernel (NTK) %
that characterizes the training dynamics of sufficiently wide NNs 
under gradient flow~\citep{jacot2018neural, arora2019exact}. %
In particular, we leverage the equivalence 
\begin{wrapfigure}[11]{r}{0.225\textwidth}
\centering
    \vspace{-0.18cm}
    \includegraphics[width=0.225\textwidth]{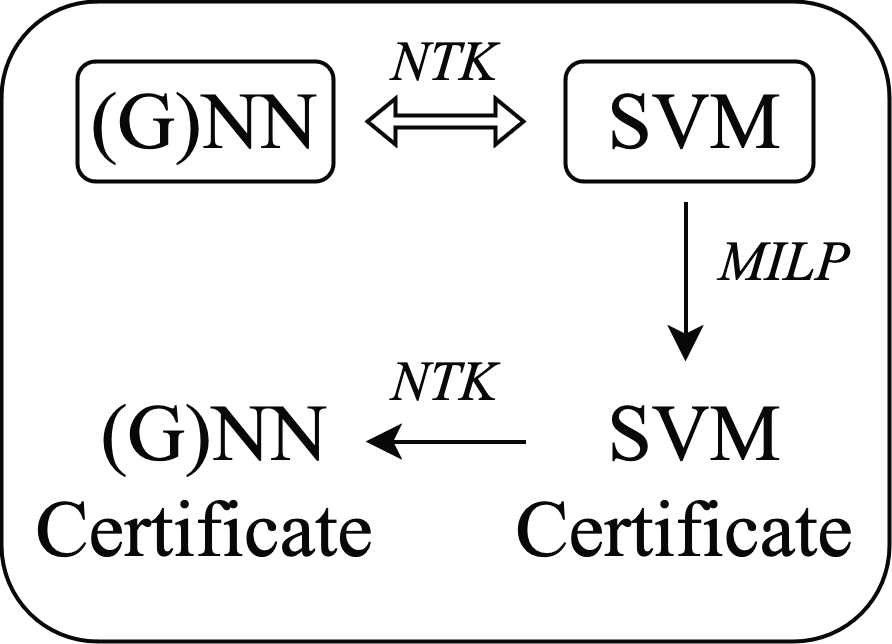}
    \caption{Illustration of our poisoning certification framework QPCert.}
	\label{fig:starplot}
\end{wrapfigure}
between NNs trained using the soft-margin loss and standard soft-margin 
Support Vector Machines (SVMs) with the NN's NTKs as kernel matrix ~\citep{chen2021equivalence}. 
Using this equivalence, we introduce a novel reformulation of the bilevel optimization problem 
as a Mixed-Integer Linear Program (MILP) 
that allows to certify test datapoints against poisoning as well as backdoor attacks for sufficiently wide 
NNs (see \Cref{fig:starplot}). %
Although our framework applies to wide NNs in general, solving the MILP scales with the number of labeled training samples. Thus, it is a natural fit for semi-supervised learning tasks, where one can take advantage of the low labeling rate. In this context, we focus on semi-supervised node classification in graphs, where certifying against node feature perturbations is particularly challenging 
due to the interconnectivity between nodes \citep{zugner2019robustgcn, scholten2023hierarchical}.
Here, our framework provides a general and elegant way to handle this %
interconnectivity inherent to graph learning, by using the corresponding graph NTKs \citep{sabanayagam2023analysis} of various GNNs. 
Our \textbf{contributions} are: %

\textbf{(i)} We are the first to certify GNNs in node-classification tasks against poisoning and backdoor attacks targeting node features. Our certification framework called QPCert is introduced in \Cref{sec:method} and leverages the NTK to capture the complex training-dynamics of GNNs. Further, it can be applied to NNs in general and thus, it represents the first approach on \textit{white-box} poisoning certificates for NNs. %

\textbf{(ii)} Enabled by the white-box nature of our certificate, we conduct the first study into the role of graph data and architectural choices on the worst-case robustness of many widely used GNNs against data poisoning and backdoors (\Cref{sec:results}). We focus on convolution and PageRank-based architectures 
and %
contribute the derivation of the closed-form NTK 
for %
APPNP \citep{gasteiger2019predict}, GIN \citep{xu2018powerful}, and GraphSAGE \citep{hamilton2017inductive} in \Cref{app:appnp_ntk}. %

\textbf{(iii)} We contribute a 
reformulation of the bilevel optimization problem describing poisoning as a MILP when instantiated with kernelized SVMs, allowing for white-box certification of SVMs. While we focus on the NTK as kernel, our strategy can be transferred to arbitrary kernel choices.

\textbf{Notation.}
We represent matrices and vectors with boldfaced upper and lowercase letters, respectively. $v_i$ and $M_{ij}$ denote $i$-th and $ij$-th entries of $\rvv$ and $\rmM$. %
$\rmM_{i}$ is the $i$-th row of $\rmM$,
$\rmI_n$ the identity matrix, $\mathbf{1}_{n \times n}$ the matrix of all $1$s of size $n \times n$. 
We use 
$\linnerprod .,. \rinnerprod$ for scalar product,
 $\lVert . \rVert_2$ for vector Euclidean norm and matrix Frobenius norm, $\mathbbm{1}[.]$ for indicator function, $\odot$ for the Hadamard product, $\expectation{}{.}$ for expectation, %
 and $\lceil z \rceil$ for the smallest integer $\geq z$ (ceil).
$[n]$ denotes $\{1, 2, \ldots, n\}$. 
\rebuttal{\Cref{app_sec:notations} provides a complete list of symbols we use.}

\section{Preliminaries}
\label{sec:preliminaries}
We are given a partially-labeled graph $\mG=(\mS, \mX)$ with $n$ nodes and a graph structure matrix $\mS \in \mathbb{R}_{\ge0}^{n \times n}$, representing for example, a normalized adjacency matrix. 
Each node $i\in[n]$ has features $\vx_i \in \mathbb{R}^d$ of dimension $d$ collected in a node feature matrix $\mX \in \mathbb{R}^{n \times d}$. We assume labels $y_i \in \{1, \dots, K\}$ are given %
for the first $m \le n$ nodes. Our goal is to perform node classification, either in a transductive setting where the labels of the remaining $n-m$ nodes should be inferred, or in an inductive setting where newly added nodes at test time should be classified. The set of labeled nodes is denoted $\mathcal{V}_L$ and the set of unlabeled nodes $\mathcal{V}_U$.

\textbf{Perturbation Model.} We assume that at training time the adversary $\mathcal{A}$ has control over the features of an $\epsilon$-fraction of nodes and that $\lceil(1-\epsilon)n\rceil$ nodes are clean. For backdoor attacks, the adversary can also change the features of a test node of interest.
Following the \textit{semi-verified learning} setup introduced in \citet{cha2017semiverified}, we assume that $k < n$ nodes are known to be uncorrupted. We denote the verified nodes by set $\mathcal{V}_V$ and the nodes that can be potentially corrupted as set $\mathcal{U}$. We further assume that the strength of $\mathcal{A}$ to poison training or modify test nodes is bounded by a budget $\delta \in \mathbb{R}_+$. More formally, $\mathcal{A}$ can choose a perturbed $\tilde{\vx}_i \in \mathcal{B}_p(\vx_i):=\{\tilde{\vx}\ |\ \lVert \tilde{\vx} - \vx_i \rVert_p \le \delta\}$ for each node $i$ under control. %
We denote the set of all perturbed node feature matrices constructible by $\mathcal{A}$ from $\mX$ as $\mathcal{A}(\mX)$ and $\mathcal{A}(\mG)=\{(\mS, \tilde{\mX})\ |\ \tilde{\mX} \in \mathcal{A}(\mX)\}$. In data poisoning, the \textit{goal} of $\mathcal{A}$ is to maximize misclassification in the test nodes. For backdoor attacks $\mathcal{A}$ aims to induce misclassification only in test nodes that it controls.

\textbf{Learning Setup.} GNNs are functions $f_\theta$ with (learnable) parameters $\theta\in \mathbb{R}^q$ and $L$ number of layers taking the graph $\mG=(\mS, \mX)$ as input and outputting a prediction for each node. We consider linear output layers with weights $\mW^{L+1}$ and denote by $f_\theta(\mG)_i \in \mathbb{R}^K$ the (unnormalized) logit output associated to node $i$. Note for binary classification $f_\theta(\mG)_i \in \mathbb{R}$. We define the architectures such as MLP, GCN \citep{kipf2017semisupervised}, SGC \citep{wu2019simplifying}, (A)PPNP \citep{gasteiger2019predict}  and others in \Cref{sec_app:architecture_def}. We focus on binary classes $y_i \in \{\pm1\}$ and refer to \Cref{sec_app:multi_class} for the multi-class case. Following \citet{chen2021equivalence}, the parameters $\theta$ are learned using the soft-margin loss
\begin{align}
    \mathcal{L}(\theta, \mG) = \min_\theta \frac{1}{2} \lVert \rmW^{L+1} \rVert^2_2 + C \sum_{i=1}^{m} \max(0 ,1-y_i f_\theta(\mG)_i)  \nonumber %
\end{align}
where %
the second term is the Hinge loss weighted by a regularization $C\in\mathbb{R}_+$. Note that due to its non-differentiability, the NN is trained by subgradient descent.
Furthermore, we consider NTK parameterization \citep{jacot2018neural} in which parameters $\theta$ are initialized from a standard Gaussian $\mathcal{N}(0,1/\text{width})$. %
Under NTK parameterization and large width limit, the training dynamics of $f_{\theta}(\mG)$ are precisely characterized by %
the NTK defined between nodes $i$ and $j$ as $\Theta_{ij} = \mTheta({\rvx_i,  \rvx_j}) = \mathbb{E}_\theta[\linnerprod \nabla_{\theta}f_\theta(\mG)_i,  \nabla_{\theta}f_\theta(\mG)_j \rinnerprod] \in \mathbb{R}$.

\textbf{Equivalence of NN to Soft-Margin SVM with NTK.} 
\citet{chen2021equivalence} show that training NNs in the infinite-width limit with $\mathcal{L}(\theta, \mG)$ is equivalent to training a soft-margin SVM with (sub)gradient descent using the NN's NTK as kernel. Thus, both methods converge to the same solution. 
We extend this equivalence to GNNs, as detailed in \Cref{app_sec:gnn_svm_equi}.
More formally, let the SVM be defined as
$f_\theta(\rmG )_i = f_\theta^{SVM}(\vx_i) = \linnerprod \rvbeta, \Phi(\vx_i) \rinnerprod$ where $\Phi(\cdot)$ is the feature transformation associated to the used kernel and $\theta=\rvbeta$ are the learnable parameters obtained by minimizing $\mathcal{L}(\theta, \mG)$.
Following \citet{chen2021equivalence}, we do not include a bias term. To find the optimal $\rvbeta^*$, instead of minimizing $\mathcal{L}(\theta, \mG)$, %
we work with the equivalent dual %
\begin{align}
     \dual(\mTheta):\ \min_{\rvalpha} - &\sum_{i=1}^m \alpha_i + \dfrac{1}{2} \sum_{i=1}^m \sum_{j=1}^m y_i y_j \alpha_i \alpha_j \Theta_{i j}\quad \text{s.t.} \quad 0 \leq \alpha_i \leq C \quad \forall i \in [m] 
     \label{eq:svm_dual}
\end{align}
with the Lagrange multipliers $\rvalpha \in \mathbb{R}^m$ and kernel $\Theta_{ij} = \mTheta(\vx_i, \vx_j) \in \mathbb{R}$ computed between all labeled nodes $i,j \in[m]$. The optimal dual solution may not be unique and we denote the set of $\valpha$ solving $\dual(\mTheta)$ by $\mathcal{S}(\mTheta)$. %
However, any $\valpha^* \in \mathcal{S}(\mTheta)$ corresponds to the same unique $\rvbeta^*=\sum_{i=1}^m y_i \alpha_i^* \Phi(\rmG)_i$ minimizing $\mathcal{L}(\theta, \mG)$ \citep{burgers1999unique}. Thus, the SVM prediction for a test node $t$ using the dual is $f_\theta^{SVM}(\rvx_t) =\sum_{i=1}^m y_i \alpha_i^* \Theta_{ti}$ for any $\valpha^* \in \mathcal{S}(\mTheta)$, where $\Theta_{ti}$ is the kernel between a test node $t$ and training node $i$. By choosing $\mTheta$ to be the NTK of a GNN $f_\theta$, the prediction equals $f_\theta(\mG)_t$ if the width of the GNN's hidden layers goes to infinity. Thus, a certificate for the SVM directly translates to a certificate for infinitely-wide GNNs. %
Given finite-width, 
where the smallest 
layer width is $h$, the output difference between both methods can be bounded with high probability by $\mathcal{O}(\frac{\ln h}{\sqrt{h}})$ (the probability $\rightarrow 1$ as $h \rightarrow \infty$). %
Thus, the certificate translates to a high probability guarantee for sufficiently wide finite networks.

\section{QPCert: Our Certification Framework}
\label{sec:method}
Poisoning a clean training graph $\mG$ can be described as a bilevel problem where an adversary $\mathcal{A}$ tries to find a perturbed $\mtG \in \mathcal{A}(\mG)$ that results in a model $\theta$ minimizing an attack objective $\mathcal{L}_{att}(\theta, \mtG)$:
\begin{align}\label{eq:bilevel_problem}
    &\min_{\mtG, \theta}\mathcal{L}_{att}(\theta,\mtG) 
    \text{ s.}\,\text{t. } %
    \mtG\!\in\!\mathcal{A}(\mG)\,\land\, 
    \theta\!\in\!\argmin_{\theta'}\mathcal{L}(\theta',\mtG)
\end{align}
\Cref{eq:bilevel_problem} is called an upper-level problem and $\min_{\theta'} \mathcal{L}(\theta', \mtG)$ %
the lower-level problem.
Now, a sample-wise poisoning certificate can be obtained by solving \Cref{eq:bilevel_problem} with an $\mathcal{L}_{att}(\theta, \mtG)$ chosen to describe if the prediction for a test node $t$ changes compared to the prediction of a model trained on the clean graph. However, this approach is challenging as even the simplest bilevel problems given by a linear lower-level problem embedded in an upper-level linear problem are NP-hard \citep{jeroslow1985polynomial}. Thus, in this section, we develop a general methodology to reformulate the bilevel (sample-wise) certification problem for kernelized SVMs as a mixed-integer linear program, making certification tractable through the 
use  of highly efficient modern MILP solvers such as Gurobi 
\citep{gurobi} or CPLEX \citep{cplex2009v12}. Our approach can be divided into three steps: \textbf{(1)} %
The bilevel problem is reduced to a single-level problem by exploiting properties of the quadratic dual $\dual(\mTheta)$; %
\textbf{(2)} %
We model $\mtG \in \mathcal{A}(\mG)$ by assuming a bound on the effect any $\mtG$ can have on the elements of the kernel $\mTheta$. %
This introduces a relaxation of the bilevel problem from \Cref{eq:bilevel_problem} and allows us %
to fully express certification as a MILP; \textbf{(3)} In \Cref{ss:threat_model}, we choose the NTK of different GNNs as kernel and develop bounds on the kernel elements to use in the certificate. 
In the following, we present our certificate for binary classification where $ y_i \in \{\pm1\} \,\, \forall i \in [n]$ and transductive learning, where the test node is already part of $\mG$. 
We generalize it to a multi-class and inductive setting in \Cref{sec_app:multi_class}.

\textbf{A Single-Level Reformulation.} Given an SVM $f_\theta^{SVM}$ trained on the clean graph $\mG$, its class prediction for a test node $t$ is given by $\sign(\hat{p}_t)=\sign(f_\theta^{SVM}(\vx_t))$. If for all $\mtG \in \mathcal{A}(\mG)$ the sign of the prediction does not change if the SVM should be retrained on $\mtG$, then we know that the prediction for $t$ is certifiably robust. Thus, the attack objective reads $\mathcal{L}_{att}(\theta, \mtG)=\sign(\hat{p}_t)\sum_{i=1}^m y_i \alpha_i \tTheta_{ti}$, where $\tTheta_{ti}$ denotes the kernel computed between nodes $t$ and $i$ on the perturbed graph $\mtG$, and indicates robustness if greater than zero. %
Now, notice that the perturbed graph $\mtG$ only enters the training objective \Cref{eq:svm_dual} through values of the kernel matrix $\mtTheta \in \mathbb{R}^{n \times n}$. Thus, we introduce the set $\mathcal{A}(\mTheta)$ of all kernel matrices $\mtTheta$, constructable from $\mtG \in \mathcal{A}(\mG)$. Furthermore, we denote with $\mathcal{S}(\mtTheta)$ the optimal solution set to $\dual(\mtTheta)$. As a result, we rewrite \Cref{eq:bilevel_problem} for kernelized SVMs as
\begin{align} \label{eq:bilevel_svm}
\upper(\mTheta):\ \min_{\rvalpha, \mtTheta}\  &\sign(\hat{p}_t) \sum_{i=1}^{m} y_i \alpha_i  \tTheta_{ti} \quad \text{ s.t. } \quad \mtTheta \in \mathcal{A}(\mTheta)\ \land\ 
\rvalpha \in \mathcal{S}(\mtTheta) 
\end{align}
and certify robustness if the optimal solution to $\upper(\mTheta)$ is greater than zero. Crucial in reformulating $\upper(\mTheta)$ into a single-level problem are the Karush–Kuhn–Tucker (KKT) conditions of the lower-level problem $\dual(\mtTheta)$. 
Concretely, the KKT conditions of $\dual(\mtTheta)$ are %
\begin{align}
\forall i\in [m]: \quad &\sum_{j=1}^{m} y_iy_j\alpha_j \tTheta_{ij}-1-u_i+v_i = 0, & \text{ \textit{(Stationarity)}} \label{eq:kkt_stat} \\
&\alpha_i \ge 0, \,\, C-\alpha_i \ge 0, \,\, u_i \ge 0, \,\, v_i \ge 0, & \text{ \textit{(Primal and Dual feasibility)}} \label{eq:kkt_dual}\\
&u_i\alpha_i = 0, \,\, v_i (C-\alpha_i) = 0 & \text{ \textit{(Complementary slackness)}} \label{eq:kkt_compl} 
\end{align}
where $\rvu\in \mathbb{R}^{m}$ and $\rvv \in \mathbb{R}^{m}$ are Lagrange multipliers. %
Now, we can state (see \Cref{app_sec:slater} for the proof):%
\begin{proposition}
Problem $\dual(\mtTheta)$ given by \Cref{eq:svm_dual} is convex and satisfies strong Slater's constraint.
Consequently, the single-level optimization problem $\single(\mTheta)$ arising from $\upper(\mTheta)$ by replacing $\rvalpha \in \mathcal{S}(\mtTheta)$ with \Cref{eq:kkt_stat,eq:kkt_dual,eq:kkt_compl} has the same globally optimal solutions as $\upper(\mTheta)$.
\label{prop:inner_prob}
\vspace{-0.2cm}
\end{proposition}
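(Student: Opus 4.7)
The plan is twofold: I would first establish that $\dual(\mtTheta)$ is convex and satisfies strong Slater's constraint, and then use this, via standard convex-programming theory, to conclude that $\single(\mTheta)$ and $\upper(\mTheta)$ have the same global optima. For convexity, I would rewrite the objective of $\dual(\mtTheta)$ in matrix form as $-\mathbf{1}^\top\rvalpha + \tfrac{1}{2}\rvalpha^\top(\mY\mtTheta\mY)\rvalpha$, where $\mY=\text{diag}(y_1,\ldots,y_m)$. Since any kernel matrix is positive semi-definite, so is $\mY\mtTheta\mY$ (congruence by the real diagonal $\mY$ preserves PSD-ness), making the quadratic form convex. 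The constraints $0\le\alpha_i\le C$ are affine, so the feasible region is a convex polytope and the whole problem is a convex QP. For strong Slater's, since all constraints are affine, it suffices to exhibit a point strictly satisfying every inequality; the choice $\rvalpha=(C/2)\mathbf{1}$ lies in the open interior of $[0,C]^m$ and is strictly feasible for any $\mtTheta$.

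For the reformulation step, I would combine the above with the standard convex-programming fact that, for a convex problem satisfying Slater's, a primal point $\rvalpha^*$ is optimal if and only if there exist Lagrange multipliers $\rvu,\rvv$ such that the KKT stationarity, primal/dual feasibility, and complementary-slackness conditions \Cref{eq:kkt_stat,eq:kkt_dual,eq:kkt_compl} hold. Necessity uses Slater's to produce the multipliers; sufficiency uses convexity. Hence for any fixed $\mtTheta\in\mathcal{A}(\mTheta)$, the set $\mathcal{S}(\mtTheta)$ equals the $\rvalpha$-projection of the solution set of the KKT system. Therefore $\single(\mTheta)$ is obtained from $\upper(\mTheta)$ by replacing the implicit optimality constraint $\rvalpha\in\mathcal{S}(\mtTheta)$ by an explicitly equivalent one (after introducing the auxiliary variables $\rvu,\rvv$). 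Since the objective depends only on $(\rvalpha,\mtTheta)$, the two problems share an identical optimal value, and any global optimum of one extends (or projects) to a global optimum of the other.

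The main subtlety, rather than a genuine obstacle, is that $\mtTheta$ is itself a decision variable coupling the inner and outer levels. One therefore needs convexity and Slater's to hold for \emph{every} $\mtTheta\in\mathcal{A}(\mTheta)$, not only the nominal kernel $\mTheta$. Both arguments above are uniform in $\mtTheta$: positive semi-definiteness is preserved throughout $\mathcal{A}(\mTheta)$ whenever this set is constructed from valid kernel matrices (which is the case for the NTK-based perturbation sets developed in \Cref{ss:threat_model}), and the strictly feasible Slater point $\rvalpha=(C/2)\mathbf{1}$ is independent of $\mtTheta$. This uniformity is what legitimates the KKT-based single-level reformulation across the entire feasible set of the outer problem, and in turn justifies the subsequent mixed-integer linear programming treatment.
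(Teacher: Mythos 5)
Your proof is correct and follows the same overall architecture as the paper's (convexity of the lower level, strong Slater's constraint qualification uniformly over the upper-level feasible set, then replacement of lower-level optimality by the KKT system), but each sub-step is executed differently, and in each case your version is more self-contained. For convexity, the paper cites the textbook convexity of the biased SVM dual and argues that dropping the equality constraint $\sum_i \alpha_i y_i = 0$ preserves it, whereas you prove positive semi-definiteness of $\mY\mtTheta\mY$ directly by congruence; your route makes explicit the one fact that actually needs checking, namely that every $\mtTheta\in\mathcal{A}(\mTheta)$ is a genuine Gram matrix --- which holds because $\mathcal{A}(\mTheta)$ consists of NTKs of perturbed graphs, and this is indeed the set appearing in $\single(\mTheta)$ (the element-wise interval relaxation only enters later, in the MILP of \Cref{thm:milp}). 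For Slater's condition, the paper gives a more involved construction that perturbs an optimal solution by $\epsilon=\alpha_j/(m+1)$ and splits into degenerate and non-degenerate cases; your single point $\rvalpha=(C/2)\mathbf{1}$, strictly feasible for every $\mtTheta$ and independent of it, is simpler and is in fact what the paper itself falls back to in the degenerate case. For the final step, the paper invokes the general theorem of Dempe et al.\ on KKT reformulations of bilevel programs with convex, Slater-regular lower levels; you re-derive the relevant special case from necessity (via Slater) and sufficiency (via convexity) of the KKT conditions, observing that the objective depends only on $(\rvalpha,\mtTheta)$ so the auxiliary multipliers $\rvu,\rvv$ do not perturb optimal values. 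Both are valid; the citation buys brevity, while your derivation is more transparent and makes clear why the equivalence is claimed for global rather than local optima.
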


\textbf{A Mixed-Integer Linear Reformulation.} The computational bottleneck of $\single(\mtTheta)$ are the non-linear product terms between continuous variables in the attack objective as well as in \Cref{eq:kkt_compl,eq:kkt_stat}, making $\single(\mtTheta)$ a bilinear problem. Thus, we describe in the following how $\single(\mtTheta)$ can be transformed into a MILP. First, 
the complementary slackness constraints can be linearized by recognizing that they have a combinatorial structure. In particular, $u_i=0$ if $\alpha_i>0$ and $v_i=0$ if $\alpha_i < C$. Thus, introducing binary integer variables $\rvs$ and $\rvt \in \{0,1\}^{m}$, we reformulate the constraints in \Cref{eq:kkt_compl} with big-$M$ constraints as
\begin{align}
    \forall i \in [m]: \quad 
    &u_i \leq M_{u_i} s_i, \,\, \alpha_i \leq C(1-s_i), \,\, s_i \in \{0,1\}, \label{eq:bigms}\\
    &v_i \leq M_{v_i} t_i, \,\, C-\alpha_i \leq C(1-t_i), \,\, t_i \in \{0,1\} \nonumber %
\end{align}
where $M_{u_i}$ and $M_{v_i}$ are positive constants. %
In general, %
verifying that a certain choice of big-$M$s results in a valid (mixed-integer) reformulation of the complementary constraints \Cref{eq:kkt_compl}, i.e., such that no optimal solution to the original bilevel problem is cut off, is at least as hard as solving the bilevel problem itself \citep{kleinert2020nofree}. This is problematic as heuristic choices can lead to suboptimal solutions to the original problem \citep{salvador2019glitter}. However, additional structure provided by $\dual(\mtTheta)$ and $\single(\mTheta)$ together with insights into the optimal solution set allows us to derive valid and small $M_{u_i}$ and $M_{v_i}$ for all $ i \in [m]$. %

Concretely, the adversary $\mathcal{A}$ can only make a bounded change to $\mG$. Thus, the element-wise difference of any $\mtTheta \in \mathcal{A}(\mTheta)$ to $\mTheta$ will be bounded. As a result, there exist element-wise upper and lower bounds $\tTheta^L_{ij} \le \tTheta_{ij} \le \tTheta_{ij}^U$ for all $i,j \in [m]\cup\{t\}$ and valid for any $\mtTheta \in \mathcal{A}(\mTheta)$. In \Cref{sec:bounds} we derive concrete lower and upper bounds for the NTKs corresponding to different common GNNs. This, together with $0 \le \alpha_i \le C$, allows us to lower and upper bound $\sum_{j=1}^{m} y_iy_j\alpha_j \tTheta_{ij}$ in \Cref{eq:kkt_stat}. Now, given an optimal solution $(\rvalpha^*, \mtTheta^*, \rvu^*, \rvv^*)$ to $\single(\mTheta)$, observe that either $u_i^*$ or $v_i^*$ are zero, or can be freely varied between any positive values as long as \Cref{eq:kkt_stat} is satisfied without changing the objective value or any other variable. As a result, one can use the lower and upper bounds on $\sum_{j=1}^{m} y_iy_j\alpha_j \tTheta_{ij}$ to find the minimal value range %
for $u_i$ and $v_i$, such that \Cref{eq:kkt_stat} can always be satisfied for any $\rvalpha^*$ and $\mtTheta^*$. Consequently, 
one can find constants $M_{u_i}$ and $M_{v_i}$ given in \Cref{prop:big_ms} such that 
only redundant solutions regarding large $u_i^*$ and $v_i^*$ will be cut off and the optimal solution value stays the same as for $\single(\mTheta)$, not affecting the certification (for a formal proof see \Cref{app_sec:bigm}). %

\begin{proposition}[Big-$M$'s]
    \label{prop:big_ms}
    Replacing the complementary slackness constraints \Cref{eq:kkt_compl} in $\single(\mQ)$ with the big-$M$ constraints given in \Cref{eq:bigms} does not cut away solution values of $\single(\mQ)$, if for any $i\in[m]$, the big-$M$ values fulfill the following conditions. For notational simplicity $j:\text{Condition}(j)$ denotes $j \in \{j \in [m]: \text{Condition}(j)\}$.
    \begin{align}
        \text{If } y_i=1 \text{ then} \qquad &
        M_{u_i} \ge \sum_{j: y_{j}=1 \land \tilde{Q}_{ij}^U \ge 0} C \tQ_{ij}^U - \sum_{j: y_{j}=-1 \land \tQ_{ij}^L \le 0} C \tQ_{ij}^L - 1 \qquad \land  \nonumber \\
        &M_{v_i} \ge \sum_{j: y_{j}=-1 \land \tQ_{ij}^U \ge 0} C \tQ_{ij}^U - \sum_{j: y_{j}=1 \land \tQ_{ij}^L \le 0} C \tQ_{ij}^L + 1. \nonumber \\
        \text{If } y_i=-1 \text{ then} \qquad 
        &M_{u_i} \ge \sum_{j: y_{j}=-1 \land \tQ_{ij}^U \ge 0} C \tQ_{ij}^U - \sum_{j: y_{j}=1 \land \tQ_{ij}^L \le 0} C \tQ_{ij}^L - 1 \qquad \land  \nonumber\\
        &M_{v_i} \ge \sum_{j: y_{j}=1 \land \tQ_{ij}^U \ge 0} C \tQ_{ij}^U - \sum_{j: y_{j}=-1 \land \tQ_{ij}^L \le 0} C \tQ_{ij}^L + 1. \nonumber
    \end{align}
\end{proposition}

To obtain the tightest formulation for $P(\mTheta)$ %
we set the big-$M$'s to equal the conditions.

Now, the remaining non-linearities come from the product terms $\alpha_i \tTheta_{ij}$. We approach this by first introducing new variables $Z_{ij}$ for all $i,j \in [m]\cup\{t\}$ and set $Z_{ij}=\alpha_j\tTheta_{ij}$. Then, we replace all %
$\alpha_j\tTheta_{ij}$ in \Cref{eq:kkt_stat} and in the objective in \Cref{eq:bilevel_svm} with $Z_{ij}$. This alone has not changed the fact that the problem is bilinear, only that the bilinear terms have now moved to the definition of $Z_{ij}$. However, we have access to lower and upper bounds on $\mtTheta_{ij}$. Thus, replacing $Z_{ij}=\alpha_j\tTheta_{ij}$ with linear constraints $Z_{ij} \le \alpha_j \tTheta^U_{ij}$ and $Z_{ij} \ge \alpha_j \tTheta^L_{ij}$ results in a relaxation to $\single(\mTheta)$. This resolved all non-linearities and we can write the following theorem. %

\begin{theorem}[MILP Formulation] \label{thm:milp}
Node $t$ is certifiably robust against adversary $\mathcal{A}$ if the optimal solution to the following MILP denoted by $P(\mTheta)$ is greater than zero
\vspace{-0.2cm}
\begin{align*}
    \min_{\rvalpha, \rvu, \rvv, \rvs, \rvt, \rmZ} \sign(\hat{p}_t) &\sum_{i=1}^{m} y_i Z_{ti} \quad \text{s.t.} \\
    \quad \forall i \in [m] \cup \{t\},j \in [m]: \,\, &Z_{ij} \leq \alpha_j \tTheta_{ij}^U, \,\, Z_{ij} \geq \alpha_j \tTheta_{ij}^L,  \nonumber \\
    \quad \forall i\in [m]: \,\, &\sum_{j=1}^{m} y_iy_j Z_{ij}-1-u_i+v_i = 0, %
    \nonumber \\ 
    &\alpha_i \ge 0, \,\, C-\alpha_i \ge 0, \,\, u_i \ge 0, \,\, v_i \ge 0, \nonumber \\
    &u_i \leq M_u s_i, \,\, \alpha_i \leq C(1-s_i), \,\, s_i \in \{0,1\}, \nonumber \\
    &v_i \leq M_v t_i, \,\, C-\alpha_i \leq C(1-t_i), \,\, t_i \in \{0,1\}.\nonumber 
\end{align*}
\vspace{-0.7cm}
\end{theorem}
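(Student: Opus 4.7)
The plan is to chain three reformulation steps, each of which either preserves the optimal value of $\upper(\mTheta)$ or produces a valid relaxation, so that the optimum of $\milp(\mTheta)$ lower bounds the worst-case attack value. First, by \Cref{prop:inner_prob}, the bilevel problem $\upper(\mTheta)$ can be replaced by the single-level KKT reformulation $\single(\mTheta)$ with the same optimal value, since the lower-level SVM dual $\dual(\mtTheta)$ is convex and satisfies Slater's condition. The task then reduces to turning $\single(\mTheta)$ into a MILP while carefully controlling how the optimal value can change.

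Second, I would linearize the complementary slackness conditions $u_i\alpha_i=0$ and $v_i(C-\alpha_i)=0$ via the binary-indicator big-$M$ constraints in \Cref{eq:bigms}. This reformulation is value-preserving provided the constants $M_{u_i},M_{v_i}$ upper bound any KKT-compatible $u_i,v_i$. Because the stationarity condition \Cref{eq:kkt_stat} expresses $u_i-v_i$ as an affine function of $\rvalpha$ and $\mtTheta$, and because $\rvalpha$ lies in the box $[0,C]^m$ while every $\mtTheta\in\mathcal{A}(\mTheta)$ satisfies element-wise bounds $\tTheta^L_{ij}\le\tTheta_{ij}\le\tTheta^U_{ij}$, explicit finite values of $M_{u_i},M_{v_i}$ (computed in \Cref{app_sec:bigm}) exist; these truncate only multiplier redundancy and never cut off a primal-optimal $(\rvalpha^*,\mtTheta^*)$.

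Third, I would resolve the remaining bilinear products $\alpha_j\tTheta_{ij}$ in both the attack objective and the stationarity equation by introducing auxiliary variables $Z_{ij}$ and substituting them wherever such a product occurs. Replacing the equality $Z_{ij}=\alpha_j\tTheta_{ij}$ with the linear inequalities $\alpha_j\tTheta^L_{ij}\le Z_{ij}\le \alpha_j\tTheta^U_{ij}$ yields a McCormick-style relaxation, because $\alpha_j\ge 0$ and the kernel bounds hold uniformly over $\mathcal{A}(\mTheta)$. Any feasible $(\rvalpha,\mtTheta)$ of $\single(\mTheta)$ extends to a feasible point of $\milp(\mTheta)$ by setting $Z_{ij}:=\alpha_j\tTheta_{ij}$, so $\milp(\mTheta)$ is a relaxation and its optimum lower bounds that of $\single(\mTheta)$.

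Chaining the three steps, the MILP value is a lower bound on the optimum of $\upper(\mTheta)$, which by construction equals the worst-case value of $\sign(\hat{p}_t)\sum_{i=1}^m y_i\alpha_i\tTheta_{ti}$ over all admissible poisonings $\mtG\in\mathcal{A}(\mG)$. A strictly positive MILP optimum therefore rules out any prediction-flipping perturbation, giving the certificate. The main obstacle I expect is the second step: the dual variables $u_i,v_i$ are not uniquely determined by the primal solution, so one must argue that the derived $M_{u_i},M_{v_i}$ are simultaneously valid across every $\mtTheta\in\mathcal{A}(\mTheta)$ and that the big-$M$ truncation removes only multiplier redundancy. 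The McCormick relaxation in the third step is straightforward and already yields soundness of the certificate; tightness then hinges on the quality of the bounds $\tTheta^L_{ij},\tTheta^U_{ij}$ derived in \Cref{sec:bounds}.
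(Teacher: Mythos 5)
Your proposal is correct and follows essentially the same three-step route as the paper: the KKT/Slater single-level reformulation via \Cref{prop:inner_prob}, the big-$M$ linearization of complementary slackness justified by the boundedness of $\sum_j y_iy_j\alpha_j\tTheta_{ij}$ (so that only redundant multiplier values are truncated), and the one-sided McCormick-style relaxation $\alpha_j\tTheta^L_{ij}\le Z_{ij}\le\alpha_j\tTheta^U_{ij}$ of the bilinear terms, yielding a sound lower bound whose positivity certifies robustness. You also correctly flag the one genuinely delicate point, namely the simultaneous validity of $M_{u_i},M_{v_i}$ over all $\mtTheta\in\mathcal{A}(\mTheta)$, which is exactly what the paper's \Cref{app_sec:bigm} resolves.
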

$\milp(\mTheta)$ includes backdoor attacks through the bounds $\tTheta_{tj}^L$ and $\tTheta_{tj}^U$ for all $j \in [m]$, which for an adversary $\mathcal{A}$ who can manipulate $t$ will be set different. On computational aspects, $\milp(\mTheta)$ involves $(m+1)^2+5m$ variables out of which $2m$ are binary. %
Thus, the number of binary variables, which 
mainly defines how long it takes MILP-solvers to solve a problem, scales with the number of labeled samples.%

\subsection{QPCert for GNNs through their %
NTKs}
\label{ss:threat_model}\label{sec:bounds}

To certify a specific GNN using our QPCert framework, we need to derive element-wise lower and upper bounds valid for all NTK matrices $\mtTheta \in \mathcal{A}(\mTheta)$ of the corresponding network, that are constructable by the adversary. %
As a first step, we introduce the NTKs for the GNNs of interest before deriving the bounds.
While \citet{sabanayagam2023analysis} provides the NTKs for GCN and SGC with and without skip connections, we derive the NTKs for (A)PPNP, GIN and GraphSAGE in \Cref{app:appnp_ntk}.
For clarity, we present the NTKs for $f_\theta(\rmG)$ with hidden layers $L=1$ here and the general case for any $L$ in the appendix.
For $L=1$, the NTKs generalize to the form ${\mTheta} = \rmM (\mathbf\Sigma \odot \dot{\rmE})\rmM^T + \rmM \rmE \rmM^T$ for all the networks, with the definitions of $\rmM, \mathbf\Sigma$, $\rmE$ and $\dot{\rmE}$ detailed in \Cref{tab:ntk_one_layer}. Thus, it is important to note that the effect of the feature matrix $\rmX$, which the adversary can manipulate, enters into the NTK only as a product $\rmX\rmX^T$, making this the quantity of interest when bounding the NTK matrix.%

Focusing on $p=\{1,2,\infty\}$ %
in %
$\mathcal{B}_p(\rvx)$ and $\widetilde\rmX \in \mathcal{A}(\rmX)$, we derive the bounds for $\widetilde\rmX \widetilde\rmX^T$ by considering
$\mathcal{U}:=\{i: i \not\in \mathcal{V}_V\}$ as the set of all unverified nodes that the adversary can potentially control. %
We present the \newpage
\begin{wraptable}[15]{r}{0.58\textwidth}
\vspace{-0.08cm}
\tabcolsep=0.08cm
    \centering
    \caption{The NTKs of GNNs have the general form ${\mTheta} = \rmM (\mathbf\Sigma \odot \dot{\rmE})\rmM^T + \rmM \rmE \rmM^T$ for $L\,=\,1$. The definitions of  $\rmM, \mathbf\Sigma$, $\rmE$ and $\dot{\rmE}$ are given in the table. $\rmZ=\rmS+\rmI_n$ and $\rmT=\lp(1+\epsilon)\rmI_n+\rmA\rp \rmX$. $\kappa_0(z) = \frac{1}{\pi}\lp \pi - \text{arccos}\lp z \rp \rp $ and $\kappa_1(z) = \frac{1}{\pi} \lp z \lp \pi - \text{arccos}\lp z \rp  \rp + \sqrt{1 - z^2} \rp$. }
    \vspace{-0.00cm}
    \renewcommand{\arraystretch}{1.5}
    \resizebox{1\linewidth}{!}{
    \begin{tabular}{ccccc}
    \toprule
    \textbf{GNN} & $\rmM$ & $\mathbf\Sigma$ & $ E_{ij}$ & $ \dot{E}_{ij}$ \\
    \midrule
    \textbf{GCN} & $\rmS$ & $\rmS \rmX \rmX^T \rmS^T$ & $\sqrt{{\Sigma}_{ii} {\Sigma}_{jj}} \kappa_1 \lp \frac{{\Sigma}_{ij}}{\sqrt{{\Sigma}_{ii} {\Sigma}_{jj}}} \rp$  & $\kappa_0 \lp \frac{{\Sigma}_{ij}}{\sqrt{{\Sigma}_{ii} {\Sigma}_{jj}}} \rp$ \\
    \textbf{SGC} & $\rmS$ & $\rmS \rmX \rmX^T \rmS^T$ & $\Sigma_{ij}$ & ${1}$  \\
    \textbf{GraphSAGE} & $\rmZ$ & $\rmZ \rmX \rmX^T \rmZ^T$ & $\sqrt{{\Sigma}_{ii} {\Sigma}_{jj}} \kappa_1 \lp \frac{{\Sigma}_{ij}}{\sqrt{{\Sigma}_{ii} {\Sigma}_{jj}}} \rp$  & $\kappa_0 \lp \frac{{\Sigma}_{ij}}{\sqrt{{\Sigma}_{ii} {\Sigma}_{jj}}} \rp$ \\
    \textbf{(A)PPNP} & $\rmP$ & $\rmX \rmX^T + \mathbf{1}_{n \times n} $ & $\sqrt{{\Sigma}_{ii} {\Sigma}_{jj}} \kappa_1 \lp \frac{{\Sigma}_{ij}}{\sqrt{{\Sigma}_{ii} {\Sigma}_{jj}}} \rp$  & $\kappa_0 \lp \frac{{\Sigma}_{ij}}{\sqrt{{\Sigma}_{ii} {\Sigma}_{jj}}} \rp$  \\
    \textbf{GIN} & $\rmI_n$ & $\rmT\rmT^T + \mathbf{1}_{n \times n} $ & $\sqrt{{\Sigma}_{ii} {\Sigma}_{jj}} \kappa_1 \lp \frac{{\Sigma}_{ij}}{\sqrt{{\Sigma}_{ii} {\Sigma}_{jj}}} \rp$  & $\kappa_0 \lp \frac{{\Sigma}_{ij}}{\sqrt{{\Sigma}_{ii} {\Sigma}_{jj}}} \rp$  \\
    \textbf{MLP} & $\rmI_n$ & $ \rmX \rmX^T + \mathbf{1}_{n \times n}$ &$\sqrt{{\Sigma}_{ii} {\Sigma}_{jj}} \kappa_1 \lp \frac{{\Sigma}_{ij}}{\sqrt{{\Sigma}_{ii} {\Sigma}_{jj}}} \rp$  & $\kappa_0 \lp \frac{{\Sigma}_{ij}}{\sqrt{{\Sigma}_{ii} {\Sigma}_{jj}}} \rp$  \\
    \bottomrule
    \end{tabular}
    }
     \vspace{-0.35cm}
    \label{tab:ntk_one_layer}
\end{wraptable}
worst-case element-wise lower and upper bounds for $\widetilde{\rmX} \widetilde{\rmX}^T = \rmX\rmX^T + \Delta$ %
in terms of $\Delta$ in \Cref{lm:pinf_delta}, and \Cref{lm:p2_delta,lm:p1_delta} in \Cref{app:ntk_bounds}. %

\begin{lemma}[Bounds for $\Delta$, $p=\infty$]
Given $\mathcal{B}_\infty(\rvx)$ and any $\widetilde\rmX \in \mathcal{A}(\rmX)$, then $\widetilde\rmX \widetilde\rmX^T = \rmX \rmX^T + \Delta$ where the worst-case bounds for $\Delta$, $\Delta_{ij}^L \leq \Delta_{ij} \leq \Delta_{ij}^U \,\,\forall i, j \in [n]$ are %
\begin{align}
  \Delta_{ij}^L =& -\delta \lVert \rmX_j \rVert_1 \mathbbm{1}[i \in \mathcal{U}] - \delta \lVert \rmX_i \rVert_1 \mathbbm{1}[j \in \mathcal{U}] \nonumber \\
  &- \delta^2 d \mathbbm{1}[i \in \mathcal{U} \land j \in \mathcal{U} \land i \ne j], \nonumber \\
   \Delta_{ij}^U =&\ \delta \lVert \rmX_j \rVert_1 \mathbbm{1}[i \in \mathcal{U}] + \delta \lVert \rmX_i \rVert_1 \mathbbm{1}[j \in \mathcal{U}]  \nonumber\\
   &+ \delta^2 d \mathbbm{1}[i \in \mathcal{U} \land j \in \mathcal{U}]. \nonumber%
\end{align}
\label{lm:pinf_delta}
\vspace{-0.6cm}
\end{lemma}

The NTK bounds $\tTheta_{ij}^L$ and $\tTheta_{ij}^U$, are now derived by simply propagating the bounds for $\widetilde\rmX \widetilde\rmX^T$ through the NTK formulation since the multipliers and addends are positive. To elaborate, we compute $\tTheta_{ij}^L$ by substituting $\rmX\rmX^T = \rmX\rmX^T + \Delta^L$, and likewise for $\tTheta_{ij}^U$. %
Only bounding $E_{ij}$ and $\dot{E}_{ij}$ needs special care %
as discussed in \Cref{app_sec:bound_ntk}. Further, we prove that the bounds are tight in the worst-case in \Cref{app_sec:tightness_ntk}.

\begin{theorem}[NTK bounds are tight]
The worst-case NTK bounds are tight for GNNs with linear activations such as SGC and (A)PPNP, 
and an MLP with $\sigma(z)=z$ for $p=\{1,2,\infty\}$ in $\mathcal{B}_p(\rvx)$.
\label{th:tight_bounds}
\end{theorem}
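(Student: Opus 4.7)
The plan is to exploit that, under a linear activation $\sigma(z)=z$, the NTK reduces to an \emph{affine} function of $\widetilde{\rmX}\widetilde{\rmX}^T$ with \emph{nonnegative} coefficients, so that tightness of the NTK bounds reduces to tightness of the $\Delta$ bounds of Lemma~\ref{lm:pinf_delta} (and its $p=2$ counterpart), which is then established by exhibiting a worst-case pair $(\rmX,\widetilde{\rmX})$ that saturates them.

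\emph{Step 1 (linearise the NTK).} I would first specialise the NTK formulas in Table~\ref{tab:ntk_one_layer} to $\sigma(z)=z$. Under a linear activation the arc-cosine kernels collapse to $\dot E_{ij}=1$ and $E_{ij}=\Sigma_{ij}$---the SGC row, now inherited by (A)PPNP and MLP once their nonlinearities are removed. Hence for $L=1$, $\mTheta=2\rmM\,\mathbf{\Sigma}\,\rmM^T$ with $\mathbf{\Sigma}$ itself affine in $\rmX\rmX^T$. A short induction on depth $L$ extends this to $\tTheta_{ij}-\Theta_{ij}=\sum_{r,s}C^{(ij)}_{rs}\,\Delta_{rs}$ with $C^{(ij)}_{rs}\ge 0$, since the propagation matrices $\rmS$, $\rmP$ and $\rmI_n$ all have nonnegative entries under the standard GCN/APPNP/MLP conventions; the additive $\mathbf{1}_{n\times n}$ term in $\mathbf{\Sigma}$ for (A)PPNP/MLP contributes only a constant offset that cancels in $\tTheta_{ij}-\Theta_{ij}$.

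\emph{Step 2 (reduce and saturate).} Because $C^{(ij)}_{rs}\ge 0$, the NTK upper bound produced by substituting $\Delta^U_{rs}$ for each $\Delta_{rs}$ is tight iff some $\widetilde{\rmX}\in\gA(\rmX)$ simultaneously attains $\Delta_{rs}=\Delta^U_{rs}$ for every $(r,s)$ in the support of $C^{(ij)}_{rs}$. Since the $\Delta$-bounds depend on $\rmX$ only through $\|\rmX_i\|_1,\|\rmX_j\|_1,d,\delta$ (respectively the Euclidean row norms for $p=2$), it suffices to exhibit a worst-case $\rmX$ together with a matching adversarial choice. For $p=\infty$ I would take all rows indexed in the support of $C^{(ij)}_{rs}$ to share a common sign pattern $\vs\in\{\pm 1\}^d$ and set $\rvepsilon_\ell=\delta\vs$ for every unverified $\ell\in\gU$. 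The expansion $\widetilde{\rmX}_r\widetilde{\rmX}_s^T-\rmX_r\rmX_s^T=\rmX_r\rvepsilon_s^T+\rvepsilon_r\rmX_s^T+\rvepsilon_r\rvepsilon_s^T$ then produces precisely $\delta\|\rmX_s\|_1+\delta\|\rmX_r\|_1+\delta^2 d$, because the common sign alignment yields $X_{rk}(\rvepsilon_s)_k=\delta|X_{rk}|$ and $(\rvepsilon_r)_k(\rvepsilon_s)_k=\delta^2$ coordinate-wise. For $p=2$ the analogous witness $\rvepsilon_\ell=\delta\rmX_\ell/\|\rmX_\ell\|_2$ achieves the corresponding bounds via Cauchy--Schwarz equality. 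Negating $\rvepsilon$ yields the lower bounds, and the diagonal $i=j$ case explains the $\mathbbm{1}[i\ne j]$ indicator in $\Delta^L$ through $\|\rvepsilon_i\|_2^2\ge 0$.

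\emph{Main obstacle.} The delicate point is that a single perturbation must saturate \emph{all} entries $(r,s)$ in the support of $C^{(ij)}_{rs}$ at once: the common-sign construction works only when the sign patterns of every involved row can be aligned, which is automatic in the worst-case $\rmX$ but not for arbitrary $\rmX$ (e.g., for $\rmX_r=(1,0),\rmX_s=(-1,0)$ one can verify $\Delta_{rs}<\Delta^U_{rs}$). Hence the right formalisation is tightness in the \emph{worst-case} sense---any uniform formula strictly smaller than $\tTheta^U_{ij}$ would be violated by the constructed example. Closing the argument cleanly will require a careful treatment of the $p=2$ case (where the saturating directions are vectors rather than sign patterns), the diagonal entries, and the offset cancellation for (A)PPNP/MLP; these are the main technical tasks but become routine once Step~2 is in place.
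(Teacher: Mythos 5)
Your proposal is correct and follows essentially the same route as the paper: restrict to linear activations so the NTK becomes an affine combination of the entries of $\widetilde{\rmX}\widetilde{\rmX}^T$ with nonnegative propagation coefficients, then reduce tightness to saturating the $\Delta$ bounds via the equality conditions of H\"older's inequality (sign-aligned rows with $\pm\delta\mathbf{1}_d$ perturbations for $p=\infty$, linearly dependent rows with $\pm\delta\rmX_j/\lVert\rmX_j\rVert_2$ for $p=2$), treating diagonal entries and the single- versus multiple-adversary cases separately. Your explicit remark that a single perturbation must saturate all relevant $(r,s)$ entries simultaneously—hence tightness only in the worst case over $\rmX$—is exactly the (somewhat implicit) content of the paper's constructed witnesses.
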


\section{Experimental Results}
\label{sec:results}
We present $(i)$ the effectiveness of QPCert in certifying different GNNs using their corresponding NTKs against node feature poisoning and backdoor attacks; $(ii)$ insights into the role of graph data in worst-case robustness of GNNs, specifically the importance of graph information and its connectivity; 
$(iii)$ a study of the impact of different architectural components in GNNs on their provable robustness. 
The code base and datasets to reproduce the experimental results can be found at \url{https://github.com/saper0/qpcert}.

\textbf{Dataset.}
We use the real-world graph dataset \emph{Cora-ML} \citep{bojchevski2018deep}, where we generate continuous 384-dim. embeddings of the abstracts with a modern sentence transformer\footnote{all-MiniLM-L6-v2 from \url{https://huggingface.co/sentence-transformers/all-MiniLM-L6-v2}} similar to \citet{fuchsgruber2024energy}. Furthermore, for binary classification, we use Cora-ML and another 
real-world graph WikiCS \citep{mernyei2022wikics} and extract the subgraphs defined by the two largest classes. We call the resulting datasets \emph{Cora-MLb} and \emph{WikiCSb}, respectively.
Lastly, we use graphs generated from Contextual Stochastic Block Models (CSBM) \citep{deshpande2018contextual} for controlled experiments on graph parameters. We give dataset statistics and information on the random graph generation scheme in \ref{app_sec:datasets}. 
For Cora-MLb and WikiCSb, we choose $10$ nodes per class for training, leaving $1215$ and $4640$ unlabeled nodes, respectively. For Cora-ML, we choose $20$ training nodes per class resulting in $2925$ unlabeled nodes. From the CSBM, we sample graphs with $200$ nodes and choose $40$ per class for training, leaving $120$ unlabeled nodes. 
All results are averaged over $5$ seeds (Cora-ML: $3$ seeds) and reported with standard deviation. 
We do not need a separate validation set, as we perform $4$-fold cross-validation (CV) for hyperparameter tuning.

\textbf{GNNs and Attack.}
We evaluate GCN, SGC, (A)PPNP, GIN, GraphSAGE, MLP, and the skip connection variants GCN Skip-$\alpha$ and GCN Skip-PC (see \Cref{sec_app:architecture_def}). All results concern the infinite-width limit and thus, are obtained through training an SVM with the corresponding GNN's NTK and, if applicable, applying QPCert using Gurobi to solve the MILP from \Cref{thm:milp}. 
We fix the hidden layers to $L=1$, and the results for $L=\{2,4\}$ are provided in \Cref{exp:app_sparsity}. 
For CSBMs we fix $C=0.01$ for comparability between experiments and models in the main section. We find that changing $C$ has little effect on the accuracy but can strongly affect the robustness of different architectures. Other parameters on CSBM and all parameters on real-world datasets are set using 4-fold Cross Validation (CV) (see \Cref{app_sec:architectures} for details). 
The SVM's quadratic dual problem is solved using QPLayer \citep{qplayer}, a differentiable quadratic programming solver. Thus, to evaluate tightness regarding graph poisoning, we use APGD \citep{croce2020reliable},
with their reported hyperparameters as attack, 
but differentiate through the learning process using two different strategies: $(i)$ QPLayer, and $(ii)$ the surrogate model proposed in MetaAttack \citep{zugner2019metaattack}. To evaluate backdoor tightness, we use the clean-label backdoor attack from \citet{xing2024cleanlabel}, and the above APGD attack, but at test time additionally attack the target node.

\begin{figure*}[t!]
    \centering
    \subcaptionbox{Cora-MLb: $p_{adv}=1$
	\label{fig:coramlbin_poison_labeled_all}}{
    \includegraphics[width=0.235\textwidth]{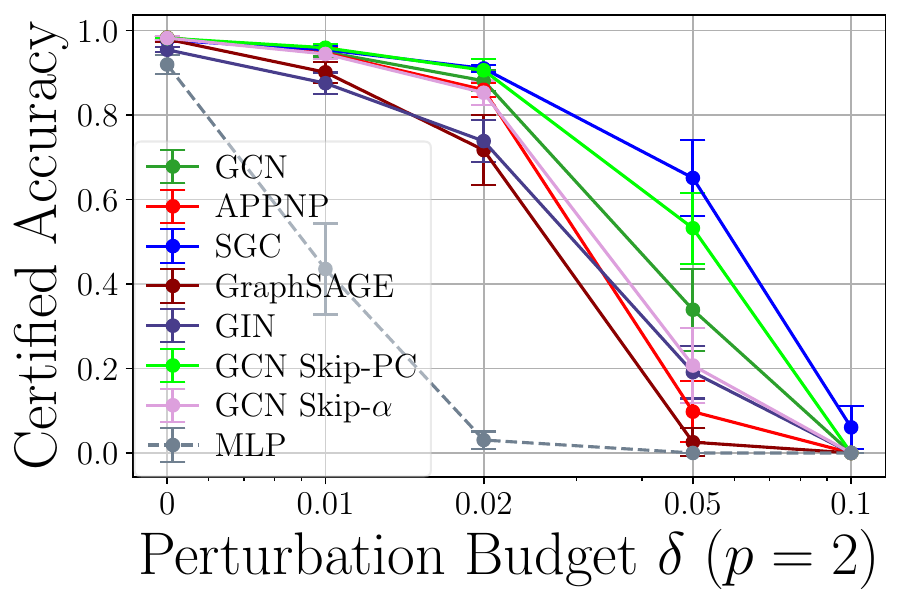}}
    \hfill
    \subcaptionbox{WikiCSb: $p_{adv}=1$
	\label{fig:wikicsb_poison_labeled_all}}{
    \includegraphics[width=0.235\textwidth]{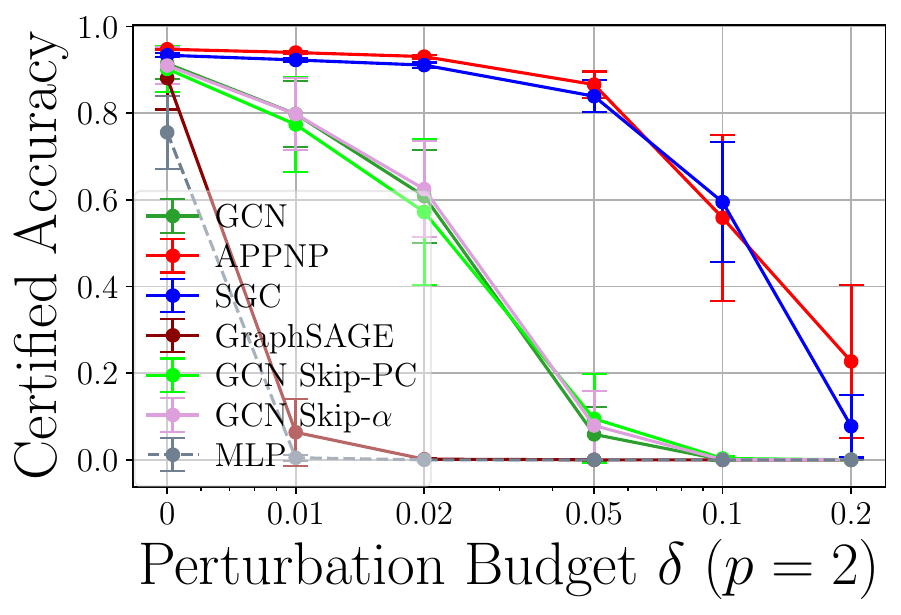}}
    \hfill
    \subcaptionbox{Cora-MLb: $p_{adv}=0.1$
	\label{fig:coramlbin_poison_labeled_n_adv}}{
    \includegraphics[width=0.235\textwidth]{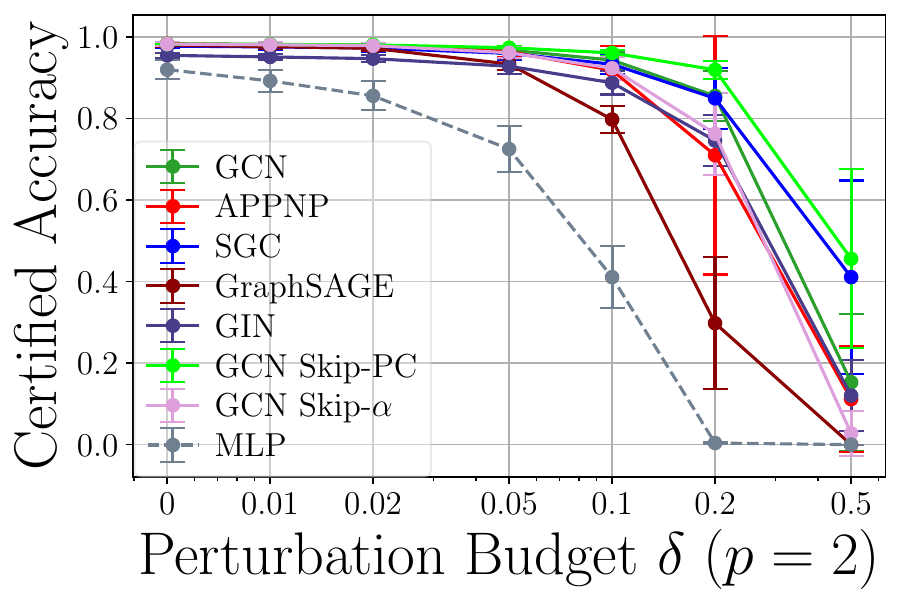}}
    \hfill
    \subcaptionbox{WikiCSb: $p_{adv}=0.1$
	\label{fig:wikicsb_poison_labeled_n_adv}}{
    \includegraphics[width=0.235\textwidth]{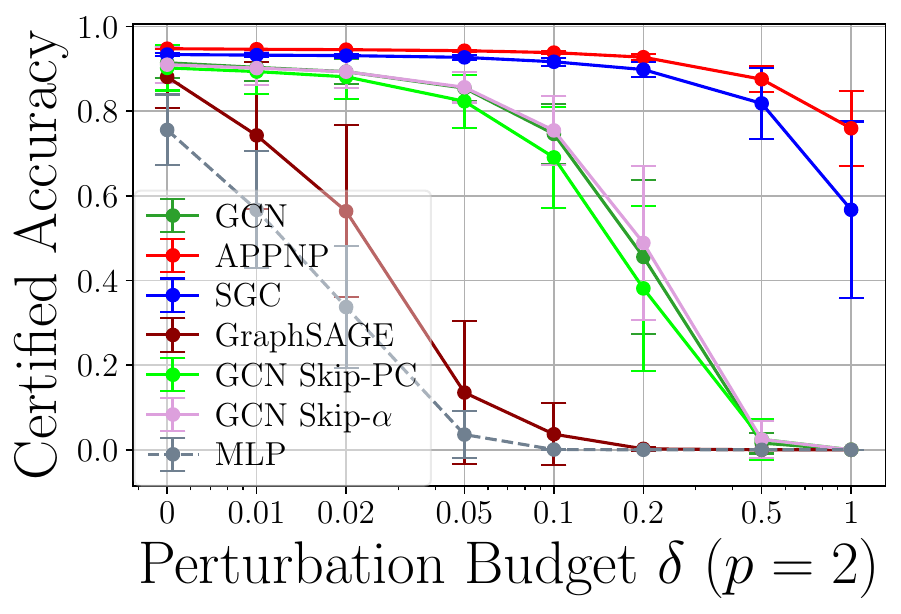}}
    \caption{Poison Labeled $(PL)$ setting %
    for Cora-MLb and WikiCSb. (a)-(b): QPCert effectively provides non-trivial guarantees. (a)-(d): All GNNs show higher certified accuracy than an MLP.}
    \label{fig:gnn_cert_linf}
\end{figure*}

\textbf{Adversarial Evaluation Settings.} 
We categorize four settings of interest. \textbf{(1)} \emph{Poison Labeled (PL)}: The adversary $\mathcal{A}$ can potentially poison the labeled data $\mathcal{V}_L$. \textbf{(2)} \emph{Poison Unlabeled (PU):} Especially interesting in a semi-supervised setting when $\mathcal{A}$ can poison the unlabeled data $\mathcal{V}_U$, while the labeled data, usually representing a small curated set of high quality, is known to be clean \citep{shejwalkar2023perils}. \textbf{(3)} \emph{Backdoor Labeled (BL):} Like $(1)$ but the test node is also controlled by $\mathcal{A}$. \textbf{(4)} \emph{Backdoor Unlabeled (BU):} Like $(2)$ but again, the test node is controlled by $\mathcal{A}$. Settings $(1)$ and $(2)$ are evaluated transductively, i.e. on the unlabeled nodes $\mathcal{V}_U$ already known at training time.  
Note that this means some test nodes may be corrupted for $(2)$. For the backdoor attack settings $(3)$ and $(4)$ the test node is removed from the graph during training and added inductively at test time. The size of the untrusted potential adversarial node set $\mathcal{U}$ is set in percentage $p_{adv}\in\{0.01, 0.02, 0.05, 0.1, 0.2, 0.5, 1\}$ %
of the scenario-dependent attackable node set and resampled for each seed. We consider node feature perturbations $\mathcal{B}_p(\rvx)$ with $p=\{1,2,\infty\}$ and provide all results for $p=1$ in \Cref{app_exp:csbm_l1,app_exp:coramlb_l1}.
In the case of CSBM, $\delta$ is set in percentage of $2\rvmu$ of the underlying distribution, %
and for real data to absolute values. %
Our main evaluation metric is \textit{certified accuracy}, referring to the percentage of correctly classified nodes without attack that are provably robust against data poisoning / backdoor attacks of the assumed adversary $\mathcal{A}$. %
\rebuttal{As there are no other white-box poisoning certificates applicable to neural networks, there are no white-box certificate baselines.}
Furthermore, we note that we are the first work to study certificates for clean-label attacks on node features in graphs. In particular, all current black-box certificates do not apply to graph learning or $\ell_p$ perturbation models (see \Cref{sec:related_work}). Thus, there is \rebuttal{no direct} baseline prior work. \rebuttal{However, we adapt the split-and-majority vote principle from \citet{levine2021deep} for black-box certification of node-feature perturbations in graphs and provide a comparison in \Cref{app_sec:dpa}.} %
Furthermore, we provide a comparison with two common poisoning defenses in \Cref{sec_app:comparison_baselines}.

\label{sec:results_evalqpcert}

\begin{figure*}[t!]
    \centering
    \subcaptionbox{Cora-MLb: $PU$\label{fig:coramlbin_pu}}{\includegraphics[width=0.24\linewidth]{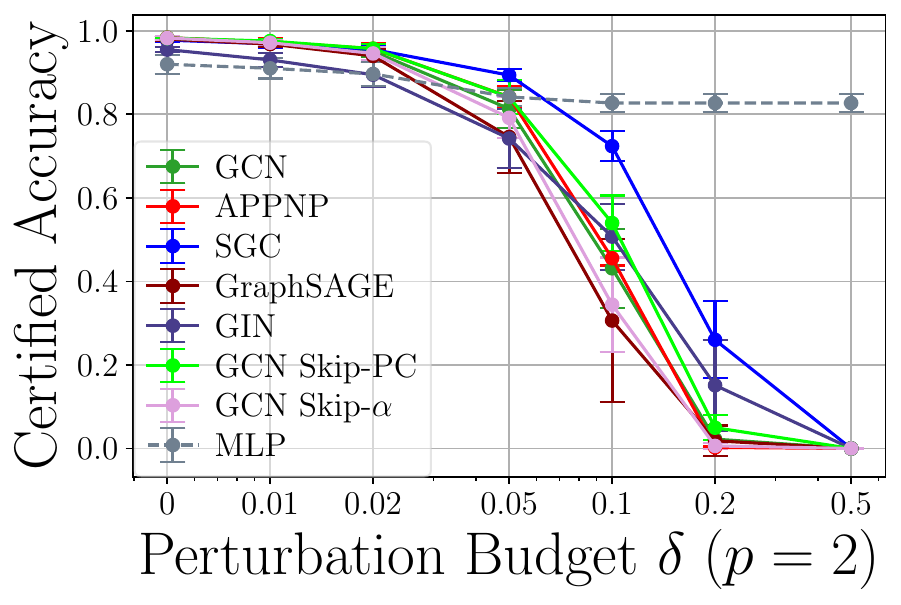}}
    \hfill
    \subcaptionbox{Cora-MLb: $BU$\label{fig:coramlbin_bu}}{\includegraphics[width=0.24\linewidth]{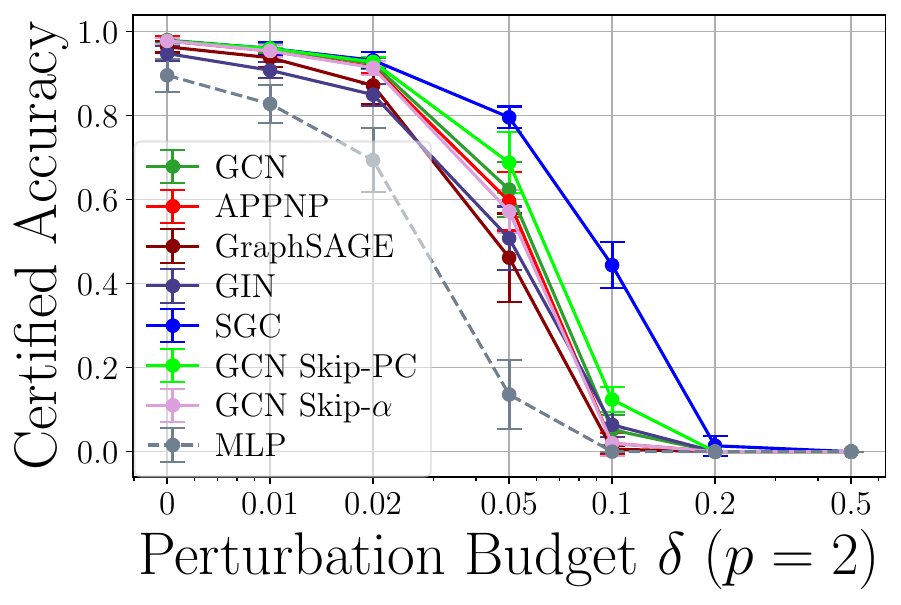}}
    \hfill
    \subcaptionbox{CSBM: $PU$\label{fig:csbm_pu}}{\includegraphics[width=0.24\linewidth]{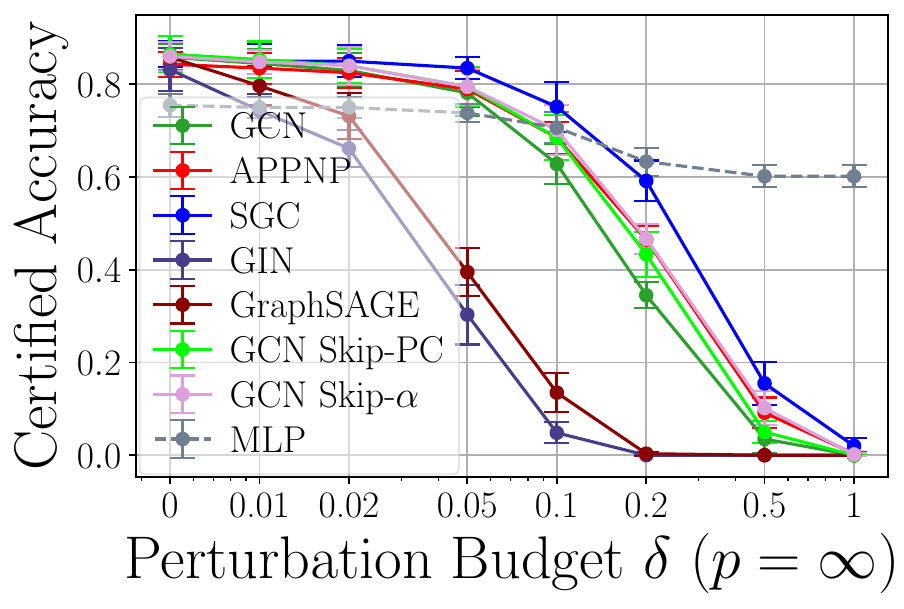}} 
    \hfill
    \subcaptionbox{CSBM: $BU$\label{fig:csbm_bu}}
    {\includegraphics[width=0.24\linewidth]{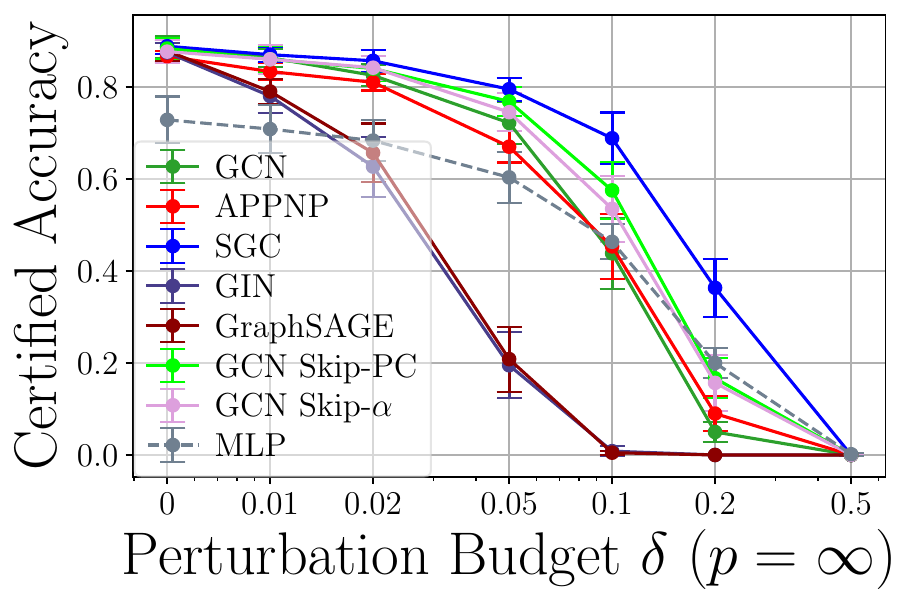}}
    \caption{%
    Certifiable robustness in the Poisoning Unlabeled $(PU)$ and Backdoor Unlabeled $(BU)$ setting with $p_{adv}=0.1$ for Cora-MLb and $p_{adv}=0.2$ for CSBM. We refer to \Cref{app_sec:wiki_csb_exp} for WikiCSb.}
    \label{fig:gnn_cert_linf}
    \vspace{-0.1cm}
\end{figure*}
\begin{figure*}[t]
    \centering
    \subcaptionbox{SGC (WikiCSb:$PU$)\label{fig:wikicsb_heatmap_sgc_main}}{\includegraphics[width=0.20\linewidth]{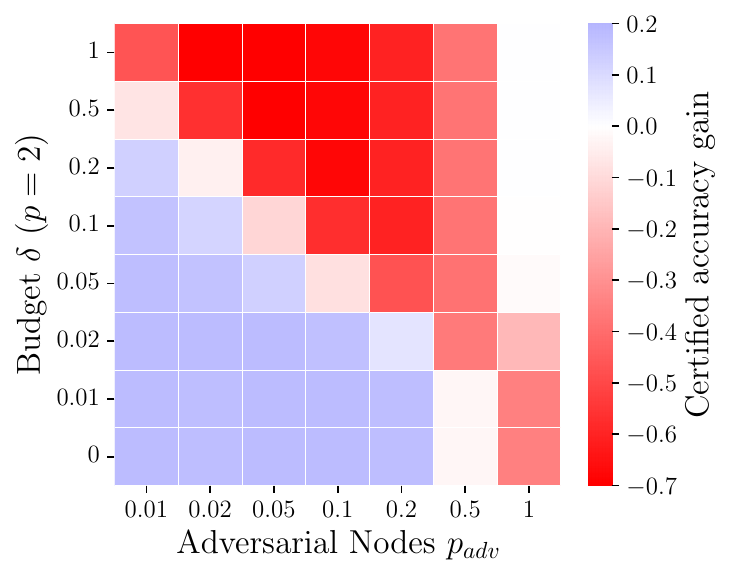}}
    \hfill
    \subcaptionbox{CSBM: $PU$\label{fig:sparsity}}{\includegraphics[width=0.23\linewidth]{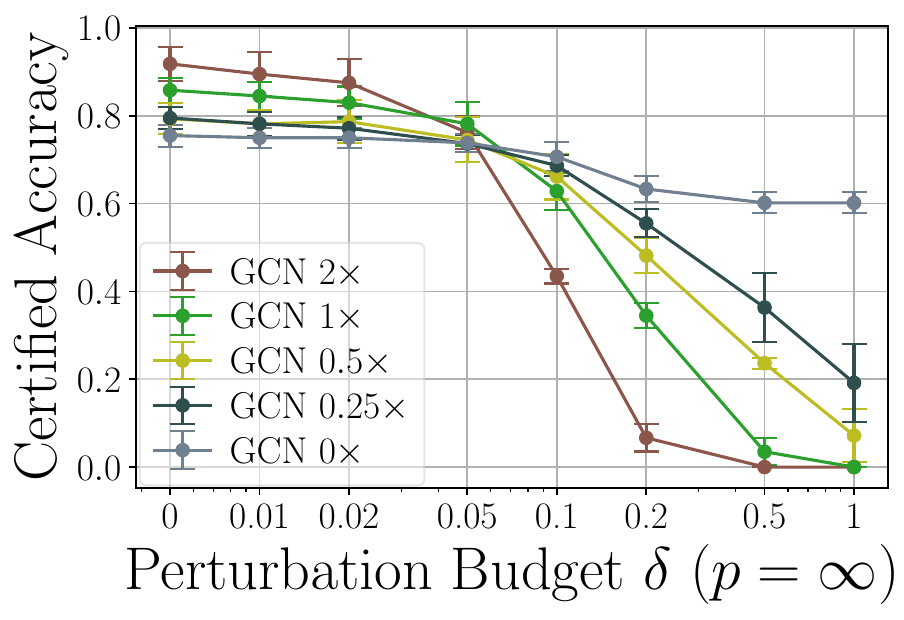}}
    \hfill
    \subcaptionbox{Cora-MLb: $PU$\label{fig:appnp_pu}}{\includegraphics[width=0.23\linewidth]{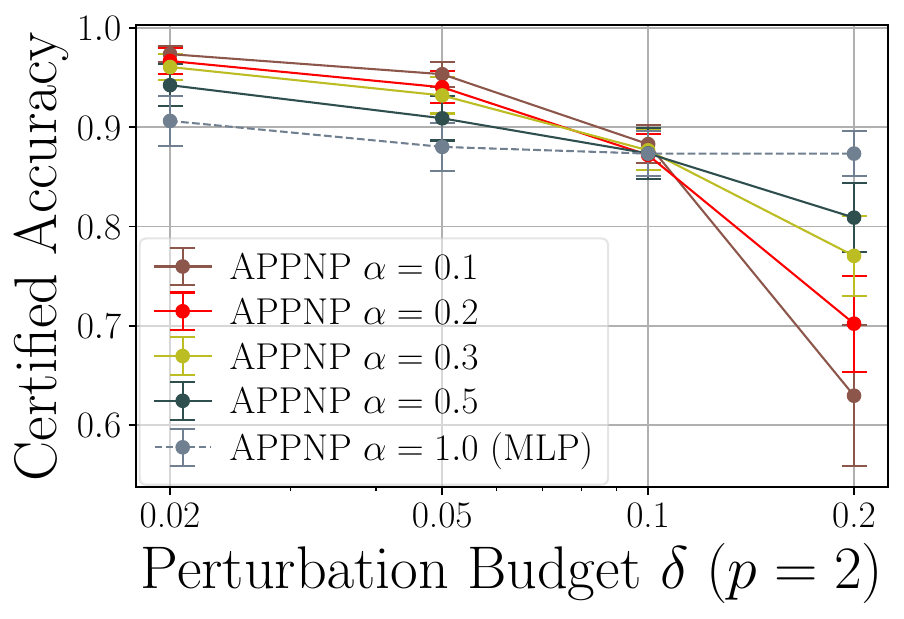}}
    \hfill
    \subcaptionbox{CSBM: $PU$\label{fig:attack_main}}{\includegraphics[width=0.23\linewidth]{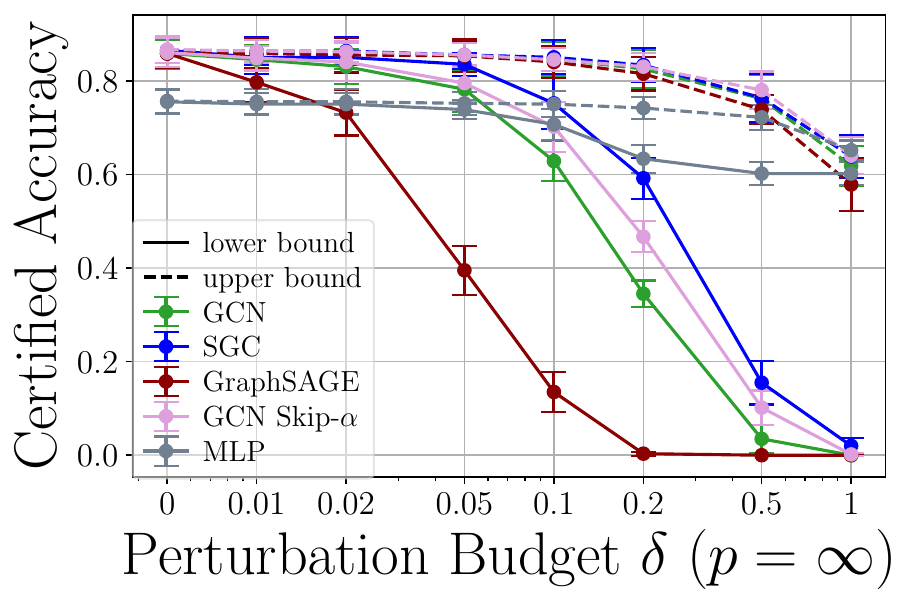}}
    \caption{
    Poison Unlabeled for WikiCSb ($p_{adv}\!=\!0.02$), Cora-MLb ($p_{adv}\!=\!0.1$), CSBM ($p_{adv}\!=\!0.2$). (a) Certified accuracy gain: difference of certified acc. %
    to an MLP. (b) Graph connectivity analysis where $c\times$ is $cp$ and $cq$ in CSBM model. %
    (c) APPNP analysis based on $\alpha$. (d) Tightness of QPCert (differentiation through the learning process using QPLayer). %
    }
    \vspace{-0.1cm}
    \label{fig:graph_info_importance_linf}
\end{figure*}
\textbf{Non-Trivial Certificates and On the Importance of Graph Information.} 
We evaluate the effectiveness of our certificates in providing non-trivial robustness guarantees. Consider the $PL$ setting where $\mathcal{A}$ can poison \textit{all} labeled nodes ($p_{adv}=1$) for which a trivial certificate would return $0$\% certified accuracy. \Cref{fig:coramlbin_poison_labeled_all,fig:wikicsb_poison_labeled_all} prove that QPCert returns non-trivial guarantees. Further, they highlight an interesting insight: All GNNs have \textit{significantly better} worst-case robustness behavior than the certified accuracy of an 
MLP. Thus, leveraging the graph connectivity, significantly improves their certified accuracy, even when faced with perturbations on all labeled nodes. In \Cref{fig:coramlbin_poison_labeled_n_adv,fig:wikicsb_poison_labeled_n_adv} we show that this observation stays consistent for other $p_{adv}$. We observe similar results for Cora-ML (\Cref{app_sec:cora_ml_exp}) and CSBM (\Cref{app_sec:csbm_results_qpcerteval}), establishing that this behavior is \textit{not} dataset-specific.

In \Cref{fig:gnn_cert_linf}, we evaluate the poison unlabeled $(PU)$ and backdoor unlabeled $(BU)$ settings. %
When poisoning only unlabeled data ($PU$), the MLP's training process is not affected by the adversary, as the MLP does not access the unlabeled nodes during training. Thus, this provides a good baseline for our certificate to study GNNs. 
Again, QPCert provides non-trivial certified robustness beyond the MLP baseline. Close to all GNNs show certified accuracy exceeding the one of an MLP for small to intermediate perturbation budgets ($\delta \le 0.1$) for Cora-MLb (\Cref{fig:coramlbin_pu}) and CSBM (\Cref{fig:csbm_pu}), with a similar picture for WikiCSb (\Cref{app_sec:wiki_csb_exp}) and Cora-ML (\Cref{app_sec:cora_ml_exp}). Note that the small certified accuracy drop for an MLP stems from the transductive learning setting, in which the MLP is confronted with the potentially perturbed unlabeled training nodes at test time. For WikiCSb, \Cref{fig:wikicsb_heatmap_sgc_main} elucidates in detail 
the certified accuracy gain of an SGC to an MLP and for other GNNs, see \Cref{app_sec:wiki_csb_exp} (and \Cref{app_sec:coramlbin_evalqpcert} for Cora-MLb,  \Cref{app_sec:csbm_results_qpcerteval} for CSBM). %
Concerning backdoor attacks on unlabeled nodes \Cref{fig:coramlbin_bu,fig:csbm_bu} show that most GNNs show significantly better certified robustness than an MLP, even so MLP training is not affected by $\mathcal{A}$. We observe similar results for a $BL$ setting for Cora-MLb (\Cref{app_sec:coramlbin_evalqpcert}), WikiCSb (\Cref{app_sec:wiki_csb_exp}), and CSBM (\Cref{app_sec:csbm_results_qpcerteval}).     %
These results show that leveraging graph information can %
\textit{significantly improve} certified accuracy across all attack settings. Further, across all evaluation settings and datasets, we find GIN and GraphSAGE to provide the lowest certified accuracies of all GNNs; their most important design difference is choosing a sum-aggregation scheme. %
We note that a comparison across architecture can be affected by the certificate's tightness and we hypothesize that the high worst-case robustness of SGC compared to other models may be due to the certificate being tighter (\Cref{th:tight_bounds}). However, this still allows us to derive architectural insights for specific GNNs.

\textbf{On Graph Connectivity and Architectural Insights.} We exemplify study directions enabled through our certification framework. By leveraging CSBMs, we study the effect of graph connectivity in the poisoning unlabeled setting in \Cref{fig:sparsity} for a GCN. Interestingly, we observe an inflection point at a perturbation strength $\delta=0.05$, where higher connectivity leads to higher certified accuracy against small perturbations, whereas higher connectivity significantly worsens certified accuracy for strong perturbations. 
These trends are consistent across various architectures and attack settings (\Cref{exp:app_sparsity}).
Secondly, we study the effect of different $\alpha$ choices in APPNP %
in a poison unlabeled setting in \Cref{fig:appnp_pu} on Cora-MLb. Interestingly, it also shows an inflection point at a perturbation strength ($\delta = 0.1$), where higher $\alpha$ increases the provable robustness for larger $\delta$, whereas worsens the provable robustness for smaller $\delta$. Notably, this phenomenon is unique to the $PU$ setting (see \Cref{app_sec:coramlbin_appnp}) and is similarly observed in CSBM as shown in \Cref{exp:app_sparsity}. This setup is different to the connectivity analysis from \Cref{fig:sparsity}, as the $\alpha$ in APPNP results in a weighted adjacency matrix rather than %
increasing or decreasing the number of edges in the graph. 
We compare different normalization choices for $\rmS$ in GCN and SGC in \Cref{app:cora_ml_sym_row}.
Through these analyses, %
we note that our certification framework enables informed architectural choices from the perspective of robustness. %

\section{Discussion}
\label{sec:conclusion}
\label{sec:discussion}
\textbf{How Tight is QPCert?} 
We compute an upper bound on the provable robustness using APGD by differentiating through the learning process. 
The results in \Cref{fig:attack_main} show that the provable robustness bounds are tight for small pertubation budgets $\delta$ but less tight for larger $\delta$, demonstrating one limitation (for other settings and attacks see \Cref{app:exp_csbm_attack}). While theoretically, the NTK bounds are tight (\Cref{th:tight_bounds}), the approach of deriving element-wise bounds on $\mTheta$ to model $\mathcal{A}$ leading to a relaxation of $\single(\mTheta)$ can explain the gap between provable robustness and an empirical attack. Thus, an opportunity for future work is to explore alternative approaches for modeling $\mathcal{A}$ in the MILP $P(\mTheta)$. %

\textbf{Is QPCert Deterministic or Probabilistic?}
Our certification framework is inherently deterministic, offering deterministic guarantees for kernelized SVMs using the NTK as the kernel. When the width of a NN approaches infinity, QPCert provides a deterministic robustness guarantee for the NN due to the exact equivalence between an SVM with the NN's NTK as kernel and the infinitely wide NN. For sufficiently wide but finite-width NNs this equivalence holds with high probability \citep{chen2021equivalence}, making our certificate probabilistic in this context. However, note that this high-probability guarantee is qualitatively different from %
other methods such as randomized smoothing \citep{cohen2019certified}, in which the certification approach itself is probabilistic and heavily relies on the number of samplings and thus, inherently introduces randomness. 

\textbf{How General is QPCert?}
While we focus on (G)NNs for graph data, our framework enables white-box poisoning certification of NNs on any data domain. QPCert can be readily extended to other architectures, provided the equivalence between the network and NTK holds and the corresponding NTK bounds are derived. We detail these criteria in \Cref{app:other_gnns}. %
Further, it allows for certifying general kernelized SVMs for arbitrary kernel choices 
if respective kernel bounds as in \Cref{sec:bounds} are derived. To the best of our knowledge, this makes our work the first white-box poisoning certificate for kernelized SVMs.
Moreover, the reformulation of the bilevel problem into a MILP is directly applicable to any quadratic program that satisfies strong Slater's constraint and certain bounds on the involved variables, hence the name QPCert. Thus extensions to certify quadratic programming layers in NN \citep{amos2017optnet} or other quadratic learners are thinkable.
Therefore, we believe that our work opens up numerous new avenues of research into the provable robustness %
against data poisoning.

\textbf{On QPCert's Perturbation Model.} We study node feature perturbations due to the fact that: $(i)$ they are of practical concern in many applications of GNNs such as spam detection \citep{li2019spam} or fake news detection \citep{hu2024fake}, and we refer to \Cref{app_sec:feature_vs_structure} for a detailed discussion on their practical relevance; and $(ii)$ certification against them poses unique graph-related challenges due to the interconnectedness of nodes \citep{scholten2022interception,zugner2019robustgcn}. %
In contrast, certifying against poisoning the graph structure remains an open, but important problem. We outline the thereby arising complex technical challenges in
\Cref{app_sec:graph_structure}. 
Furthermore, we study clean-label attacks bounded by $\ell_p$-threat models instead of arbitrary perturbations to nodes controlled by $\mathcal{A}$. \citet{goldblum2023datasetsecurity} distinctively names studying bounded clean-label attacks as an \textit{open problem}, as most works assume unrealistically large input perturbations. %
Moreover, we study semi-verified learning \citep{cha2017semiverified}. This is particularly interesting for semi-supervised settings, where often a small fraction of nodes are manually verified and labeled \citep{shejwalkar2023perils}, or when learning from the crowd \citep{mei2018dataprism, zeng2023semiverified}. However, this may produce overly pessimistic bounds when large fractions of the training data are unverified, but the adversary can only control a small part of it.  
In \Cref{app_sec:common_pert_models}, we further contextualize our threat model within the scope of commonly studied poisoning attacks. \rebuttal{Last but not least, we note that we focus on studying certification against poisoning threat models compared to test-time attacks, as they severely understudied (see \Cref{sec:related_work}), while highly relevant in practice (see \Cref{sec:intro}).}

\textbf{How Scalable is QPCert?} %
QPCert scales to the size of common benchmark graphs such as Cora-ML or WikiCS. This is comparable to the size of datasets currently manageable by deterministic certificates for test-time attacks for GNNs \citep{hojny2024verify}. Here, it is important to note that poisoning certification is inherently more complex as it requires accounting for the model's training dynamics rather than certifying a static model. Thus, this demonstrates the effectiveness of our modeling approach. Nevertheless, deterministic certification even in the "simpler" scenario of test-time attacks is NP-hard \citep{katz2017reluplex}. As a result, all deterministic certificates, also against test-time attacks, have difficulties scaling to large-scale datasets \citep{li2023sok} and QPCert is no exception. \rebuttal{We explore QPCert's computational limits in detail in \Cref{app:scalability}}. To further improve scalability, it would be an interesting avenue for future work to investigate strategies to relax $P(\mQ)$ to achieve an even more favorable trade-off between scalability and tightness. %

\section{Related Work} \label{sec:related_work}

The literature on poisoning certificates is significantly less developed than certifying against test-time (evasion) attacks \citep{li2023sok} %
and we provide an overview in \Cref{tab:related_work}. 

\textbf{Black-Box Poisoning Certificates.} 
Black-box certificates for poisoning are derived following three different approaches: $(i)$ Randomized smoothing, a popular probabilistic test-time certificate strategy \citep{cohen2019certified}, with 
randomization performed over the training dataset  \citep{rosenfeld2020certified, weber2023rab, zhang2022bagflip}. %
Also following this approach,  
\citet{hong2024diffusion} certifies diffusion-based data sanitation via randomized smoothing.
$(ii)$ Ensembles: Creating separate partitions of the training data, training individual base classifiers on top of them and certifying a constructed ensemble classifier \citep{levine2021deep, jia2021intrinsic, wang2022improved, rezaei2023runoff, chen2022collective}; 
$(iii)$ Differential Privacy\footnote{The mechanism to derive a poisoning certificate from a certain privacy guarantee is model agnostic, thus we count it as black-box. However, the calculated privacy guarantees may depend on white-box knowledge.} (DP): \citet{ma2019data} show that any $(\epsilon,\delta)$-DP learner enjoys a certain provable poisoning robustness. \citet{liu2023antidote} %
extend this result to more general notions of DP. \citet{xie2023unraveling} derives guarantees against arbitrary data poisoning in DP federated learning setup. 

\textbf{White-Box Poisoning Certificates.} 
There is little literature on \textit{white-box poisoning certificates} and existing techniques (see \Cref{tab:related_work}) cannot be extended to NNs. \citet{drews2020proving} and \citet{meyer2021certifying} derive poisoning certificates for decision trees using abstract interpretations, while \citet{jia2021certknn} provides a poisoning certificate for nearest neighbor algorithms based on their inherent majority voting principle. Recently, \citet{bian2024naive} derives a poisoning certificate for naive Bayes classification. A concurrent work \citep{sosnin2024certifiedrobustnessdatapoisoning} develops a gradient-based certificate for data poisoning for an MLP in a non-graph setting based on over-approximating the reachable set of parameters. However, their relaxation suffers from significant tightness issues, and in \Cref{app:concurrent_work} we show for MLPs that QPCert can certify significantly larger perturbation budgets. Further, their method is difficult to extend to other architectures, while this can be readily done with QPCert. 
We note that \citet{steinhardt2017certified} develops statistical bounds on the loss for data sanitation defenses that are not applicable to certify classification.

\textbf{Poisoning Attacks and Defense Using the Bilevel Formulation.}
The bilevel problem in \Cref{eq:bilevel_problem} is studied by several works in the context of developing poisoning attacks or empirical defenses \citep{biggio2012poisoning, xiao2015feature, koh2017understanding, jagielski2018manipulating}. %
\citet{biggio2012poisoning} and \citet{xiao2015feature} iteratively solve the bilevel problem associated to an SVM trained on the hinge loss with gradient ascent to generate poisoned samples. %
Notably, \citet{mei2015_bilevel_reformulation} reformulate the bilevel problem $\upper(\mTheta)$ for SVMs to a bilinear single-level problem similar to $\single(\mTheta)$ but only solve it heuristically for attack generation and do not realize the possibility of a MILP reformulation and certificate. \citet{koh2017understanding} also considers the bilevel problem to detect and generate poisoned samples using influence functions (gradient and Hessian vector product) for linear model as well as convolutional neural networks. In the case of graphs, \citet{lingam2024rethinking} develops a label poisoning attack for GNNs using the bilevel problem with a regression objective and including NTKs as surrogate models.

\textbf{Poisoning Certificates for Graph Neural Networks.} Currently, there are no white-box poisoning certificates for GNNs, nor clean-label poisoning certificates for the task of node classification. There are only two type of works on poisoning certificates for GNNs with both being black-box and differing incomparably in their threat model to QPCert. The first is \citet{lai2024nodeaware} and their follow-up \citep{lai2024collective}, which uses the randomized smoothing paradigm to develop probabilistic and black-box certificates for node injection. Second, concurrent to our work, \citet{li2025deterministic} develops a black-box poisoning certificate for GNNs based on ensembling \citep{levine2021deep}. %
While their method allows to certify clean-label attacks for graph classification, it cannot be applied to certify clean-label attacks for node classification as their partitioning strategy includes the poisoned node in each partition. Thus, this affects each base classifier through the training gradients resulting in no certified accuracy.
Furthermore, it assumes arbitrary (unbounded) modifications of data %
and thus, settings with high $p_{adv}$ such as $p_{adv}=1$ studied in \Cref{fig:coramlbin_poison_labeled_all,fig:wikicsb_poison_labeled_all} would always lead to zero certified accuracy and being unable to provide non-trivial guarantees. 

\textbf{Test-time (Evasion) Certificates for Graph Neural Networks.} When it comes to certifying test-time (evasion) attacks instead of training-time perturbations (poisoning), the literature is more extensive. Existing evasion certificates usually either tackle certifying node feature perturbations \citep{scholten2023hierarchical,scholten2022interception,zugner2019robustgcn} or structure perturbations \citep{schuchardt2021collective,zugner2020cert,bojchevski2019certifiable} separately. There are some works whose methods can tackle both under limiting assumptions. Exemplary, \citep{bojchevski_sparsesmoothing_2020} developed a black-box randomized smoothing evasion certificate and can (probabilistically) certify features of a smoothed classifier if they are discrete. \citet{xia2024gnncert} derive a black-box evasion certificate based on partitioning the graph and using the ensemble certification method from \citet{levine2021deep}. However, their certificate only concerns the ensemble and not individual GNNs and, as is common for all ensemble-based works, assumes arbitrary (unbounded) perturbations for controlled nodes. \citet{hojny2024verify} is the only white-box evasion certificate work that can certify structure and feature perturbations but is only applicable to GNNs that don't normalize the adjacency matrix in any non-linear way excluding most commonly used GNNs such as GCN or APPNP.

\section{Conclusion}
We derive an effective %
white-box poisoning certificate framework for sufficiently-wide NNs via their NTKs and demonstrate its %
applicability to semi-supervised node classification tasks common in graph learning.
In particular, we show that QPCert provides non-trivial robustness guarantees and insights into the worst-case poisoning robustness to feature perturbations of a wide range of GNNs. 
Moreover, our framework extends beyond GNNs and graph learning tasks, enabling certification of any neural network where NTK equivalence holds as well as any kernelized SVM, given the derivation of respective kernel bounds.
Our extensive analysis of QPCert showcases both its strengths and directions for future research. We discuss this in detail in \Cref{sec:discussion} and summarize the main points here.
While we show the element-wise bounding of the kernel to be tight (\Cref{th:tight_bounds}), there is still a gap between the empirical and provable robustness for larger perturbations. Thus, this motivates further research into developing %
more sophisticated strategies to derive the kernel bounds. Furthermore, QPCert focuses on node feature perturbations, while
extending it to structural perturbations presents new technical challenges (\Cref{app_sec:graph_structure}), offering another avenue for future research.
In terms of scalability, QPCert effectively certifies common benchmark graphs such as Cora-ML and WikiCS. Further advancing deterministic certification techniques to efficiently scale to significantly larger datasets is an open and impactful research direction.

\subsubsection*{Broader Impact Statement}

Our method represents a robustness certificate for white-box models. This allows a more informed decision when it comes to the safety aspects of currently used models. However, insights into worst-case robustness can be used for good but potentially also by malicious actors. We strongly believe that research about the limitations of existing models is crucial in making models safer and thus, outweighs potential risks. We are not aware of any direct risks coming from our work.

\subsubsection*{Acknowledgements}

The authors want to thank Yan Scholten and Pascal Esser for the interesting discussions. The authors also want to thank them and Leo Schwinn, for the helpful feedback on the manuscript. This paper has been supported by the DAAD programme Konrad Zuse Schools of Excellence in Artificial Intelligence, sponsored by the German Federal Ministry of Education and Research; by the German Research Foundation, grant GU 1409/4-1; as well as by the TUM Georg Nemetschek Institute Artificial Intelligence for the Built World.

\bibliography{main}
\bibliographystyle{tmlr}

\newpage
\appendix
\onecolumn
\section{\rebuttal{Symbols and Abbreviations}} 
\label{app_sec:notations}

\rebuttal{
In addition to the general notations provided, we give a detailed list of symbols and abbreviations we use with descriptions  %
in the following.
}

\bgroup
\def\arraystretch{1.5}
\begin{longtable}{p{0.8in}p{5.2in}}
$\displaystyle \gG=(\rmS, \rmX)$ & A graph with structure matrix $\rmS \in \mathbb{R}^{n\times n}_{\geq 0}$ and feature matrix $\rmX \in \mathbb{R}^{n \times d}$\\
$\displaystyle \rvy$ & The label vector $\in [K]$ of size $m \leq n$ \\
$\displaystyle \mathcal{V}_L$ & The set of all labeled nodes in graph $\gG$ \\
$\displaystyle \mathcal{V}_U$ & The set of all unlabeled nodes in graph $\gG$ \\
$\displaystyle \mathcal{V}_V$ & The set of verified nodes in graph $\gG$ \\
$\displaystyle \mathcal{U}$ &The set of potentially corrupted nodes in graph $\gG$ \\
$\displaystyle \mathcal{B}_p(\vx)$ & The set of allowed perturbations $\{\tilde{\vx}\ |\ \lVert \tilde{\vx} - \vx \rVert_p \le \delta\}$ with budget $\delta \in \mathbb{R}_+$ \\
$\displaystyle \mathcal{A}(\gG)$ & The adversary $\mathcal{A}$ chooses node feature perturbations from $\mathcal{B}_p$ \\
$\displaystyle f_\theta(\gG)$ & A GNN with learnable parameters $\theta$ \\
$\displaystyle f_\theta(\gG)_i$ & The prediction of a GNN for node $i$ \\
$\displaystyle f_\theta^{SVM}(\rvx_i)$ & The SVM objective equivalent to $f_\theta(\gG)_i$ in the infinite-width limit \\
$\displaystyle \mathcal{L}(\theta, \gG)$ & The soft-margin loss \\
$C$ & The regularization strength $\in \mathbb{R}_{+}$ in $\mathcal{L}(\theta, \gG)$\\
$\rmW^{l}$ & The weight matrix of $l$-th layer \\
$\displaystyle \rmQ $ & The NTK matrix of size $n \times n$ corresponding to $f_\theta(\gG)$ \\
$\displaystyle \mathcal{S}(\mTheta)$ & The set of optimal dual solutions to SVM $f_\theta^{SVM}(\rvx_i)$ \\
$\displaystyle \rvalpha$ & The dual variables of the SVM $f_\theta^{SVM} \in \mathbb{R}^m$ \\
$\displaystyle \hat{p}_t$ & The clean prediction of the SVM for a test node $t$, i.e., $\sum_{i=1}^m y_i \alpha_i^* Q_{ti}$ where $\rvalpha^* \in \mathcal{S}(\mTheta)$ \\
$\mtG$ & A perturbed graph $\in \mathcal{A}(\gG)$ \\
$\displaystyle \mathcal{L}_{att}(\theta, \gG)$ & The attack objective \\
$\mtTheta$ & The NTK matrix $\in \mathbb{R}^{n \times n}$ corresponding to $\mtG$ \\
$\mathcal{A}(\mTheta)$ & All NTK matrices $\mtTheta$ constructable from $\mtG \in \mathcal{A}(\gG)$ \\
$\rvu, \rvv$ & The Lagrangian multipliers $\in \mathbb{R}^m$ in the stationarity condition of the KKT \\
$\rvs, \rvt$ & The binary integer variables $\in \{0,1\}^m$ \\
$M_{u_i}, M_{v_i}$ & The positive constants in big-$M$ constraints \\
$\tTheta^L_{ij}$ & The lower bound for $\tTheta_{ij}$ \\
$\tTheta^U_{ij}$ & The upper bound for $\tTheta_{ij}$ \\
$Z_{ij}$ & A variable that is $= \alpha_j \tTheta_{ij}$ \\
$\dual(\mtTheta)$ & The bilevel optimization problem describing node feature poisoning \\
$\upper(\mTheta)$ & The bilevel optimization problem in terms of the NTK matrix $\mTheta$ \\
$\single(\mQ)$ & The single-level optimization problem \\
$P(\mTheta)$ & The final MILP \\
$\rvalpha^*$ & The optimal $\rvalpha$ to $P(\mTheta)$ \\
$\rvu^*, \rvv^*$ & The optimal $\rvu, \rvv$ to $P(\mTheta)$ \\
$\rvs^*, \rvt^*$ & The optimal $\rvs, \rvt$ to $P(\mTheta)$ \\
$\rmZ^*$ & The optimal $\rmZ$ to $P(\mTheta)$ \\
$\Delta$ & The perturbation matrix added to $\rmX\rmX^T$ \\
$\Delta_{ij}^L$ & The lower bound for $\Delta_{ij}$ \\
$\Delta_{ij}^U$ & The upper bound for $\Delta_{ij}$ \\
$PL$ & Abbr. for Poison Labeled setting \\
$PU$ & Abbr. for Poison Unlabeled setting \\
$BL$ & Abbr. for Backdoor Labeled setting \\
$BU$ & Abbr. for Backdoor Unlabeled setting \\
\end{longtable}
\egroup

\section{Architecture Definitions} 
\label{sec_app:architecture_def}
We consider GNNs as functions $f_\theta$ with (learnable) parameters $\theta\in \mathbb{R}^q$ and $L$ number of layers taking the graph $\mG=(\mS, \mX)$ as input and outputs a prediction for each node. We consider linear output layers with weights $\mW^{L+1}$ and denote by $f_\theta(\mG)_i \in \mathbb{R}^K$ the (unnormalized) logit output associated to node $i$. In the following, we formally define the (G)NNs such as MLP, GCN \citep{kipf2017semisupervised}, SGC \citep{wu2019simplifying} and (A)PPNP \citep{gasteiger2019predict} considered in our study.

\begin{definition}[MLP]\label{def:mlp}
    The $L$-layer Multi-Layer Perceptron is defined as $f_\theta(\mG)_i=f_\theta^{MLP}(\vx_i)=\mW^{L+1}\phi^{(L)}_\theta(\vx_i)$. With $\phi_\theta^{l}(\vx_i)=\sigma(\mW^{(l)}\phi_\theta^{(l-1)}(\vx_i)+\vb^{(l)})$ and $\phi_\theta^{(0)}(\vx_i)=\vx_i$. $\mW^{(l)}\in\mathbb{R}^{d_{l+1} \times d_{l}}$ and $\vb^{(l)}\in\mathbb{R}^{d_l}$ are the weights/biases of the $l$-th layer with $d_0=d$ and $d_{L+1}=K$. $\sigma(\cdot)$ is an element-wise activation function. If not mentioned otherwise, we choose $\sigma(z) = \text{ReLU}(z) = \max\{0, z\}$.
\end{definition}

\begin{definition}[GCN \& SGC]\label{def:gcn}
A Graph Convolution Network $f_\theta^{GCN}(\mG)$ \citep{kipf2017semisupervised} of depth $L$ is defined as $f_\theta(\mG) =  \phi_\theta^{(L+1)}(\mG)$ with $\phi_\theta^{(l)}(\mG) = \rmS \sigma(\phi_\theta^{(l-1)}(\mG))\mW^{(l)}$ and $\phi_\theta^{(1)}(\mG) = \rmS\rmX\rmW^{(1)}$.
$\rmW^{(l)} \in \mathbb{R}^{d_{l-1} \times d_{l}}$ are the $l$-th layer weights, $d_{0} = d$, $d_{L+1} = K$, and %
$\sigma(z)= \text{ReLU}(z)$ applied element-wise. A Simplified Graph Convolution Network $f_\theta^{SGC}(\mG)$ \citep{wu2019simplifying} is a GCN with linear %
$\sigma(z)= z$.
\end{definition}

\begin{definition}[GraphSAGE] 
\label{def:gsage}
The $L$-layer GraphSAGE $f^{GSAGE}_\theta(\mG)$ \citep{hamilton2017inductive} is defined as $f_\theta(\mG) =  \phi_\theta^{(L+1)}(\mG)$ with 
$\phi_\theta^{(l)}(\mG) =  \sigma(\phi_\theta^{(l-1)}(\mG))\rmW_1^{(l)}  + \rmS \sigma(\phi_\theta^{(l-1)}(\mG))\rmW_2^{(l)}  $  and $\phi_\theta^{(1)}(\mG) =  \rmX\rmW_1^{(1)}  +  \rmS \rmX\rmW_2^{(1)} $.
$\rmW^{(l)}_1, \rmW^{(l)}_2 \in \mathbb{R}^{d_{l-1} \times d_{l}}$ are the $l$-th layer weights, $d_{0} = d$, $d_{L+1} = K$, and %
$\sigma(z)= \text{ReLU}(z)$ applied element-wise. $\rmS$ is fixed to row normalized adjacency (mean aggregator), $\rmD^{-1} \rmA$.
\end{definition}

\begin{definition}[GIN] 
\label{def:gin}
A one-layer Graph Isomorphism Network $f^{GIN}_\theta(\mG)$ \citep{xu2018powerful} is defined as $f_\theta(\mG) =  f_\theta^{MLP}(\wtG)$ with 
$\wtG = ((1+\epsilon)\rmI + \rmA)\rmX$ where $\epsilon$ is a fixed constant and an one-layer ReLU as MLP.
\end{definition}

\begin{definition}[(A)PPNP]\label{def:appnp}
The Personalized Propagation of Neural Predictions Network $f_\theta^{PPNP}(\mG)$ \citet{gasteiger2019predict} is defined as $f_\theta(\mG) = \rmP \rmH$ where $\rmH_{i,:} = f_\theta^{MLP}(\rvx_i)$ and $\rmP = \alpha(\rmI_{n} - (1-\alpha)\rmS)^{-1}$. The Approximate PPNP is defined with $\rmP = (1-\alpha)^K\rmS^K + \alpha \sum_{i=0}^{K-1} (1-\alpha)^i\rmS^i$ where $\alpha\in[0,1]$ and $K\in \mathbb{N}$ is a fixed constant.
\end{definition}

For APPNP and GIN, we consider an MLP with one-layer ReLU activations as given in the default implementation of APPNP. 

Along with the GNNs presented in \Crefrange{def:mlp}{def:appnp}, we consider two variants of popular skip connections in GNNs as given a name in \citet{sabanayagam2023analysis}:
Skip-PC (pre-convolution), where the skip is added to the features before applying convolution \citep{kipf2017semisupervised}; 
and Skip-$\alpha$, which adds the features to each layer without convolving with $\rmS$ \citep{chenWHDL2020gcnii}.
To facilitate skip connections, we need to enforce constant layer size, that is, $d_i=d_{i-1}$.
Therefore, the input layer is transformed using a random matrix $\rmW$ to $\rmH_0 := \rmX \rmW $ of size $n \times h$ where $\rmW_{ij} \sim \mathcal{N}(0,1)$ and $h$ is the hidden layer size. 
Let $\rmH_i$ be the output of layer $i$  using which we formally define the skip connections as follows.

\begin{definition}[Skip-PC]
\label{def:pc}
In a Skip-PC (pre-convolution) network, the transformed input $\rmH_0$ is added to the hidden layers before applying the graph convolution $\rmS$, that is, $\forall i \in [L],
\phi_\theta^{(l)}(\mG) = \rmS \lp \sigma(\phi_\theta^{(l-1)}(\mG)) + \sigma_s \lp \rmH_0 \rp \rp \mW^{(l)}, $
where $\sigma_s(z)$ can be linear or ReLU.
\end{definition}
Skip-PC definition deviates from \citet{kipf2017semisupervised} because we skip to the input layer instead of the previous one.
We define Skip-$\alpha$ as defined in \citet{sabanayagam2023analysis} similar to \citet{chenWHDL2020gcnii}.

\begin{definition}[Skip-$\alpha$] 
\label{def:alp}
Given an interpolation coefficient $\alpha\in(0,1)$, a Skip-$\alpha$  network is defined such that the transformed input $\rmH_0$ and the hidden layer are interpolated linearly, that is, 
$\phi_\theta^{(l)}(\mG) =  \lp \lp 1-\alpha \rp \rmS \phi_\theta^{(l-1)}(\mG) + \alpha \sigma_s \lp \rmH_0 \rp\rp \rmW_{i} ~\forall i \in [L]$, where $\sigma_s(z)$ can be linear or ReLU.
\end{definition}

\section{Derivation of NTKs for (A)PPNP, GIN and GraphSAGE}
\label{app:appnp_ntk}
In this section, we derive the NTKs for (A)PPNP, GIN and GraphSAGE, and state the NTKs for GCN and SGC from \citet{sabanayagam2023analysis}.

\subsection{NTK for (A)PPNP}
We derive the closed-form NTK expression for (A)PPNP $f_\theta(\mG)$ \citep{gasteiger2019predict} in this section. 
The learnable parameters $\theta$ are only part of $\rmH$. In practice, $\rmH = \text{ReLU}(\rmX \rmW_1 + \rmB_1) \rmW_2 + \rmB_2$ where node features $\rmX,\theta = \{ \rmW_1 \in \mathbb{R}^{d\times h}, \rmW_2 \in \mathbb{R}^{h \times K}, \rmB_1 \in \mathbb{R}^{n \times h}, \rmB_2 \in \mathbb{R}^{n \times K} \}$.
Note that in the actual implementation of the MLP, $\rmB_1$ is a vector and we consider it to be a matrix by having the same columns so that we can do matrix operations easily. Same for $\rmB_2$ as well.
We give the full architecture with NTK parameterization in the following,
\begin{align}
    f_\theta(\mG) = \rmP ( \frac{c_\sigma}{\sqrt{h}} \sigma(\rmX \rmW_1 + \rmB_1)\rmW_2 + \rmB_2) \label{eq:appnp}
\end{align}

where $h \to \infty$ and all parameters in $\theta$ are initialized as standard Gaussian $\mathcal{N}(0,1)$. $c_\sigma$ is a constant to preserve the input norm \citep{sabanayagam2023analysis}. We derive for $K=1$ as all the outputs are equivalent in expectation.
The NTK between nodes $i$ and $j$ is $\expectation{\theta\sim \mathcal{N}(0,1)}{\linnerprod \nabla_\theta f_\theta(\rmG)_i , \nabla_\theta f_\theta(\rmG)_j \rinnerprod}$.
Hence, we first write down the gradients for node $i$ following \citep{arora2019exact,sabanayagam2023analysis}:
\begin{align}
    \frac{\partial f_\theta(\mG)_i}{\partial \rmW_2} &= \frac{c_\sigma}{\sqrt{h}}(\rmP_{i }  \sigma(\rmG_1))^T & ;\rmG_1 = \rmX \rmW_1 + \rmB_1 \nonumber \\
    \frac{\partial f_\theta(\mG)_i}{\partial \rmB_2} &= (\rmP_{i })^T \mathbf{1}_n \nonumber \\
    \frac{\partial f_\theta(\mG)_i}{\partial \rmW_1} &= \frac{c_\sigma}{\sqrt{h}} \rmX^T(\rmP_{i }^T \mathbf{1}_{n} \rmW_2^T \odot \dot{\sigma}(\rmG_1)) \nonumber \\
    \frac{\partial f_\theta(\mG)_i}{\partial \rmB_1} &= \frac{c_\sigma}{\sqrt{h}} \rmP_{i }^T \mathbf{1}_{n} \rmW_2^T \odot \dot{\sigma}(\rmG_1) \nonumber
\end{align}

We note that $\rmB_2$ has only one learnable parameter for $K=1$, but is represented as a vector of size $n$ with all entries the same. Hence, the derivative is simply adding all entries of $\rmP_{i }$.
First, we compute the covariance between nodes $i$ and $j$ in $\rmG_1$.
\begin{align}
    \expectation{}{\lp G_1\rp_{ik} \lp G_1\rp_{jk^\prime}} &= \expectation{}{\lp \rmX \rmW_1 + \rmB_1\rp_{ik} \lp  \rmX \rmW_1 + \rmB_1 \rp_{jk^\prime}} \nonumber 
\end{align}
Since the expectation is over $\rmW_1$ and $\rmB_1$ and all entries are $\sim \mathcal{N}(0,1)$, and i.i.d, the cross terms will be $0$ in expectation. Also, for $k\ne k^\prime$, it is $0$. Therefore, it gets simplified to 
\begin{align}
    \expectation{}{\lp G_1\rp_{ik} \lp G_1\rp_{jk}} &= \expectation{}{\rmX_i \rmW_1 \rmW_1^T \rmX^T_j + \lp\rmB_1\rmB_1^T\rp_{ij}} \nonumber \\
    &= \lp\rmX\rmX^T\rp_{ij} + 1 = \lp{\Sigma}_1\rp_{ij} \label{eq:g1g1}
\end{align}
Thus, $\mathbf{\Sigma}_1 = \rmX\rmX^T + \mathbf{1}_{n \times n}$ and let $\lp E_1 \rp_{ij} = \expectation{}{\sigma(\rmG_1)_i\sigma(\rmG_1)_j^T}$ and $\lp \dot{E}_1 \rp_{ij} = \expectation{}{\dot{\sigma}(\rmG_1)_i\dot{\sigma}(\rmG_1)_j^T}$ computed using the definitions in \Cref{th: NTK for GCN} for ReLU non-linearity. Now, we can compute the NTK for each parameter matrix and then sum it up to get the final kernel.

\begin{align}
    \linnerprod \frac{\partial f_\theta(\mG)_i}{\partial \rmW_2}, \frac{\partial f_\theta(\mG)_j}{\partial \rmW_2} \rinnerprod &= \frac{c_\sigma^2}{{h}} \rmP_{i }\sigma(\rmG_1) \sigma(\rmG_1)^T \rmP_{j }^T \nonumber \\
    &\stackrel{h\to \infty}{=} c_\sigma^2 \rmP_{i } \expectation{}{\sigma(\rmG_1) \mathbf\sigma(\rmG_1)^T} \rmP_{j }^T = c_\sigma^2 \rmP_{i } \rmE_1 \rmP^T_j 
    \label{eq:ppnp_ntk_w2}
\end{align}

\begin{align}
    \linnerprod \frac{\partial f_\theta(\mG)_i}{\partial \rmB_2}, \frac{\partial f_\theta(\mG)_j}{\partial \rmB_2} \rinnerprod = \rmP_{i } \mathbf{1}_{n \times n} \rmP_{j }^T \label{eq:ppnp_ntk_b2}
\end{align}

\begin{align}
    \linnerprod\frac{\partial f_\theta(\mG)_i}{\partial \rmB_1}, \frac{\partial f_\theta(\mG)_j}{\partial \rmB_1} \rinnerprod &\stackrel{h\to\infty}{=} c_\sigma^2  \rmP_{i } (\expectation{}{\dot{\sigma}(\rmG_1)\dot{\sigma}(\rmG_1)}) \rmP^T_{j } = c_\sigma^2 \rmP_{i } \dot{\rmE}_1 \rmP^T_{j } \label{eq:ppnp_ntk_b1}
\end{align}

\begin{align}
    \linnerprod \frac{\partial f_\theta(\mG)_i}{\partial \rmW_1}, \frac{\partial f_\theta(\mG)_j}{\partial \rmW_1}\rinnerprod &= \frac{c_\sigma^2}{{h}}\sum_{p,q}^{f,h} (\rmX^T(\rmP_{i }^T \mathbf{1}_n \rmW_2^T \odot \dot{\sigma}(\rmG_1)))_{pq} (\rmX^T(\rmP_{j }^T \mathbf{1}_n \rmW_2^T \odot \dot{\sigma}(\rmG_1)))_{pq} \nonumber \\
    &= \frac{c_\sigma^2}{{h}} \sum_{p=1}^{d} \sum_{q=1}^h \Big[ \sum_{a=1}^{n} (\rmX^T)_{pa}(\rmP_{i }^T \rmW_2^T)_{aq} \dot{\sigma}(\rmG_1)_{aq} \nonumber \\
    &\quad \quad \quad \quad \sum_{b=1}^n (\rmX^T)_{pb}(\rmP_{j }^T \rmW_2^T)_{bq} \dot{\sigma}(\rmG_1)_{bq} \Big] \nonumber \\
    &\stackrel{h \to \infty}{=} c_\sigma^2  \sum_{a=1,b=1}^{n,n} (\rmX \rmX^T)_{ab} \rmP_{ia} (\rmP^T)_{bj} \expectation{}{\dot{\sigma}(\rmG_1)\dot{\sigma}(\rmG_1)}_{ab} \nonumber \\
&= c_\sigma^2  \rmP_{i } (\rmX \rmX^T \odot \expectation{}{\dot{\sigma}(\rmG_1)\dot{\sigma}(\rmG_1)}) \rmP^T_{j } = c_\sigma^2  \rmP_{i } (\rmX \rmX^T \odot \dot{\rmE})_1 \rmP^T_{j }\label{eq:ppnk_ntk_w1}
\end{align}
Finally, the NTK matrix for the considered (A)PPNP is sum of \Crefrange{eq:ppnp_ntk_w2}{eq:ppnk_ntk_w1} as shown below. 
\begin{align}
    \mathbf{\Theta} &= c_\sigma^2 \lp \rmP\rmE_1\rmP^T + \rmP\mathbf{1}_{n\times n} \rmP^T +  \rmP\dot{\rmE}_1\rmP + \rmP \lp \rmX\rmX^T \odot \dot{\rmE}_1 \rp \rmP^T \rp \nonumber\\
    &= c_\sigma^2 \lp  \rmP \lp \rmE_1 + \mathbf{1}_{n\times n} \rp \rmP^T + \rmP \lp \lp \rmX\rmX^T + \mathbf{1}_{n \times n} \rp \odot \dot{\rmE}_1 \rp \rmP^T  \rp \nonumber \\
    &= c_\sigma^2 \lp  \rmP \lp \rmE_1 + \mathbf{1}_{n\times n} \rp \rmP^T + \rmP \lp \mathbf{\Sigma}_1 \odot \dot{\rmE}_1 \rp \rmP^T  \rp
\end{align}
Note that $c_\sigma$ is a constant, and it only scales the NTK, so we set it to $1$ in our experiments.
Since we use a linear output layer without bias term at the end, that is, $\rmB_2 = 0$, the NTK we use for our experiments is reduced to 
$$ \rmQ =  \lp  \rmP \rmE_1 \rmP^T + \rmP \lp \mathbf{\Sigma}_1 \odot \dot{\rmE}_1 \rp \rmP^T  \rp. $$
$\hfill \square$

\subsection{NTK for GIN} 
The GIN architecture \Cref{def:gin} is similar to APPNP: $\rmP$ and $\rmX$ in APPNP are Identity and $\wtG$ in GIN, respectively. Hence the NTK is exactly the same as APPNP with these matrices. Thus, the NTK for GIN is
$$ \rmQ =   \rmE_1  +  \lp \mathbf{\Sigma}_1 \odot \dot{\rmE}_1 \rp$$
with $\mathbf{\Sigma}_1=\wtG\wtG^T+\mathbf{1}_{n\times n}$, $\rmE_1= \expectation{\rmF \sim \mathcal{N}(\mathbf{0}, \mathbf{\Sigma}_{1})}{\sigma(\rmF) \sigma(\rmF)^T}$ and $\dot{\rmE}_1= \expectation{\rmF \sim \mathcal{N}(\mathbf{0}, \mathbf{\Sigma}_{1})}{\dot\sigma(\rmF) \dot\sigma(\rmF)^T}$. 
$\hfill \square$

\subsection{NTKs for GCN and SGC} \label{app_sec:ntk_for_gcn}
We restate the NTK derived in  \citet{sabanayagam2023analysis} for self containment. The GCN of depth $L$  with width $d_l \to \infty \,\, \forall l \in \{1, \ldots, L\}$, the network converges to the following kernel when trained with gradient flow.

\begin{theorem}[NTK for Vanilla GCN]\label{th: NTK for GCN}
For the GCN defined in \Cref{def:gcn}, the NTK $\bf{\Theta}$ at depth $L$ and $K=1$ is 
\begin{align}
     \mathbf{\Theta}^{(L)} = \sum_{k=1}^{L+1} \strut\smash{\underbrace{\rmS \Big( \ldots \rmS \Big( \rmS}_{L+1-k \text{ terms}}} \lp \mathbf{\Sigma}_k \odot \dot{\rmE}_k \rp \rmS^T \odot  \dot{\rmE}_{k+1} \Big) \rmS^T \odot \ldots \odot \dot{\rmE}_{L}\Big) \rmS^T.
    \label{eq:dwi}
\end{align}
Here $\mathbf{\Sigma}_k\in\sR^{n \times n}$ is the co-variance between nodes of layer $k$, and is given by
$\mathbf{\Sigma}_1 = \rmS\rmX\rmX^T\rmS^T$, $\mathbf{\Sigma}_k = \rmS \rmE_{k-1} \rmS^T$ with $\rmE_k= c_\sigma \expectation{\rmF \sim \mathcal{N}(\mathbf{0}, \mathbf{\Sigma}_{k})}{\sigma(\rmF) \sigma(\rmF)^T}$, $\dot{\rmE}_k= c_\sigma \expectation{\rmF \sim \mathcal{N}(\mathbf{0}, \mathbf{\Sigma}_{k})}{\dot\sigma(\rmF) \dot\sigma(\rmF)^T}$ and $\dot{\rmE}_{L+1}=\mathbf{1}_{n \times n}$. 

\begin{align}
    \Big( E_k \Big)_{ij} &= \sqrt{\lp{\Sigma}_k\rp_{ii} \lp{\Sigma}_k\rp_{jj}} \, \, \kappa_1 \lp \dfrac{\lp{\Sigma}_k\rp_{ij}}{\sqrt{\lp{\Sigma}_k\rp_{ii} \lp{\Sigma}_k\rp_{jj}}} \rp \nonumber \\
    \lp \dot{E}_k \rp_{ij} &= \kappa_0 \lp \dfrac{\lp{\Sigma}_k\rp_{ij}}{\sqrt{\lp{\Sigma}_k\rp_{ii} \lp{\Sigma}_k\rp_{jj}}} \rp, \nonumber
\end{align}
where $\kappa_0(z) = \dfrac{1}{\pi}\lp \pi - \text{arccos}\lp z \rp \rp $ and $\kappa_1(z) = \dfrac{1}{\pi} \lp z \lp \pi - \text{arccos}\lp z\rp  \rp + \sqrt{1 - z^2} \rp$. 
\end{theorem}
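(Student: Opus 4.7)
The plan is to mirror the classical NTK derivation of \citet{jacot2018neural} and \citet{arora2019exact}, but carefully propagating the graph operator $\rmS$ that premultiplies each hidden representation. Concretely, I would write $f_\theta(\mG) = \rmS \sigma(\rmH_L) \rmW^{(L+1)}$ with pre-activations defined recursively by $\rmH_l = \rmS \sigma(\rmH_{l-1}) \rmW^{(l)}$ and $\rmH_1 = \rmS \rmX \rmW^{(1)}$ (using NTK parameterization so that each $\rmW^{(l)}$ is rescaled by $1/\sqrt{d_{l-1}}$ and initialized from $\mathcal{N}(0,1)$). The NTK between nodes $i$ and $j$ is the sum over $l \in [L+1]$ of $\langle \nabla_{\rmW^{(l)}} f_\theta(\mG)_i,\, \nabla_{\rmW^{(l)}} f_\theta(\mG)_j\rangle$, so I would compute each summand separately and then combine.

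\textbf{Step 1: Forward covariance.} I would first establish inductively that, as $d_1,\dots,d_L \to \infty$, the pre-activation $\rmH_l$ behaves like an i.i.d.\ Gaussian process across width with row covariance $\mathbf{\Sigma}_l$. The base case is immediate: $\mathbf{\Sigma}_1 = \rmS\rmX\rmX^T\rmS^T$ since $\mathbb{E}[\rmH_1 \rmH_1^T]/d_1$ collapses to this. For the inductive step, conditioning on $\rmH_{l-1}$, the quantity $\rmS \sigma(\rmH_{l-1})\rmW^{(l)}$ has per-row covariance $\rmS \rmE_{l-1} \rmS^T$ (by law of large numbers across the width dimension), which gives $\mathbf{\Sigma}_l = \rmS \rmE_{l-1}\rmS^T$. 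The closed form of $\rmE_k$ and $\dot{\rmE}_k$ in terms of $\kappa_0, \kappa_1$ then follows from Cho--Saul's arc-cosine kernel identities for ReLU applied entrywise, since the normalized correlations $\mathbf{\Sigma}_k(i,j)/\sqrt{\mathbf{\Sigma}_k(i,i)\mathbf{\Sigma}_k(j,j)}$ are the arguments of $\kappa_0$ and $\kappa_1$.

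\textbf{Step 2: Per-layer gradient contribution.} Next, I would compute $\partial f_\theta(\mG)_i/\partial \rmW^{(k)}$ by chain rule. Backpropagating through layers $k+1,\dots,L+1$ produces an alternating sequence of multiplications by $\rmS^T$ and entrywise products with $\dot\sigma(\rmH_m)$. Taking the inner product of the $i$-th and $j$-th gradients and letting the widths tend to infinity, each product $\dot\sigma(\rmH_m)\dot\sigma(\rmH_m)^T/d_m$ concentrates on $\dot{\rmE}_m$, converting the Hadamard product with $\dot\sigma$-activations into a Hadamard product with $\dot{\rmE}_m$. The factor from layer $k$ itself contributes $\mathbf{\Sigma}_k \odot \dot{\rmE}_k$ (the $\dot{\rmE}_k$ coming from differentiating $\sigma$ at the $k$-th stage of the backward pass, the $\mathbf{\Sigma}_k$ from the forward pass into layer $k$). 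The leading and trailing $\rmS, \rmS^T$ reflect the pre- and post-convolutions, so the contribution from $\rmW^{(k)}$ is exactly the $k$-th summand in the theorem. Summing over $k=1,\dots,L+1$ yields the stated formula.

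\textbf{Main obstacle.} The bookkeeping step — showing that each additional backward layer wraps the existing kernel as $\rmS(\,\cdot\,)\rmS^T \odot \dot{\rmE}_{m}$, and that the infinite-width limit commutes with the nested Hadamard--matrix products — is the main technical hurdle. Because $\rmS$ is a fixed, non-diagonal operator, the usual independence argument across the width dimension must be applied conditionally on $\rmH_{l-1}$ and then pushed through the graph mixing, rather than factored out. I would handle this the way \citet{arora2019exact} do for feedforward nets (via a Gaussian conditioning argument at each layer), checking that $\rmS$ only enters as deterministic left/right multiplication and therefore does not disrupt the concentration of $\dot\sigma(\rmH_m)\dot\sigma(\rmH_m)^T/d_m$ onto $\dot{\rmE}_m$. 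Once this is verified layer by layer, the nested form in the theorem falls out directly.
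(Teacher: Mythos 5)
Your plan is correct and follows essentially the same route as the source derivation: note that the paper does not prove \Cref{th: NTK for GCN} itself but restates it from \citet{sabanayagam2023analysis}, and its own analogous derivation (the (A)PPNP NTK in \Cref{app:appnp_ntk}) proceeds exactly as you propose --- per-parameter gradient inner products, forward covariance recursion $\mathbf{\Sigma}_k = \rmS\rmE_{k-1}\rmS^T$, arc-cosine closed forms for $\rmE_k, \dot{\rmE}_k$, and width-limit concentration turning $\dot\sigma(\rmH_m)\dot\sigma(\rmH_m)^T/d_m$ into $\dot{\rmE}_m$, summed over layers. Your identification of the Gaussian-conditioning step as the main technical hurdle, and the observation that the fixed operator $\rmS$ enters only as deterministic left/right multiplication and so does not disturb the concentration argument, is precisely the point that makes the graph case go through.
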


\subsection{NTK for GraphSAGE}
From the definition of GraphSAGE \Cref{def:gsage}, it is very similar to GCN with row normalization adjacency $\rmS=\widehat{\rmD}^{-1}\widehat{\rmA}$ where $\widehat{\rmA}$ and $\widehat{\rmD}$ are adjacency matrix with self-loop and its corresponding degree matrix. The differences in GraphSAGE are the following: there is no self-loop to the adjacency as $\rmS=\rmD^{-1}\rmA$, and the neighboring node features are weighted differently compared to the node itself using $\rmW_1$ and $\rmW_2$. Given that the NTK is computed as the expectation over weights at initialization and infinite width, both $\rmW_1$ and $\rmW_2$ behave similarly. Hence, these weights can be replaced with a single parameter $\rmW$ which transforms the network definition of GraphSAGE to $\phi_\theta^1(\mG) = (\rmI+\rmS)\rmX\rmW^{(1)}$ and similarly  $\phi_\theta^l(\mG) = (\rmI+\rmS)\sigma(\phi_\theta^{(l-1)(\mG)})\rmW^{(l)}$ with $\rmS=\rmD^{-1}\rmA$.
Thus, the NTK for GraphSAGE is the same as GCN with the difference in the graph normalization $\rmS$.
$\hfill\square$

\section{Equivalence of GNNs to SVMs}
\label{app_sec:gnn_svm_equi}
We show the equivalence between GNNs and SVMs by extending the result from \citet{chen2021equivalence}, which showed that an infinite-width NN trained by gradient descent on a soft-margin loss has the same training dynamics as that of an SVM with the NN's NTK as the kernel. The fulcrum of their proof that directly depends on the NN is that the NTK stays constant throughout the training (refer to \citep[Theorem 3.4]{chen2021equivalence}). As we consider the same learning setup with only changing the network to GNNs, it is enough to show that the graph NTKs stay constant throughout training for the equivalence to hold in this case. 

\textbf{Constancy of Graph NTKs.}
This constancy of the NTK in the case of infinitely-wide NNs is deeply studied in \citet{liu2020linearity} and derived the conditions for the constancy as stated in \Cref{thm:ntk_const}.

\begin{theorem}[\citep{liu2020linearity}]
\label{thm:ntk_const}
The constancy of the NTK throughout the training of the NN holds if and only if $(i)$ the last layer of the NN is linear; $(ii)$ the Hessian spectral norm $\lVert \rmH \rVert$ of the neural network with respect to the parameters is small, that is, $\to 0$ with the width of the network; $(iii)$ the parameters of the network $\rvw$ during training and at initialization is bounded, that is, parameters at time $t$, $\rvw_t$, satisfies $\lVert \rvw_t - \rvw_0 \rVert_2 \leq \epsilon$.
\end{theorem}

Now, we prove the constancy of graph NTKs of the GNNs by showing the three conditions. 

$(i)$ \textbf{Linear last layer.} The GNNs considered in \Cref{def:mlp,def:alp} have a linear last layer. 

$(ii)$ \textbf{Small Hessian spectral norm.} Recollect that we use NTK parameterization for initializing the network parameters, that is, $\mathcal{N}(0,1/\text{width})$. This is equivalent to initializing the network with standard normal $\mathcal{N}(0,1)$ and appropriately normalizing the layer outputs \citep{arora2019exact,sabanayagam2023analysis}. To exemplify, the APPNP network definition with the normalization is given in \Cref{eq:appnp}. Similarly for other GNNs, the normalization results in scaling $\phi_\theta^{(l)}$ as $\frac{c_\sigma}{\sqrt{h}}\phi_\theta^{(l)}$ where $h$ is the width of the layer $l$. As all our GNNs have a simple matrix multiplication of the graph structure without any bottleneck layer, the Hessian spectral norm is $\mathcal{O}(\ln h/\sqrt{h})$ as derived for the multilayer fully connected networks in \citet{liu2020linearity}.
Therefore, as $h \to \infty$ the spectral norm $\to 0$. 
 
$(iii)$ \textbf{Bounded parameters.} This is dependent only on the optimization of the loss function as derived in \citet[Lemma D.1]{chen2021equivalence}. We directly use this result as our loss and the optimization are the same as \citep{chen2021equivalence}.

With this, we show that the considered GNNs trained by gradient descent on soft-margin loss is equivalent to SVM with the graph NTK as the kernel.
$\hfill\square$
\section{Derivation of NTK Bounds and \Cref{th:tight_bounds}}\label{app:ntk_bounds}
In this section, we first present the bounds for $\Delta$ in the case of $p=2$ and $p=1$ in $\mathcal{B}_p(\rvx)$ (\Cref{lm:p2_delta} and \Cref{lm:p1_delta}), and then derive \Cref{lm:p2_delta,lm:pinf_delta,lm:p1_delta} and \Cref{th:tight_bounds} stated in \Cref{ss:threat_model}.
\begin{lemma}[Bounds for $\Delta$, $p=2$]
Given $\mathcal{B}_2(\rvx)$ and any $\widetilde\rmX \in \mathcal{A}(\rmX)$, then $\widetilde\rmX \widetilde\rmX^T = \rmX \rmX^T + \Delta$ where the worst-case bounds for $\Delta$, $\Delta_{ij}^L \leq \Delta_{ij} \leq \Delta_{ij}^U$ for all $i$ and $j \in [n]$, is
\begin{align}
    \Delta_{ij}^L &= -\delta \lVert \rmX_j \rVert_2 \mathbbm{1}[i \in \mathcal{U}] - \delta \lVert \rmX_i \rVert_2 \mathbbm{1}[j \in \mathcal{U}] - \delta^2 \mathbbm{1}[i \in \mathcal{U} \land j \in \mathcal{U} \land i \ne j] \nonumber \\
    \Delta_{ij}^U &= \delta \lVert \rmX_j \rVert_2 \mathbbm{1}[i \in \mathcal{U}] + \delta \lVert \rmX_i \rVert_2 \mathbbm{1}[j \in \mathcal{U}] + \delta^2 \mathbbm{1}[i \in \mathcal{U} \land j \in \mathcal{U}] \label{eq:delta_bounds_2}
\end{align}
\label{lm:p2_delta}
\end{lemma}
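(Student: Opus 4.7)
The plan is to proceed by a direct case analysis on whether each of the two indices lies in $\mathcal{U}$. First, I would write $\widetilde{\vx}_i = \vx_i + \vr_i$ where $\vr_i = \vzero$ if $i \notin \mathcal{U}$ and $\|\vr_i\|_2 \le \delta$ if $i \in \mathcal{U}$. Expanding, the $(i,j)$-entry of $\widetilde{\rmX}\widetilde{\rmX}^T - \rmX\rmX^T$ becomes
\begin{equation*}
\Delta_{ij} \;=\; \vx_i^\top \vr_j \;+\; \vr_i^\top \vx_j \;+\; \vr_i^\top \vr_j,
\end{equation*}
so the task reduces to bounding these three inner products individually under the $\ell_2$-budget constraint.

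Next, I would bound each term by Cauchy--Schwarz: $|\vx_i^\top \vr_j| \le \|\vx_i\|_2 \|\vr_j\|_2 \le \delta \|\vx_i\|_2 \,\mathbbm{1}[j\in\mathcal{U}]$, and symmetrically for $\vr_i^\top \vx_j$. For the cross term $\vr_i^\top \vr_j$ with $i \ne j$ and both in $\mathcal{U}$, Cauchy--Schwarz gives $|\vr_i^\top \vr_j| \le \delta^2$, yielding both the $+\delta^2$ in $\Delta_{ij}^U$ and the $-\delta^2$ in $\Delta_{ij}^L$. Summing these three bounds (each with its sign) and collecting indicators reproduces the upper and lower expressions in the lemma for the off-diagonal case. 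For the cases where at least one of $i,j$ lies outside $\mathcal{U}$, the corresponding $\vr$ vanishes and the indicators in the claimed formula correctly switch off the associated terms.

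The only subtle point, and the one I would flag as the main obstacle to getting the indicators right, is the diagonal case $i = j$ with $i \in \mathcal{U}$. There $\vr_i^\top \vr_j = \|\vr_i\|_2^2 \in [0,\delta^2]$, so the lower bound on this term is $0$, not $-\delta^2$. This is exactly why the lemma's lower bound carries the indicator $\mathbbm{1}[i\in\mathcal{U} \land j\in\mathcal{U} \land i\ne j]$ that excludes the diagonal, while the upper bound keeps only $\mathbbm{1}[i\in\mathcal{U} \land j\in\mathcal{U}]$. Verifying that both formulas simplify correctly when $i=j$ (giving $\Delta_{ii}^U = 2\delta\|\vx_i\|_2 + \delta^2$ and $\Delta_{ii}^L = -2\delta\|\vx_i\|_2$, matching $\|\vx_i + \vr_i\|_2^2 - \|\vx_i\|_2^2$) completes the derivation.

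Finally, although the lemma only asserts validity of the bounds, I would note along the way that the cross-term bound $|\vr_i^\top\vr_j|\le \delta^2$ and the two Cauchy--Schwarz bounds may not be simultaneously attainable in the $i\ne j$ case because the maximizers $\vr_i \propto \vx_j$ and $\vr_j \propto \vx_i$ generally do not align to achieve $\vr_i^\top\vr_j = \pm\delta^2$. This gap between soundness and tightness is what \Cref{th:tight_bounds} later addresses by restricting to architectures with linear activation, where the linear propagation recovers tightness.
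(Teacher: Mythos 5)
Your proof is correct and takes essentially the same route as the paper's: decompose $\widetilde\rmX = \rmX + \mathbf\Gamma$ with $\mathbf\Gamma_i = \bm{0}$ off $\mathcal{U}$, bound the three resulting inner-product terms by Cauchy--Schwarz, and treat the diagonal $i=j$ separately since $\langle \mathbf\Gamma_i, \mathbf\Gamma_i\rangle \ge 0$ removes the $-\delta^2$ term from the lower bound. If anything, your explicit Cauchy--Schwarz bound $|\langle \mathbf\Gamma_i, \rmX_j\rangle| \le \|\mathbf\Gamma_i\|_2\|\rmX_j\|_2$ is stated more cleanly than the paper's, which carries over an $\ell_\infty$-norm factor from the $p=\infty$ case in its displayed inequality.
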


\begin{lemma}[Bounds for $\Delta$, $p=1$]
Given $\mathcal{B}_2(\rvx)$ and any $\widetilde\rmX \in \mathcal{A}(\rmX)$, then $\widetilde\rmX \widetilde\rmX^T = \rmX \rmX^T + \Delta$ where the worst-case bounds for $\Delta$, $\Delta_{ij}^L \leq \Delta_{ij} \leq \Delta_{ij}^U$ for all $i$ and $j \in [n]$, is
\begin{align}
    \Delta_{ij}^L &= -\delta \lVert \rmX_j \rVert_\infty \mathbbm{1}[i \in \mathcal{U}] - \delta \lVert \rmX_i \rVert_\infty \mathbbm{1}[j \in \mathcal{U}] - \delta^2 \mathbbm{1}[i \in \mathcal{U} \land j \in \mathcal{U} \land i \ne j] \nonumber \\
    \Delta_{ij}^U &= \delta \lVert \rmX_j \rVert_\infty \mathbbm{1}[i \in \mathcal{U}] + \delta \lVert \rmX_i \rVert_\infty \mathbbm{1}[j \in \mathcal{U}] + \delta^2 \mathbbm{1}[i \in \mathcal{U} \land j \in \mathcal{U}] \label{eq:delta_bounds_1}
\end{align}
\label{lm:p1_delta}
\end{lemma}

To derive \Cref{lm:pinf_delta,lm:p2_delta,lm:p1_delta}, we consider the perturbed feature matrix $\widetilde\rmX \in \mathcal{A}(\rmX)$ and derive the worst-case bounds for $\widetilde\rmX\widetilde\rmX^T$ based on the perturbation model $\mathcal{B}_p(\rvx)$ where $p=\infty$, $p=2$ and $p=1$ in our study.
Let’s say $\mathcal{U}$ is the set of nodes that are potentially controlled by the adversary $\mathcal{A}(\rmX)$ and $\widetilde\rmX = \rmX + \mathbf\Gamma \in \mathbb{R}^{n \times d}$ where $\mathbf\Gamma_i$ is the adversarial perturbations added to node $i$ by the adversary, therefore, $\lVert \mathbf\Gamma_i \rVert_p \leq \delta$ and $\mathbf\Gamma_i>0$ for $i \in \mathcal{U}$ and $\mathbf\Gamma_i=0$ for $i \not\in \mathcal{U}$. Then 
\begin{align}
    \widetilde\rmX\widetilde\rmX^T &= (\rmX+\mathbf\Gamma)(\rmX+\mathbf\Gamma)^T \nonumber \\
    &= \rmX\rmX^T + \mathbf\Gamma \rmX^T + \rmX\mathbf\Gamma^T + \mathbf\Gamma \mathbf\Gamma^T = \rmX\rmX^T + \Delta. \label{eq:xxt_adv}
\end{align}

As a result, it suffices to derive the worst-case bounds for $\Delta$, $\Delta^L \leq \Delta \leq \Delta^U$, for different perturbations. To do so, our strategy is to bound the scalar products $\linnerprod \mathbf\Gamma_i, \rmX_j\rinnerprod$ and $\linnerprod\mathbf\Gamma_i, \mathbf\Gamma_j\rinnerprod$ element-wise, hence derive $\Delta^L_{ij} \leq \Delta_{ij} \leq \Delta_{ij}^U$. In the following, we derive $\Delta_{ij}^L$ and $\Delta_{ij}^U$ for the cases when $p=\infty$, $p=2$ and $p=1$ in $\mathcal{B}_p(\rvx)$.

\textbf{Case $(i)$: Derivation of \Cref{lm:pinf_delta} for $p=\infty$.} In this case, the perturbation allows 
$\lVert\tilde{\rmX}_i -\rmX_i \rVert_{\infty} \leq \delta$, then by H\"older's inequality $\linnerprod \rva, \rvb \rinnerprod \leq \lVert \rva \rVert_p \lVert \rvb \rVert_q$ where $\frac{1}{p} + \frac{1}{q} = 1$ for all $p, q \in [1,\infty]$ we have 
\begin{align}
    \lvert \linnerprod \mathbf\Gamma_i, \rmX_j\rinnerprod\rvert &\leq \lVert\mathbf\Gamma_i \rVert_\infty \lVert \rmX_j \rVert_1 \leq \delta \lVert \rmX_j \rVert_1 \nonumber \\
    \lvert \linnerprod \mathbf\Gamma_i, \mathbf\Gamma_j\rinnerprod \rvert &\leq \lVert \mathbf\Gamma_i\rVert_2 \lVert\mathbf\Gamma_j\rVert_2 \leq d\lVert \Delta_i \rVert_\infty \lVert \Delta_j 
    \rVert_\infty \leq d\delta^2. \label{eq:pinf_xxt}
\end{align}

Using \Cref{eq:pinf_xxt}, the worst-case lower bound $\Delta_{ij}^L$ is the lower bound of $\mathbf\Gamma \rmX^T + \rmX\mathbf\Gamma^T + \mathbf\Gamma \mathbf\Gamma^T$:
\begin{align}
    \Delta^L_{ij} &= 
    \begin{cases} 
    0 + & \text{if } i, j \not\in \mathcal{U} \\
    -\delta \lVert \rmX_j \rVert_1 + & \text{if } i \in \mathcal{U} \\ 
    -\delta \lVert \rmX_i \rVert_1 + & \text{if }  j \in \mathcal{U} \\
    -\delta^2d & \text{if } i,j \in \mathcal{U}\text{ and } i \ne j. 
    \end{cases} \label{eq:pinf_delta_l}
\end{align}

The last case in \Cref{eq:pinf_delta_l} is due to the fact that $\linnerprod \mathbf\Gamma_i, \mathbf\Gamma_i \rinnerprod \ge 0$, hence $\Delta_{ii}^L = 0$. Finally, the \Cref{eq:pinf_delta_l} can be succinctly written using the indicator function as 
$$\Delta_{ij}^L = -\delta \lVert \rmX_j \rVert_1 \mathbbm{1}[i \in \mathcal{U}] - \delta \lVert \rmX_i \rVert_1 \mathbbm{1}[j \in \mathcal{U}] - \delta^2 d \mathbbm{1}[i \in \mathcal{U} \land j \in \mathcal{U} \land i \ne j],$$
deriving the lower bound in \Cref{lm:pinf_delta}. Similarly, applying the H\"older's inequality for the worst-case upper bound, we get 

\begin{align}
    \Delta^U_{ij} &= 
    \begin{cases} 
    0 + & \text{if } i, j \not\in \mathcal{U} \\
    \delta \lVert \rmX_j \rVert_1 + & \text{if } i \in \mathcal{U} \\
    \delta \lVert \rmX_i \rVert_1 + & \text{if } j \in \mathcal{U} \\
    \delta^2d & \text{if } i,j \in \mathcal{U}. \label{eq:pinf_delta_u}
    \end{cases}
\end{align}
Thus, we derive \Cref{lm:pinf_delta} by succinctly writing it as 
$$\Delta_{ij}^U = \delta \lVert \rmX_j \rVert_1 \mathbbm{1}[i \in \mathcal{U}] + \delta \lVert \rmX_i \rVert_1 \mathbbm{1}[j \in \mathcal{U}] + \delta^2 d \mathbbm{1}[i \in \mathcal{U} \land j \in \mathcal{U}].$$ $\hfill \square$

\textbf{Case $(ii)$: Derivation of \Cref{lm:p2_delta} for $p=2$.}  The worst-case lower and upper bounds of $\Delta_{ij}$ for $p=2$ is derived in the similar fashion as $p=\infty$.
Here, the perturbation allows $\lVert\tilde{\rmX}_i -\rmX_i \rVert_{2} \leq \delta$. Hence,

\begin{align}
    \lvert \linnerprod \mathbf\Gamma_i, \rmX_j\rinnerprod\rvert &\leq \lVert\mathbf\Gamma_i \rVert_2 \lVert \rmX_j \rVert_2 \leq \delta \lVert \rmX_j \rVert_2 \nonumber \\
    \lvert \linnerprod \mathbf\Gamma_i, \mathbf\Gamma_j\rinnerprod \rvert &\leq \lVert \mathbf\Gamma_i\rVert_2 \lVert\mathbf\Gamma_j\rVert_2 \leq \delta^2. \label{eq:p2_xxt}
\end{align}

Using \Cref{eq:p2_xxt}, we derive the lower and upper bounds of $\Delta_{ij}$:
\begin{alignat*}{1}
\begin{aligned}
    \Delta^L_{ij} = \begin{cases} 0 + & \text{if } i,j \not\in \mathcal{U} \\
-\delta ||\rmX_j||_2 + & \text{if } i \in \mathcal{U} \\ 
-\delta ||\rmX_i||_2 + & \text{if } j \in \mathcal{U} \\
-\delta^2 & \text{if } i,j \in \mathcal{U}
\end{cases} 
\end{aligned}
\hskip 5em 
\begin{aligned}
    \Delta^U_{ij} = \begin{cases} 0 + & \text{if } i,j \not\in \mathcal{U} \\
\delta ||\rmX_j||_2 + & \text{if } i \in \mathcal{U} \\ 
\delta ||\rmX_i||_2 + & \text{if }  j \in \mathcal{U} \\
\delta^2 & \text{if } i,j \in \mathcal{U}
\end{cases}
\end{aligned}
\end{alignat*}
$\hfill \square$

\textbf{Case $(iii)$: Derivation of \Cref{lm:p1_delta} for $p=1$.}  The worst-case lower and upper bounds of $\Delta_{ij}$ for $p=1$ is derived in the similar fashion as $p=\infty$.
Here, the perturbation allows $\lVert\tilde{\rmX}_i -\rmX_i \rVert_{1} \leq \delta$. Hence,

\begin{align}
    \lvert \linnerprod \mathbf\Gamma_i, \rmX_j\rinnerprod\rvert &\leq \lVert\mathbf\Gamma_i \rVert_1 \lVert \rmX_j \rVert_\infty \leq \delta \lVert \rmX_j \rVert_\infty \nonumber \\
    \lvert \linnerprod \mathbf\Gamma_i, \mathbf\Gamma_j\rinnerprod \rvert &\leq \lVert \mathbf\Gamma_i\rVert_2 \lVert\mathbf\Gamma_j\rVert_2 \leq \lVert \mathbf\Gamma_i\rVert_1 \lVert\mathbf\Gamma_j\rVert_1 \leq \delta^2. \label{eq:p1_xxt}
\end{align}

Using \Cref{eq:p1_xxt}, we derive the lower and upper bounds of $\Delta_{ij}$:
\begin{alignat*}{1}
\begin{aligned}
    \Delta^L_{ij} = \begin{cases} 0 + & \text{if } i,j \not\in \mathcal{U} \\
-\delta ||\rmX_j||_\infty + & \text{if } i \in \mathcal{U} \\ 
-\delta ||\rmX_i||_\infty + & \text{if } j \in \mathcal{U} \\
-\delta^2 & \text{if } i,j \in \mathcal{U}
\end{cases} 
\end{aligned}
\hskip 5em 
\begin{aligned}
    \Delta^U_{ij} = \begin{cases} 0 + & \text{if } i,j \not\in \mathcal{U} \\
\delta ||\rmX_j||_\infty + & \text{if } i \in \mathcal{U} \\ 
\delta ||\rmX_i||_\infty + & \text{if }  j \in \mathcal{U} \\
\delta^2 & \text{if } i,j \in \mathcal{U}
\end{cases}
\end{aligned}
\end{alignat*}
$\hfill \square$

\subsection{Bounding $E_{ij}$ and $\dot{E}_{ij}$ in the NTK} \label{app_sec:bound_ntk}
NTKs for GNNs with non-linear ReLU activation have $\rmE$ and $\dot{\rmE}$ with non-linear $\kappa_1(z)$ and $\kappa_0(z)$ functions in their definitions, respectively. In order to bound the NTK, we need a strategy to bound these quantities as well.
In this section, we discuss our approach to bound $E_{ij}$ and $\dot{E}_{ij}$ through bounding the functions for any GNN with $L$ layers.
For ease of exposition, we ignore the layer indexing for the terms of interest and it is understood from the context.
Recollect that the definitions of $\rmE$ and $\dot\rmE$ are based on $\mathbf\Sigma$, which is a linear combination of $\rmS$ and the previous layer. So, we consider that at this stage, we already have $\mathbf\Sigma$, $\mathbf\Sigma^L$ and $\mathbf\Sigma^U$.
Now, we expand the functions in the definition and write $E_{ij}$ and $\dot{E}_{ij}$ using their corresponding $\mathbf{\Sigma}$ as follows:
\begin{align}
E_{ij} &= \frac{\sqrt{\Sigma_{ii}\Sigma_{jj}}}{\pi} \left(\frac{\Sigma_{ij}}{\sqrt{\Sigma_{ii}\Sigma_{jj}}} \left(\pi - \arccos\left(\frac{\Sigma_{ij}}{\sqrt{\Sigma_{ii}\Sigma_{jj}}}\right)\right) +\sqrt{1-\frac{\Sigma_{ij}^2}{\Sigma_{ii}\Sigma_{jj}}}  \right) \label{eq:Eij} \\ 
\dot{E}_{ij} &= \frac{1}{\pi}  \left(\pi - \arccos\left(\frac{\Sigma_{ij}}{\sqrt{\Sigma_{ii}\Sigma_{jj}}}\right)\right) \label{eq:Edot_ij}
\end{align}
We derive the lower and upper bounds for $E_{ij}$ and $\dot{E}_{ij}$ in \Cref{alg:E_Edot_bounds}.
\begin{algorithm}
\caption{Procedure to compute $E_{ij}^L$, $E_{ij}^U$, $\dot{E}_{ij}^L$ and $\dot{E}_{ij}^U$}\label{alg:E_Edot_bounds}
\begin{algorithmic}
\State Given $\mathbf\Sigma$, $\mathbf\Sigma^L$ and $\mathbf\Sigma^U$
\State Let $s^l = \sqrt{\Sigma_{ii}^L\Sigma_{jj}^L}$, $s^u = \sqrt{\Sigma_{ii}^U\Sigma_{jj}^U}$
\If{$\Sigma_{ij}^L > 0$}
    \State $a^l = \dfrac{\Sigma_{ij}^L}{s^u}$, $a^u =  \dfrac{\Sigma_{ij}^U}{s^l}$
\Else
    \State $a^l = \dfrac{\Sigma_{ij}^L}{s^l}$, $a^u =  \dfrac{\Sigma_{ij}^U}{s^u}$
\EndIf
\If{$\lvert\Sigma_{ij}^U \rvert > \lvert\Sigma_{ij}^L \rvert$}
    \State $b^l = \lp\dfrac{\Sigma_{ij}^L}{s^u}\rp^2$, $b^u =  \lp\dfrac{\Sigma_{ij}^U}{s^l}\rp^2$
\Else
    \State $b^l = \lp\dfrac{\Sigma_{ij}^L}{s^l}\rp^2$, $b^u =  \lp\dfrac{\Sigma_{ij}^U}{s^u}\rp^2$
\EndIf
\State $E_{ij}^L = \frac{s^l}{\pi} \left(a^l \left(\pi - \arccos\left(a^l\right)\right) +\sqrt{1-b^u}  \right)$
\State $E_{ij}^U = \frac{s^u}{\pi} \left(a^u \left(\pi - \arccos\left(a^u\right)\right) +\sqrt{1-b^l}  \right)$
\State $\dot{E}_{ij}^L = \frac{1}{\pi}  \left(\pi - \arccos\left(a^l\right)\right)$
\State $\dot{E}_{ij}^L = \frac{1}{\pi}  \left(\pi - \arccos\left(a^u\right)\right)$
\end{algorithmic}
\end{algorithm}

\subsection{Derivation of \Cref{th:tight_bounds}: NTK Bounds are Tight} \label{app_sec:tightness_ntk}
We analyze the tightness of NTK bounds by deriving conditions on graph $\mG = (\rmS, \rmX)$ when $\Delta_{ij}^L$ and $\Delta_{ij}^U$ are attainable exactly.
As our NTK bounding strategy is based on bounding the adversarial perturbation $\widetilde\rmX\widetilde\rmX^T$ and the non-linear functions $\kappa_0(z)$ and $\kappa_1(z)$, it is easy to see that the bounds with non-linearities cannot be tight. So, we consider only linear GCN ($=$SGC), (A)PPNP and MLP with linear activations. 

Now, we focus on deriving conditions for the given node features $\rmX$ using the classic result on the equality condition of H\"older's inequality \citep{steele2004cauchy}, and then analyze the NTK bounds.
\citet[Fig. 9.1]{steele2004cauchy} shows that the bounds on $\linnerprod \rva, \rvb \rinnerprod$ using the H\"oder's inequality is reached when $\lvert \rva_i \rvert^p = \lvert \rvb_i \rvert^q \frac{\lVert \rva \rVert_p^p}{\lVert \rvb \rVert_q^q}$. Using this, we analyze 
\begin{align}
    \Delta_{ij} = \linnerprod\mathbf{\Gamma}_i,\rmX_j\rinnerprod + \linnerprod\mathbf{\Gamma}_j,\rmX_i\rinnerprod + \linnerprod\mathbf{\Gamma}_i,\mathbf{\Gamma}_j\rinnerprod \label{eq:delta_ij}
\end{align}
in which we call $\linnerprod\mathbf{\Gamma}_i,\mathbf{\Gamma}_j\rinnerprod$ as interaction term. Following this analysis, the tightness of NTK bounds is derived below for $p=\infty$ and $p=2$. %

\textbf{Case $(i)$: $p=\infty$.}
In this case, the feature bounds in \Cref{eq:pinf_xxt} are tight,
$$\forall j, \,\, \rmX_j \ne 0 \text{ and } \forall i,k \,\, \mathbf\Gamma_{ik} = c_i$$ 
where $c_i$ is some constant such that $\lVert \mathbf\Gamma_i \rVert_\infty \leq \delta$ so the perturbation budget is satisfied. %
As a result, the upper bound of $\Delta_{ij}$ in \Cref{lm:pinf_delta} is achieved exactly in the following cases,

\textbf{(a)} \underline{Number of adversarial nodes $=1$}: Here the interaction term in \Cref{eq:delta_ij} is $0$ for all $i$ and $j$. Then for the one adversarial node $i$, there exists $\rmX_j \in  \mathbb{R}_+^d$, one can set $\mathbf\Gamma_i = +\delta \mathbf{1}_d$ to achieve the upper bound.

\textbf{(b)} \underline{Number of adversarial nodes $>1$}: Here the interaction term is $\ne0$ for all the adversarial nodes $i$ and $j$. Then, for the adversarial nodes $i$ and $j$ if there exist $\rmX_i \in \mathbb{R}_+^d$ and $\rmX_j \in  \mathbb{R}_+^d$ then for $\mathbf\Gamma_i = \mathbf\Gamma_j = +\delta \mathbf{1}_d$ upper bounds are achieved.
 
The NTKs with linear activations $\Theta_{ij}$ achieve the upper bound in these cases.
Similarly, the lower bound in  \Cref{lm:pinf_delta} is achieved exactly as discussed in the following, 

\textbf{(a)} \underline{Number of adversarial nodes $=1$}: Here the interaction term in \Cref{eq:delta_ij} is $0$ for all $i$ and $j$. Then for the adversarial node $i$, there exists $\rmX_j \in  \mathbb{R}_+^d$, one can set $\mathbf\Gamma_i = -\delta \mathbf{1}_d$ to achieve the lower bound.

\textbf{(b)} \underline{Number of adversarial nodes $>1$}: Here the interaction term is $\ne0$ for all the adversarial nodes $i$ and $j$. Then, for the adversarial nodes $i$ and $j$ if there exist $\rmX_i \in \mathbb{R}_+^d$ and $\rmX_j \in  \mathbb{R}_-^d$ then for $\mathbf\Gamma_i = -\delta \mathbf{1}_d$ and  $\mathbf\Gamma_j = +\delta \mathbf{1}_d$, 

leading to tight lower bounds of \Cref{lm:pinf_delta}. The lower and upper tight bounds of $\Delta$ together lead to tight NTK bounds for linear activations.
Note that there is no need to impose any structural restriction on the graph $\rmS$ to achieve the tight bounds for NTK.

\textbf{Case $(ii)$: $p=2$.}
In this case, the feature bounds in \Cref{eq:p2_xxt} are tight,
$$\forall i,j, \,\, \rmX_j \text{ and } \mathbf\Gamma_{i} \text{ are linearly dependent}$$ 
and $\lVert \mathbf\Gamma_i \rVert_2 \leq \delta$ so the perturbation budget is satisfied.
As a result, the upper bound of $\Delta_{ij}$ in \Cref{lm:p2_delta} is achieved exactly in the following,

\textbf{(a)} \underline{Number of adversarial nodes $=1$}: Here the interaction term in \Cref{eq:delta_ij} is $0$ for all $i$ and $j$. Then for the one adversarial node $i$, and any $\rmX_j \in  \mathbb{R}^d$, one can set $\mathbf\Gamma_i = +\delta \frac{\rmX_j}{\lVert \rmX_j \rVert_2}$ to achieve the upper bound.

\textbf{(b)} \underline{Number of adversarial nodes $>1$}: Here the interaction term is $\ne0$ for all the adversarial nodes $i$ and $j$. Then, for the adversarial nodes $i$ and $j$, if there exist $\rmX_i \in \mathbb{R}_+^d$ and $\rmX_j \in \mathbb{R}_+^d$ are linearly dependent, then for $\mathbf\Gamma_i = +\delta \frac{\rmX_j}{\lVert \rmX_j \rVert_2}$ and $\mathbf\Gamma_j = +\delta \frac{\rmX_i}{\lVert \rmX_i \rVert_2}$ tight upper bound is achieved.
 
The NTKs with linear activations $\Theta_{ij}$ achieve the worst-case upper bound in these cases.
Similarly, the lower bound in  \Cref{lm:p2_delta} is achieved exactly as discussed in the following, 

\textbf{(a)} \underline{Number of adversarial nodes $=1$}: Here the interaction term in \Cref{eq:delta_ij} is $0$ for all $i$ and $j$. Then for the adversarial node $i$, and any $\rmX_j \in  \mathbb{R}^d$, one can set $\mathbf\Gamma_i = -\delta \frac{\rmX_j}{\lVert \rmX_j \rVert_2}$ to achieve the lower bound.

\textbf{(b)} \underline{Number of adversarial nodes $>1$}: Here the interaction term is $\ne0$ for all the adversarial nodes $i$ and $j$. Then, for the adversarial nodes $i$ and $j$, if there exist $\rmX_i \in \mathbb{R}_+^d$ and $\rmX_j \in  \mathbb{R}_-^d$ are linearly dependent, then for $\mathbf\Gamma_i = -\delta \frac{\rmX_j}{\lVert \rmX_j \rVert_2}$ and  $\mathbf\Gamma_i = +\delta \frac{\rmX_i}{\lVert \rmX_j \rVert_2}$, 

leading to tight lower bounds of \Cref{lm:p2_delta}. The lower and upper tight bounds of $\Delta$ together leads to tight NTK bounds for linear activations.
Note that there is no need to impose any structural restriction on the graph $\rmS$ to achieve the tight bounds for NTK, same as the $p=\infty$ case.
We further note that only one instance of achieving the worst-case bound is stated, and one can construct similar cases, for example by considering opposite signs for the features and perturbations.

\textbf{Case $(iii)$: $p=1$.}
In this case, the feature bounds in \Cref{eq:p1_xxt} are tight,
$$\forall j, \,\, \rmX_j \text{ and } \forall i,k \,\, \mathbf\Gamma_{ik} = c \mathbbm{1}[k=\arg\max_{k'}\rmX_j]$$ 
where $c=\delta$ to satisfy $\lVert \mathbf\Gamma_i \rVert_1 \leq \delta$. %
As a result, the upper bound of $\Delta_{ij}$ in \Cref{lm:p1_delta} is achieved exactly in the following cases,

\textbf{(a)} \underline{Number of adversarial nodes $=1$}: Here the interaction term in \Cref{eq:delta_ij} is $0$ for all $i$ and $j$. Then for the one adversarial node $i$, for any $j$, $\rmX_j \ne 0$ and $\arg\max \rmX_j = k$, one can set $\mathbf\Gamma_{ik} = \sign(X_{jk})\delta$ and $\mathbf{\Gamma}_{ik'}=0 \,\, \forall \,\, k' \ne k$ to achieve the upper bound.

\textbf{(b)} \underline{Number of adversarial nodes $>1$}: Here the interaction term is $\ne0$ for all the adversarial nodes $i$ and $j$. Then, for the adversarial nodes $i$ and $j$ if there exist $\arg\max\rmX_i = \arg\max \rmX_j = k$ and $\sign(X_{ik})=\sign(X_{jk})$ then for $\mathbf\Gamma_{ik} = \mathbf\Gamma_{jk} = \sign(X_{ik})\delta$ and $\forall k' \ne k, \,\, \mathbf\Gamma_{ik'} = \mathbf\Gamma_{jk'} = 0$ upper bounds are achieved.
 
The NTKs with linear activations $\Theta_{ij}$ achieve the upper bound in these cases.
Similarly, the lower bound in  \Cref{lm:p1_delta} is achieved exactly as discussed in the following, 

\textbf{(a)} \underline{Number of adversarial nodes $=1$}: Here the interaction term in \Cref{eq:delta_ij} is $0$ for all $i$ and $j$. Then for the one adversarial node $i$, for any $j$, $\rmX_j \ne 0$ and $\arg\max \rmX_j = k$, one can set $\mathbf\Gamma_{ik} = -\sign(X_{jk})\delta$ and $\mathbf{\Gamma}_{ik'}=0 \,\, \forall \,\, k' \ne k$ to achieve the lower bound.

\textbf{(b)} \underline{Number of adversarial nodes $>1$}: Here the interaction term is $\ne0$ for all the adversarial nodes $i$ and $j$. Then, for the adversarial nodes $i$ and $j$ if there exist $\arg\max\rmX_i = \arg\max \rmX_j = k$ and $\sign(X_{ik})=-\sign(X_{jk})$ then for $\mathbf\Gamma_{ik} = -\sign(X_{ik})\delta$, $\mathbf\Gamma_{jk} = -\sign(X_{jk})\delta$ and $\forall k' \ne k, \,\, \mathbf\Gamma_{ik'} = \mathbf\Gamma_{jk'} = 0$,

leading to tight lower bounds of \Cref{lm:p1_delta}. The lower and upper tight bounds of $\Delta$ together leads to tight NTK bounds for linear activations.
Again, there is no need to impose any structural restriction on the graph $\rmS$ to achieve the tight bounds for NTK.
$\hfill \square$

\section{Multi-Class Certification} \label{sec_app:multi_class}
In this section, we discuss the certification for multi-class. We abstract the NN and work with NTK here. Hence, to do multi-class classification using SVM with NTK, we choose One-Vs-All strategy, where we learn $K$ classifiers. Formally, we learn $\rvbeta^1, \ldots, \rvbeta^K$ which has corresponding duals $\rvalpha^1, \ldots, \rvalpha^K$. In order to learn $\rvbeta^c$, all samples with class label $c$ are assumed to be positive and the rest negative.
Assume from hence on that for all $c$, $\rvbeta^c$ corresponds to the optimal solution with the corresponding dual $\rvalpha^c$.
Then the prediction for a node $t$ is $c^* = \argmax_c \hat{p}_t^c$ where $\hat{p}_t^c = \sum_{i=1}^m y_i \alpha^c_i \Theta_{ti}$ where $\mTheta$ is the NTK matrix.

Given this, we propose a simple extension of our binary certification where to certify a node $t$ as provably robust, we minimize the MILP objective in \Cref{thm:milp} for the predicted class $c^*$ and maximize the objective for the remaining $K-1$ classes. Finally, certify $t$ to be provably robust only if the objective for $c^*$ remains maximum. Formally, we state the objective below.

\begin{theorem}
Node $t$ with original predicted class $c^*$ is certifiably robust against adversary $\mathcal{A}$ if $c^\prime = c^*$ where $c^\prime$ is defined in the following. 
Using the MILP $P(\mTheta)$ in \Cref{thm:milp}, we define %

\begin{align}
    P(\mTheta)_c &:= P(\mTheta) \text{ using $\rvalpha^c$, with the only change in obj. to } (-1)^{\mathbbm{1}[c\ne c^*]} \sum_{i=1}^{m} y_i Z_{ti} \nonumber\\
    c_t^\prime &= \argmax_{c \in [K] } P(\mTheta)_c.
\end{align}
\label{thm:milp_multicls}
\end{theorem}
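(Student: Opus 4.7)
The plan is to extend the binary MILP certificate of \Cref{thm:milp} to the multi-class One-Vs-All setting by solving one MILP per class --- using the class-specific labels $y_i^c\in\{\pm1\}$ and duals $\rvalpha^c$ --- and then combining the $K$ resulting bounds to control the argmax decision rule $c^*=\argmax_c \hat p_t^c$.

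The first step is to reinterpret each $P(\mTheta)_c$ in terms of the adversarial score $\tilde p_t^c(\mtTheta)=\sum_i y_i^c\alpha_i^c\tTheta_{ti}$ of the $c$-th sub-classifier. The relaxation argument of \Cref{thm:milp} applies verbatim to $\rvalpha^c$: the MILP feasible region contains the bilevel feasible region (via the interval constraints on $Z_{ij}$), so its optimum lower-bounds the true optimum of its objective. For $c=c^*$ the sign factor is $+1$, giving $P(\mTheta)_{c^*}\le \min_{\mtTheta\in\mathcal{A}(\mTheta)}\tilde p_t^{c^*}(\mtTheta)$ --- a worst-case score for the true class. For $c\ne c^*$ the sign factor is $-1$, converting a minimization into a maximization and yielding $-P(\mTheta)_c\ge \max_{\mtTheta\in\mathcal{A}(\mTheta)}\tilde p_t^c(\mtTheta)$ --- a best-case score for each competitor. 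Certifying the multi-class prediction then reduces to $\min_{\mtTheta}\tilde p_t^{c^*}\ge \max_{\mtTheta}\tilde p_t^c$ for every $c\ne c^*$, which is implied by the pairwise sum condition $P(\mTheta)_{c^*}+P(\mTheta)_c\ge 0$.

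The second step is to bridge the argmax hypothesis $c^\prime=c^*$, i.e.\ $P(\mTheta)_{c^*}\ge P(\mTheta)_c$ for all $c\ne c^*$, to the pairwise condition above. I expect this to be the main obstacle: the two conditions coincide exactly when each sub-classifier individually preserves its original predictive sign under the adversary (i.e.\ $P(\mTheta)_c\ge 0$ for every $c$), which is the natural multi-class analogue of the sign-preservation guarantee in \Cref{thm:milp}. A clean proof would therefore either verify $P(\mTheta)_c\ge 0$ for every $c$ as an auxiliary step inside the certificate, or strengthen the theorem to require the pairwise sum condition in place of $\argmax$ outright. Beyond this, the remaining work is routine: $K$ independent instantiations of \Cref{thm:milp} using the same NTK interval bounds $\tTheta^L_{ij},\tTheta^U_{ij}$ from \Cref{sec:bounds}, one per class-specific dual $\rvalpha^c$ and label vector $\{y_i^c\}_{i\in[m]}$.
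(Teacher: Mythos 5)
Your first step is exactly the argument the paper intends --- and, in fact, essentially all the paper offers: \Cref{sec_app:multi_class} states \Cref{thm:milp_multicls} with no formal proof beyond the sentence that one minimizes the objective for the predicted class $c^*$ and maximizes it for the other $K-1$ classes, so the per-class relaxation bounds (each MILP feasible set containing the true bilevel feasible set, $P(\mTheta)_{c^*}$ lower-bounding the worst-case score of $c^*$, the negated problems controlling the best-case scores of the competitors, and the adversary decoupled across the $K$ sub-problems) is the whole content. The obstacle you flag in your second step is a genuine ambiguity in the statement, but it is notational rather than mathematical. If $P(\mTheta)_c$ for $c\ne c^*$ is read literally as the minimized value of $-\sum_i y_i^c Z_{ti}$, i.e.\ as $-\max_{\mtTheta}\tilde p_t^c$, then the test $\argmax_c P(\mTheta)_c = c^*$ compares $\min\tilde p_t^{c^*}$ against $-\max\tilde p_t^c$ and is indeed not the right condition. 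Under the reading consistent with the surrounding text (``maximize the objective for the remaining $K-1$ classes''), however, $P(\mTheta)_c$ for $c\ne c^*$ denotes the recovered maximum $\max_{\mtTheta}\tilde p_t^c$ itself, and the $\argmax$ test is then precisely your decoupled pairwise condition $\min_{\mtTheta}\tilde p_t^{c^*}\ge\max_{\mtTheta}\tilde p_t^c$ for all $c\ne c^*$; no auxiliary step is needed. One correction to your proposed patch: under the literal reading, requiring $P(\mTheta)_c\ge 0$ for every $c$ does imply the pairwise sum condition and hence soundness, but it does not make that condition \emph{coincide} with the $\argmax$ test (e.g.\ $P(\mTheta)_{c^*}=0.1$ and $P(\mTheta)_c=0.5$ satisfy the sum condition but not the $\argmax$), so the clean fix is your second option --- replace the $\argmax$ by $P(\mTheta)_{c^*}+P(\mTheta)_c\ge 0$ for all $c\ne c^*$. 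Finally, as you note, the labels entering $P(\mTheta)_c$ must be the one-vs-all labels $y_i^c$, which the theorem's notation suppresses.
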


\section{Proof of \Cref{prop:inner_prob}} \label{app_sec:slater}

We restate \Cref{prop:inner_prob}.

\begin{proposition_app}
Problem $\dual(\mtTheta)$ given by \Cref{eq:svm_dual} is convex and satisfies strong Slater's constraint.
Consequently, the single-level optimization problem $\single(\mTheta)$ arising from $\upper(\mTheta)$ by replacing $\rvalpha \in \mathcal{S}(\mtTheta)$ with \Cref{eq:kkt_stat,eq:kkt_dual,eq:kkt_compl} has the same globally optimal solutions as $\upper(\mTheta)$.
\label{prop:inner_prob_app}
\end{proposition_app}

Given any $\mtTheta \in \mathcal{A}(\mQ)$. We prove two lemmas, leading us towards proving \Cref{prop:inner_prob}.

\begin{lemma} \label{app_lemma:convex}
    Problem $\dual(\mtTheta)$ is convex.
\end{lemma}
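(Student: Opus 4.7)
The plan is to show that $P_1(\tilde{\mathbf{Q}})$ is a quadratic program with a positive semi-definite Hessian over a convex feasible region, which yields convexity by a standard argument.

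First, I would note that the feasible set $\{\boldsymbol{\alpha} : 0 \le \alpha_i \le C,\ \forall i \in [m]\}$ is a box, hence convex. The linear term $-\sum_{i=1}^m \alpha_i$ in the objective is trivially convex. It therefore remains to verify convexity of the quadratic term
\[
g(\boldsymbol{\alpha}) \;=\; \tfrac{1}{2}\sum_{i=1}^m\sum_{j=1}^m y_i y_j \alpha_i \alpha_j \tilde{Q}_{ij} \;=\; \tfrac{1}{2}\boldsymbol{\alpha}^\top \mathbf{H}\boldsymbol{\alpha},
\]
where $\mathbf{H}\in\mathbb{R}^{m\times m}$ has entries $H_{ij} = y_i y_j \tilde{Q}_{ij}$. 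Writing $\mathbf{Y} = \operatorname{diag}(y_1,\dots,y_m)$, one has $\mathbf{H} = \mathbf{Y}\tilde{\mathbf{Q}}_{[m]}\mathbf{Y}$, where $\tilde{\mathbf{Q}}_{[m]}$ denotes the principal $m\times m$ submatrix of $\tilde{\mathbf{Q}}$ indexed by the labelled nodes.

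The key step is to show that $\tilde{\mathbf{Q}}_{[m]}$ is positive semi-definite. By construction, $\tilde{\mathbf{Q}}$ is the NTK matrix associated with a (G)NN evaluated on some perturbed graph $\tilde{\mathcal{G}} \in \mathcal{A}(\mathcal{G})$; equivalently (using the feature-map formulation $f_\theta^{SVM}(\mathbf{x}_i) = \langle \boldsymbol{\beta}, \Phi(\mathbf{x}_i)\rangle$ introduced in Section~\ref{sec:preliminaries}), we have $\tilde{Q}_{ij} = \langle \Phi(\tilde{\mathbf{x}}_i),\Phi(\tilde{\mathbf{x}}_j)\rangle$ for some feature map $\Phi$. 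Thus $\tilde{\mathbf{Q}}_{[m]}$ is a Gram matrix and hence PSD; any principal submatrix of a PSD matrix is PSD. Since congruence preserves positive semi-definiteness, $\mathbf{H} = \mathbf{Y}\tilde{\mathbf{Q}}_{[m]}\mathbf{Y}$ is PSD as well. Consequently $g$ is a convex quadratic, the full objective is convex, and $P_1(\tilde{\mathbf{Q}})$ is a convex optimization problem.

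The only subtle point is the PSD-ness of $\tilde{\mathbf{Q}}_{[m]}$ for \emph{every} admissible perturbation $\tilde{\mathbf{Q}}\in\mathcal{A}(\mathbf{Q})$, as opposed to the clean kernel $\mathbf{Q}$ only; I do not anticipate real difficulty here because $\mathcal{A}(\mathbf{Q})$ is defined as the set of kernels induced by \emph{actual} perturbed graphs $\tilde{\mathcal{G}}\in\mathcal{A}(\mathcal{G})$, so each such $\tilde{\mathbf{Q}}$ is a genuine NTK/Gram matrix and inherits positive semi-definiteness automatically. Everything else is bookkeeping about convex combinations of convex functions over a convex set.
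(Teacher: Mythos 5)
Your proof is correct but takes a genuinely different route from the paper's. The paper does not verify convexity from first principles: it invokes the textbook result that the standard soft-margin SVM dual \emph{with} bias term, $\dualb(\mtTheta)$ (which carries the extra affine constraint $\sum_i \alpha_i y_i = 0$), is convex, and then observes that $\dual(\mtTheta)$ is the same problem with that one equality constraint removed, so the defining conditions for a convex program remain satisfied. You instead verify convexity directly: the box $0 \le \alpha_i \le C$ is convex, the linear term is convex, and the quadratic form has Hessian $\mathbf{Y}\mtTheta_{[m]}\mathbf{Y}$ with $\mathbf{Y} = \operatorname{diag}(y_1,\dots,y_m)$, which is PSD by congruence because $\mtTheta_{[m]}$ is a principal submatrix of a Gram (kernel) matrix. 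Your version is more self-contained and has the virtue of making explicit exactly where positive semi-definiteness of the kernel enters — a fact the paper leaves buried inside the cited textbook result — and you correctly flag and resolve the one subtle point, namely that every $\mtTheta \in \mathcal{A}(\mTheta)$ arises from an actual perturbed graph and is therefore a genuine NTK, hence PSD. (Note this would \emph{not} hold for the interval-relaxed matrices appearing later in the MILP, but the lemma only concerns $\mathcal{A}(\mTheta)$, so your argument is sound as stated.) The paper's route is shorter and defers the linear algebra to a reference; yours is the one a reader could check without opening another book.
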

\begin{proof}
    The dual problem $\dualb(\mtQ)$ associated do an SVM with bias term reads
\begin{align}
     \dualb(\mtTheta):\ \min_{\rvalpha} - \sum_{i=1}^m \alpha_i + \dfrac{1}{2} \sum_{i=1}^m \sum_{j=1}^m y_i y_j \alpha_i \alpha_j \tTheta_{i j} \,\, \text{s.t.} \sum_{i=1}^m\alpha_iy_i=0\ \,\, 0 \leq \alpha_i \leq C \,\, \forall i \in [m] 
     \label{eq:svm_dual_bias}
\end{align}
    It is a known textbook result that $\dualb(\mtQ)$ is convex and we refer to \citet{mohri2018mlfoundations} for a proof. A necessary and sufficient condition for an optimization problem to be convex is that the objective function as well as all inequality constraints are convex and the equality constraints affine functions. Furthermore, the domain of the variable over which is optimized must be a convex set. As removing the bias term of an SVM results in a dual problem $\dual(\mtTheta)$ which is equivalent to $\dualb(\mtTheta)$ only with the constraint $\sum_{i=1}^m\alpha_iy_i=0$ removed, the necessary and sufficient conditions for convexity stay fulfilled.
\end{proof}

Now, we define strong Slater's condition for $\dual(\mtQ)$ embedded in the upper-level problem $\upper(\mQ)$ defined in \Cref{eq:bilevel_svm}, which we from here on will call strong Slater's constraint qualification \citep{dempe2012bilevel}.

\begin{definition}[Slater's CQ]
    The lower-level convex optimization problem $\dual(\mtQ)$ fulfills strong Slater's Constraint Qualification, if for any upper-level feasible $\mtQ \in \mathcal{A}(\mQ)$, there exists a point $\rvalpha(\mtQ)$ in the feasible set of $\dual(\mtQ)$ such that no constraint in $\dual(\mtQ)$ is active, i.e. $0 < \alpha(\mtQ)_i < C$ for all $i \in [m]$.
\end{definition}

\begin{lemma} \label{app_eq:lemma_slater}
    Problem $\dual(\mtTheta)$ fulfills strong Slater's constraint qualification.
\end{lemma}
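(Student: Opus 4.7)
The plan is immediate: exhibit an interior feasible point of $\dual(\mtQ)$ that does not depend on $\mtQ$. First I would observe that the feasible region of $\dual(\mtQ)$ is simply the closed box $[0,C]^m$ with no equality constraints --- this is precisely why dropping the SVM bias term matters, since the linear constraint $\sum_i \alpha_i y_i = 0$ appearing in $\dualb(\mtQ)$ is now absent. Because the regularization parameter $C \in \mathbb{R}_+$ is strictly positive, the point $\rvalpha(\mtQ) := (C/2, \ldots, C/2)^\top \in \mathbb{R}^m$ satisfies $0 < \alpha(\mtQ)_i < C$ for every $i \in [m]$, so none of the $2m$ box inequalities is active. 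Since this choice is defined without any reference to $\mtQ$, the same witness serves uniformly for every upper-level feasible $\mtQ \in \mathcal{A}(\mQ)$, which is exactly the requirement in the preceding definition of strong Slater's CQ.

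There is no real obstacle: the only thing to rule out would be an equality constraint forcing $\rvalpha$ onto the boundary of the box, and $\dual(\mtQ)$ has no equality constraints at all, so the proof collapses to a one-line verification. Combined with the convexity of $\dual(\mtQ)$ established in Lemma~1, I would then invoke the standard convex-optimization result that convexity plus Slater's CQ makes the KKT conditions (\Cref{eq:kkt_stat,eq:kkt_dual,eq:kkt_compl}) both necessary and sufficient for a point $\rvalpha$ to lie in $\mathcal{S}(\mtQ)$. Substituting this KKT system for the set-membership constraint $\rvalpha \in \mathcal{S}(\mtQ)$ in $\upper(\mQ)$ then yields $\single(\mQ)$ with identical global optima for every $\mtQ \in \mathcal{A}(\mQ)$, which completes the proof of \Cref{prop:inner_prob_app}.
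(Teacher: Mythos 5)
Your proof is correct, and it is actually cleaner than the paper's. The key observation you make---that the feasible set of $\dual(\mtTheta)$ is just the box $[0,C]^m$, independent of $\mtTheta$, so the single point $(C/2,\ldots,C/2)$ is a strictly feasible witness uniformly over all $\mtTheta \in \mathcal{A}(\mTheta)$---is all that is needed, since $C>0$ by assumption. The paper instead gives a more roundabout constructive argument: it takes an \emph{optimal} solution $\rvalpha$ of $\dual(\mtTheta)$, assumes non-degeneracy, perturbs the coordinates sitting at $0$ or at $C$ inward by $\epsilon = \alpha_j/(m+1)$ (where $\alpha_j$ is the smallest positive entry), and only falls back on the $C/2$ witness in the degenerate case. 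That construction is valid but does unnecessary work: Slater's CQ asks only for \emph{some} strictly feasible point, not one near the optimum, and since the constraints of $\dual(\mtTheta)$ do not involve $\mtTheta$ at all, no case analysis is required. Your version buys uniformity and brevity at no cost; the paper's version buys nothing extra here. Your concluding step---combining convexity and Slater's CQ with the result of Dempe et al.\ to replace $\rvalpha \in \mathcal{S}(\mtTheta)$ by the KKT system---matches the paper's route to \Cref{prop:inner_prob_app} exactly.
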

\begin{proof}
    We prove \Cref{app_eq:lemma_slater} through a constructive proof. Given any upper-level feasible $\mtQ \in \mathcal{A}(\mQ)$. Let $\rvalpha$ be an optimal solution to $\dual(\mtQ)$. We restrict ourselves to cases, where $\dual(\mtQ)$ is non-degenerate, i.e. the optimal solution to the SVM $f_\theta^{SVM}$ corresponds to a weight vector $\rvbeta\ne\bm{0}$. Then, at least for one index $i\in[m]$ it must hold that $\rvalpha_i > 0$.

    Assume that $j$ is the index in $[m]$ with the smallest $\rvalpha_j > 0$. Let $\epsilon=\rvalpha_j / {m+1} > 0$. Now, we construct a new $\rvalpha'$ from $\rvalpha$ by for each $i \in [m]$ setting:
    \begin{itemize}
        \item If $\alpha_i = 0$, set $\alpha'_i = \epsilon$.
        \item If $\alpha_i = C$, set $\alpha'_i = C - \epsilon$.
    \end{itemize}

    The new $\rvalpha'$ fulfills $0 < \rvalpha'(\mtQ)_i < C$ for all $i \in [m]$. If $\dual(\mtTheta)$ is degenerate, set $\rvalpha'(\mtQ)_i = C/2$ for all $i \in [m]$. This concludes the proof.
\end{proof}

\citep{dempe2012bilevel} establish that any bilevel optimization problem $U$ whose lower-level problem $L$ is convex and fulfills strong Slater's constraint qualification for any upper-level feasible point has the same global solutions as another problem defined by replacing the lower-level problem $L$ in $U$ with $L$'s Karash Kuhn Tucker conditions. This, together with \Cref{app_lemma:convex,app_eq:lemma_slater} concludes the proof for \Cref{prop:inner_prob_app}.
$\hfill\square$

\section{Setting big-$M$ constraints} \label{app_sec:bigm}

We repeat \Cref{prop:big_ms} for readability.

\begin{proposition}[Big-$M$'s]
    Replacing the complementary slackness constraints \Cref{eq:kkt_compl} in $\single(\mQ)$ with the big-$M$ constraints given in \Cref{eq:bigms} does not cut away solution values of $\single(\mQ)$, if for any $i\in[m]$, the big-$M$ values fulfill the following conditions. For notational simplicity $j:\text{Condition}(j)$ denotes $j \in \{j \in [m]: \text{Condition}(j)\}$.

    If $y_i=1$ then
    \begin{align}
        M_{u_i} &\ge \sum_{j: y_{j}=1 \land \tilde{Q}_{ij}^U \ge 0} C \tQ_{ij}^U - \sum_{j: y_{j}=-1 \land \tQ_{ij}^L \le 0} C \tQ_{ij}^L - 1 \label{app_eq:m1} \\
        M_{v_i} &\ge \sum_{j: y_{j}=-1 \land \tQ_{ij}^U \ge 0} C \tQ_{ij}^U - \sum_{j: y_{j}=1 \land \tQ_{ij}^L \le 0} C \tQ_{ij}^L + 1 \label{app_eq:m2}
    \end{align}

    If $y_i=-1$ then
    \begin{align}
        M_{u_i} &\ge \sum_{j: y_{j}=-1 \land \tQ_{ij}^U \ge 0} C \tQ_{ij}^U - \sum_{j: y_{j}=1 \land \tQ_{ij}^L \le 0} C \tQ_{ij}^L - 1 \label{app_eq:m3}\\
        M_{v_i} &\ge \sum_{j: y_{j}=1 \land \tQ_{ij}^U \ge 0} C \tQ_{ij}^U - \sum_{j: y_{j}=-1 \land \tQ_{ij}^L \le 0} C \tQ_{ij}^L + 1 \label{app_eq:m4}
    \end{align}

\end{proposition}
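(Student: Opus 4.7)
The plan is to show that every optimal solution of $\single(\mTheta)$ can be completed to a feasible point of the big-$M$ system by choosing the binary slacks $\rvs,\rvt$ appropriately, provided the big-$M$ values dominate the slacks $u_i^*, v_i^*$ arising at optimality. Since the objective of $\single(\mTheta)$ depends only on $(\rvalpha,\mtTheta)$ and is identical to the objective of $P(\mTheta)$, matching the feasible $(\rvalpha,\mtTheta)$-projections preserves the optimal value. Hence it is enough to derive, for every $i\in[m]$, an upper bound on the values $u_i^*$ and $v_i^*$ that are compatible with any optimal $(\rvalpha^*,\mtTheta^*)$.

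The core structural observation is that, because $C>0$, the two complementary-slackness equalities \Cref{eq:kkt_compl} cannot be satisfied with $u_i^*>0$ and $v_i^*>0$ simultaneously: the first forces $\alpha_i^*=0$ and the second forces $\alpha_i^*=C$. Combined with the stationarity equation \Cref{eq:kkt_stat}, this pins down the slacks as
\begin{align*}
u_i^* \;=\; \max\bigl\{0,\; \textstyle\sum_{j=1}^m y_iy_j\alpha_j^*\tTheta_{ij}^* - 1\bigr\}, \qquad
v_i^* \;=\; \max\bigl\{0,\; 1 - \textstyle\sum_{j=1}^m y_iy_j\alpha_j^*\tTheta_{ij}^*\bigr\},
\end{align*}
at every KKT point of $\single(\mTheta)$, with at most one of the two nonzero.

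A sufficient condition on the big-$M$ values is therefore that they dominate these two expressions over the feasible set. I would relax this further, maximising over the decoupled box $\rvalpha\in[0,C]^m$ together with the element-wise bounds $\tTheta_{ij}\in[\tTheta_{ij}^L,\tTheta_{ij}^U]$, a superset of $\mathcal{A}(\mTheta)$. This relaxed problem separates across $j$, reducing to $m$ independent bilinear maximisations of $\pm\alpha_j\tTheta_{ij}$ on a box, each of which is immediate: when the coefficient sign is $+1$ the maximum is $C\tTheta_{ij}^U$ if $\tTheta_{ij}^U\ge 0$ and $0$ otherwise; when it is $-1$ the maximum is $-C\tTheta_{ij}^L$ if $\tTheta_{ij}^L\le 0$ and $0$ otherwise. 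Aggregating these contributions according to the sign of $y_iy_j$ (for $u_i$) or $-y_iy_j$ (for $v_i$), and adding $\mp 1$, reproduces exactly the right-hand sides of \Cref{app_eq:m1,app_eq:m2,app_eq:m3,app_eq:m4}.

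The mathematics is elementary; the main obstacle is bookkeeping the four sign cases arising from $y_i\in\{\pm 1\}$ crossed with the choice of slack ($u_i$ vs.\ $v_i$). I would carry out the case $y_i=+1$ in detail and recover $y_i=-1$ by the symmetry $y_j\mapsto -y_j$, which is precisely why \Cref{app_eq:m3,app_eq:m4} are obtained from \Cref{app_eq:m1,app_eq:m2} by swapping the two label-partition sums. I would also flag explicitly that the maximisation is performed over the relaxed element-wise box rather than the true $\mathcal{A}(\mTheta)$, so the resulting $M_{u_i},M_{v_i}$ may be conservative but are always valid, i.e.\ no optimal value of $\single(\mTheta)$ is cut off; this is consistent with the relaxation already used when passing from $\single(\mTheta)$ to $P(\mTheta)$ via the $Z_{ij}$-variables, and explains the ``tightest formulation'' remark attached to the proposition.
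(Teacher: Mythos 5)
Your proposal is correct and takes essentially the same route as the paper's proof: bound the stationarity sum $\sum_{j} y_i y_j \alpha_j \tTheta_{ij}$ over the relaxed box $\alpha_j \in [0,C]$, $\tTheta_{ij} \in [\tTheta_{ij}^L, \tTheta_{ij}^U]$ to obtain $UB$ and $LB$, then set $M_{u_i} \ge UB - 1$ and $M_{v_i} \ge 1 - LB$, with the term-wise sign analysis reproducing the four stated conditions. Your observation that complementary slackness together with $C>0$ pins the slacks down exactly to $u_i^*=\max\{0,T-1\}$ and $v_i^*=\max\{0,1-T\}$ is a slightly sharper formulation of the paper's remark that $u_i,v_i$ may be chosen freely subject to stationarity, but the substance of the argument is identical.
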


\begin{proof}
    Denote by $UB$ an upper bound to $\sum_{j=1}^{m} y_iy_j Z_{ij}$ and by $LB$ a lower bound to $\sum_{j=1}^{m} y_iy_j Z_{ij}$. The existence of these bounds follows from $y_i$ and $y_j\in\{-1,1\}$ and $Z_{ij}=\alpha_j\tQ_{ij}$ with $0 \le \alpha_j \le C$ and $\tQ_{ij}^L \le \tQ_{ij} \le \tQ_{ij}^U$, i.e. the boundedness of all variables. 
    
    $u_i$ and $v_i$ need to be able to be set such that $\sum_{j=1}^{m} y_iy_j Z_{ij} - u_i + v_i = 1$ (see \Cref{eq:kkt_stat}) can be satisfied given any $\rvalpha^*$ and $\mtTheta^*$ part of an optimal solution to $\single(\mQ)$. By using $UB$ and $LB$ we get the following inequalities:

    \begin{equation} \label{app_eq:ub}
        UB - u_i + v_i \ge 1
    \end{equation}

    and 
    
    \begin{equation} \label{app_eq:lb}
        LB - u_i + v_i \le 1
    \end{equation}

    Denote $\sum_{j=1}^{m} y_iy_j Z_{ij}$ by $T$.
    Thus, if $T \ge 1$, setting $v_i=0$ and $u_i \le UB-1 \land u_i \ge LB-1$ allows to satisfy \Cref{eq:kkt_stat}. If $T < 1$, setting $u_i=0$ and $v_i \le 1-LB \land v_i \ge 1-UB$ allows to satisfy \Cref{eq:kkt_stat}. Note that for a given $i$, we are free to set $u_i$ and $v_i$ to arbitrary positive values, as long as they satisfy \Cref{eq:kkt_stat}, as they don't affect the optimal solution value nor the values of other variables. 

    Thus, adding $u_i \le UB-1$ and $v_i \le 1-LB$ as constraints to $\single(\mQ)$ does not affect its optimal solution. Consequently, setting $M_{u_i}\ge UB-1$ and $M_{v_i}\ge 1-LB$, are valid big-$M$ constraints in the mixed-integer reformulation of the complementary slackness constraints $\Cref{eq:kkt_compl}$. The $UB$ and $LB$ values depend on the sign of $y_i$, $y_j$ and the bounds on $\alpha_j$ and $\tQ_{ij}$ and the right terms in \Cref{app_eq:m1,app_eq:m2,app_eq:m3,app_eq:m4} represent the respective $UB$ and $LB$ arising. This concludes the proof.
\end{proof}
\section{Additional Experimental Details} \label{app_sec:exp_details}

\subsection{Datasets} \label{app_sec:datasets}

The CSBM implementation is taken from \Citep{gosch2023revisiting} publicly released under MIT license. Cora-ML taken from \Citep{bojchevski2018deep} is also released under MIT license.
Cora-ML has $2995$ nodes with $8158$ edges, and $7$ classes. It traditionally comes with a $2879$ dimensional discrete bag-of-words node feature embedding from the paper abstract. As we focus on continuous perturbation models, we use the abstracts provided by \Citep{bojchevski2018deep} together with all-MiniLM-L6-v2, a modern sentence transformer from \url{https://huggingface.co/sentence-transformers/all-MiniLM-L6-v2} to generate 384-dimensional continuous node-feature embeddings. From Cora-ML, we extract the subgraph defined by the two most largest classes, remove singleton nodes, and call the resulting binary-classification dataset Cora-MLb. It has $1235$ nodes and $2601$ edges. WikiCSb, created from extracting the two largest classes from WikiCS, is the largest used dataset with $4660$ nodes and $72806$ edges.

\subsubsection{CSBM Sampling Scheme} \label{app_sec:csbm_sampling}

A CSBM graph $\rmG$ with $n$ nodes is iteratively sampled as: (a) Sample label $y_i \sim Bernoulli(1/2) \,\, \forall i \in [n]$; (b) Sample feature vectors $\rmX_i | y_i \sim \mathcal{N}(y_i \rvmu, \sigma^2 \rmI_d)$; (c) Sample adjacency $A_{ij} \sim Bernoulli(p)$ if $y_i=y_j$, $A_{ij} \sim Bernoulli(q)$ otherwise, and $A_{ji} = A_{ij}$. Following \citet{gosch2023revisiting} we set $p,q$ through the maximum likelihood fit to Cora \citep{sen2008cora} ($p=3.17$\%, $q=0.74$\%), and $\rvmu$ element-wise to $K\sigma/2\sqrt{d}$ with $d=\lfloor n/\ln^2(n)\rfloor$, $\sigma=1$, and $K=1.5$, resulting in an interesting classification scheme where both graph structure and features are necessary for good generalization. We sample $n=200$ and choose $40$ nodes per class for training, leaving $120$ unlabeled nodes.

\subsection{Code, Hyperparameters, and Architectural Details} \label{app_sec:architectures}

\textbf{Code.} 
We provide the code base with datasets and configuration files to reproduce the experiments in \url{https://github.com/saper0/qpcert}. The randomness in the experiments is controlled by setting fixed seeds which are given in the experiment configuration files. %

\textbf{Hyperparameters and Architectural Details.} We fix $\rmS$ to $\rmS_{\text{row}}$ for GCN, SGC, GCN Skip-$\alpha$ and GCN Skip-PC following \citet{sabanayagam2023analysis}, and to $\rmS_{\text{sym}}$ for APPNP following its original implementation. From the GNN definitions \Cref{sec_app:architecture_def}, the graph convolution for GIN is $(1+\epsilon)\rmI + \rmA$, for GraphSAGE is $\rmI+\rmD^{-1}\rmA$. 

For APGD, we use the reported hyperparameters from \citet{croce2020reliable}.

We outline the hyperparameters for Cora-MLb, for CSBM all parameters are mentioned above except the Skip-$\alpha$ for GCN Skip-$\alpha$ which was set to $0.2$. 

\begin{itemize}
    \item GCN (Row Norm.): $C=0.75$
    \item GCN (Sym. Norm.): $C=1$
    \item SGC (Row Norm.): $C=0.75$
    \item SGC (Sym. Norm.): $C=0.75$
    \item APPNP (Sym. Norm.): $C=1$, $\alpha=0.1$
    \item MLP: $C=0.5$
    \item GCN Skip-$\alpha$: $C=1$, $\alpha=0.1$
    \item GCN Skippc: $C=0.5$
    \item GraphSAGE: $C=0.1$
    \item GIN: $C=0.05$
\end{itemize}

For Cora-ML, the following hyperparameters were set:

\begin{itemize}
    \item GCN (Row Norm.): $C=0.05$
    \item SGC (Row Norm.): $C=0.0575$
    \item APPNP (Sym. Norm.): $C=0.15$, $\alpha=0.2$
    \item MLP: $C=0.004$
    \item GCN Skip-$\alpha$: $C=0.04$, $\alpha=0.2$
    \item GCN Skippc: $C=0.03$
    \item GraphSAGE: $C=0.002$
    \item GIN: $C=0.0125$
\end{itemize}

For WikiCSb:

\begin{itemize}
    \item GCN (Row Norm.): $C=1$
    \item SGC (Row Norm.): $C=5$
    \item APPNP (Sym. Norm.): $C=0.75$, $\alpha=0.1$
    \item MLP: $C=0.175$
    \item GCN Skip-$\alpha$: $C=0.1$, $\alpha=0.1$
    \item GCN Skippc: $C=1$
    \item GraphSAGE: $C=0.175$
\end{itemize}

For the halfmoon dataset:

\begin{itemize}
    \item MLP: $C=0.025$
\end{itemize}

\subsection{Hardware} \label{app_sec:hardware}

Experiments are run on CPU using Gurobi on an internal cluster. Experiments for CSBM, Cora-MLb and WikiCSb do not require more than $15$GB of RAM. Cora-ML experiments do not require more than $20$GB of RAM. The time to certify a node depends on the size of MILP as well as the structure of the concrete problem. On our hardware, for CSBM and Cora-MLb certifying one node typically takes several seconds up to one minute on a single CPU. For Cora-ML, certifying a node can take between one minute and several hours ($\le 10$) using two CPUs depending on the difficulty of the associated MILP.

\section{Additional Results: CSBM}
\label{app_exp}
 \subsection{Evaluating QPCert and Importance of Graph Information} \label{app_sec:csbm_results_qpcerteval}

\Cref{app_fig:csbm_pl_all} shows the same result as \Cref{fig:coramlbin_poison_labeled_all} from \Cref{sec:results_evalqpcert} establishing that including graph information boosts worst-case robustness in CSBM too. This also shows that the result is not dataset-specific.
In \Cref{fig:app_csbm_pl_bl}, we provide the remaining settings in correspondence to \Cref{fig:gnn_cert_linf}, Poison Labeled $PL$ and Backdoor Labeled $BL$ for CSBM.
Similarly, the heatmaps showing the certified accuracy gain with respect to MLP is presented in \Cref{fig:graph_info_app_linf}.

\begin{figure}[h]
    \centering
    \subcaptionbox{CSBM: $PL$, $p_{adv}=1$\label{app_fig:csbm_pl_all}}{\includegraphics[width=0.32\linewidth]{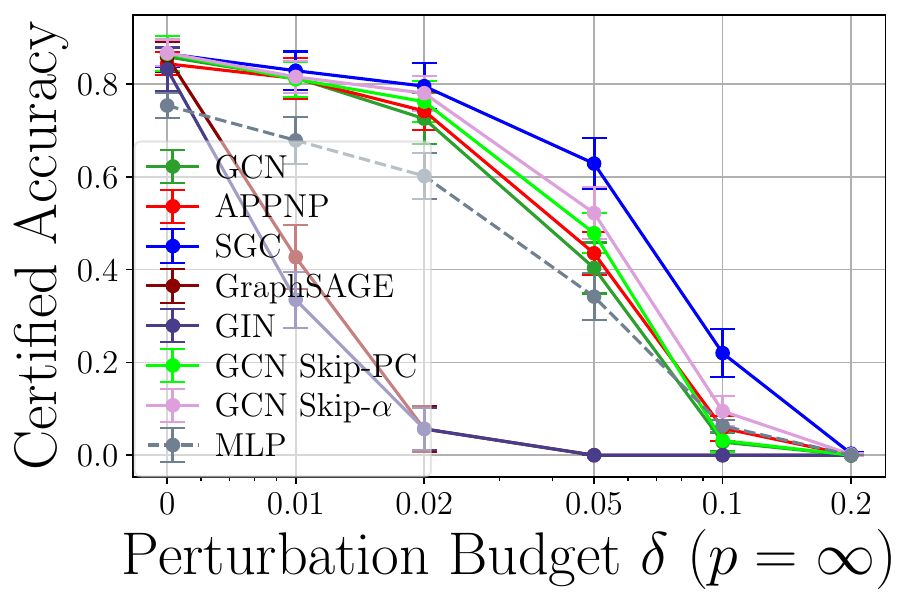}}
    \hfill
    \subcaptionbox{CSBM: $PL$, $p_{adv} = 0.2$\label{fig:csbm_pl}}{\includegraphics[width=0.32\linewidth]{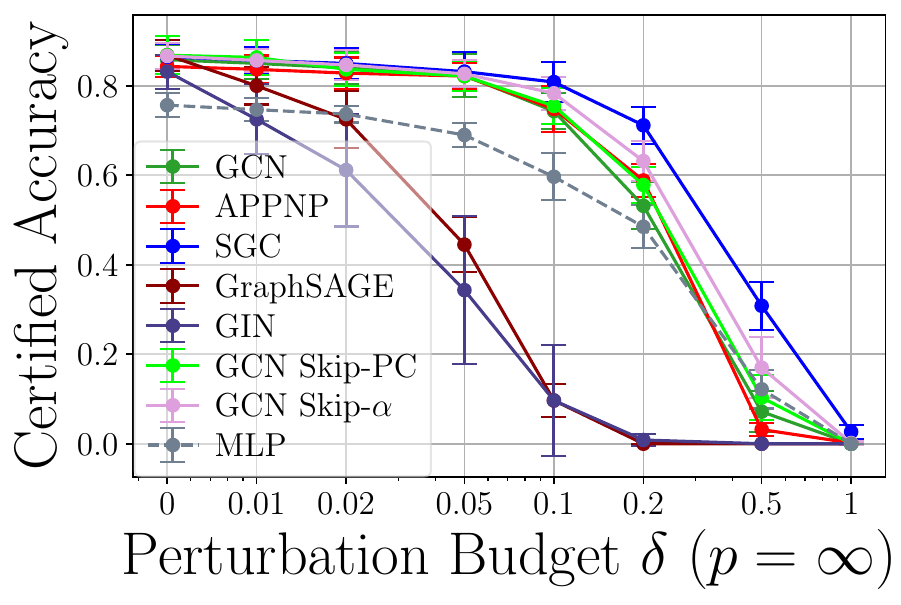}}
    \hfill
    \subcaptionbox{CSBM: $BL$, $p_{adv} = 0.2$\label{fig:csbm_bl}}{\includegraphics[width=0.32\linewidth]{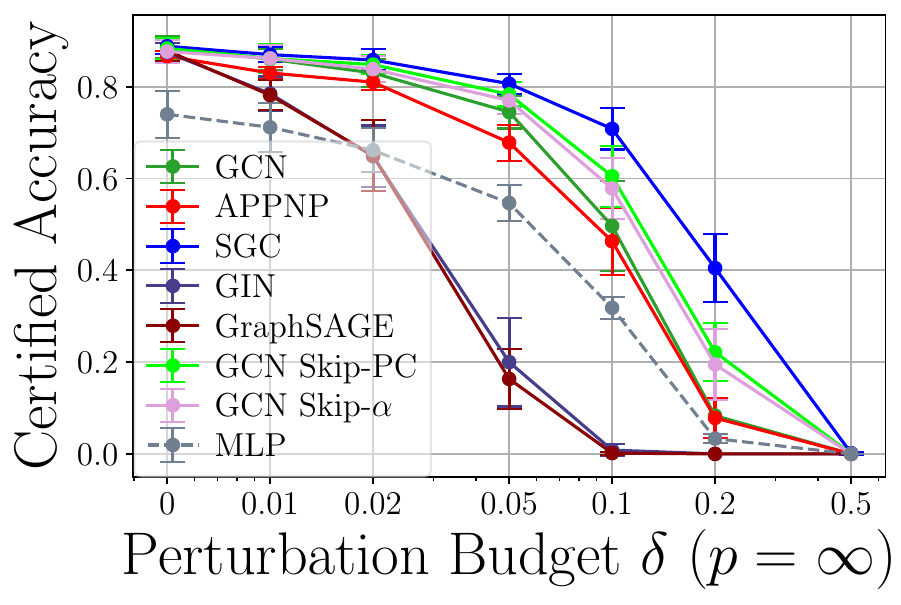}}
    \caption{Certifiable robustness for different (G)NNs in Poisoning Labeled (PL) and Backdoor Labeled (BL) setting. }
    \label{fig:app_csbm_pl_bl}
\end{figure}

\begin{figure}[h]
    \centering
    \subcaptionbox{GCN}
    {\includegraphics[width=0.24\linewidth]{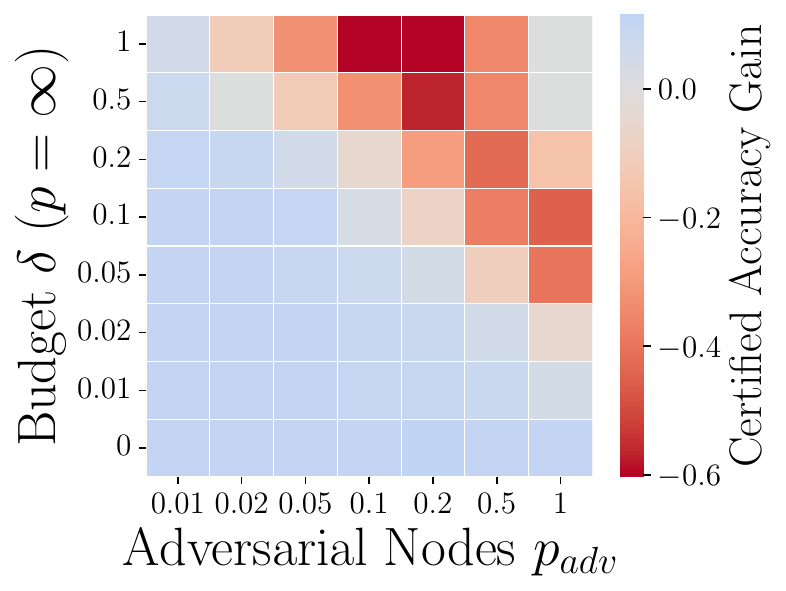}}
    \hfill
    \subcaptionbox{SGC}
    {\includegraphics[width=0.24\linewidth]{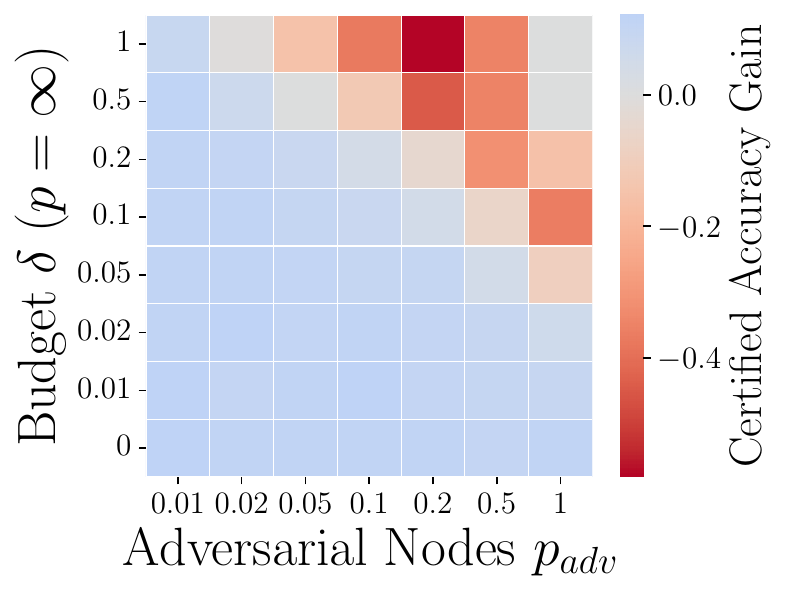}}
    \hfill
    \subcaptionbox{APPNP}
    {\includegraphics[width=0.24\linewidth]{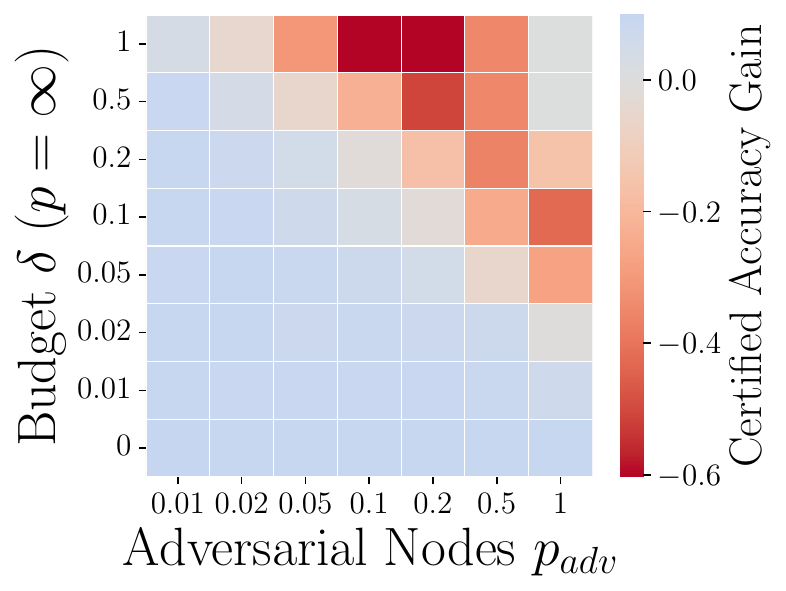}}
    \hfill
    \subcaptionbox{GIN}
    {\includegraphics[width=0.24\linewidth]{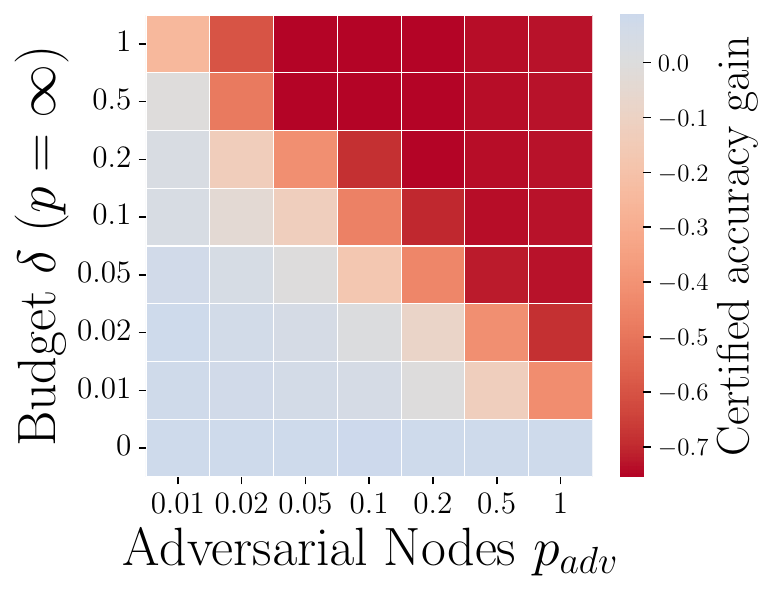}}
    \hfill
    \subcaptionbox{GraphSAGE}
    {\includegraphics[width=0.24\linewidth]{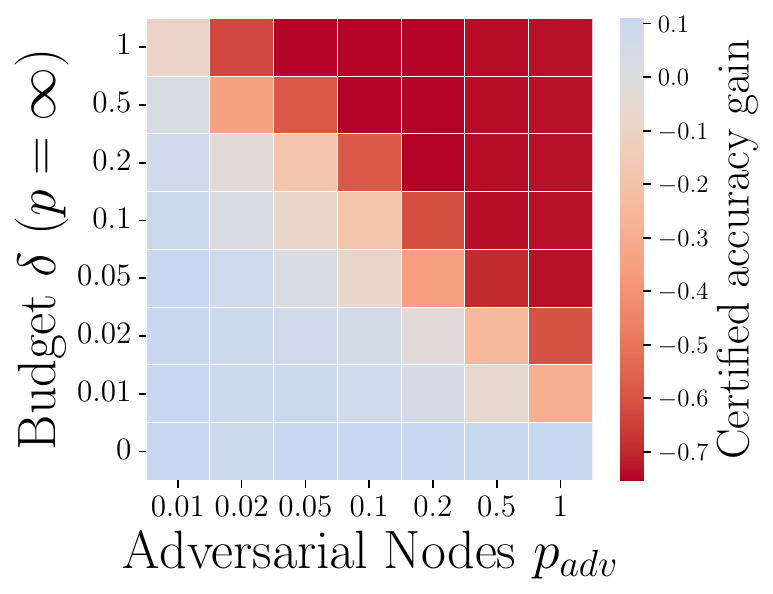}}
    \hfill
    \subcaptionbox{GCN Skip-PC}
    {\includegraphics[width=0.24\linewidth]{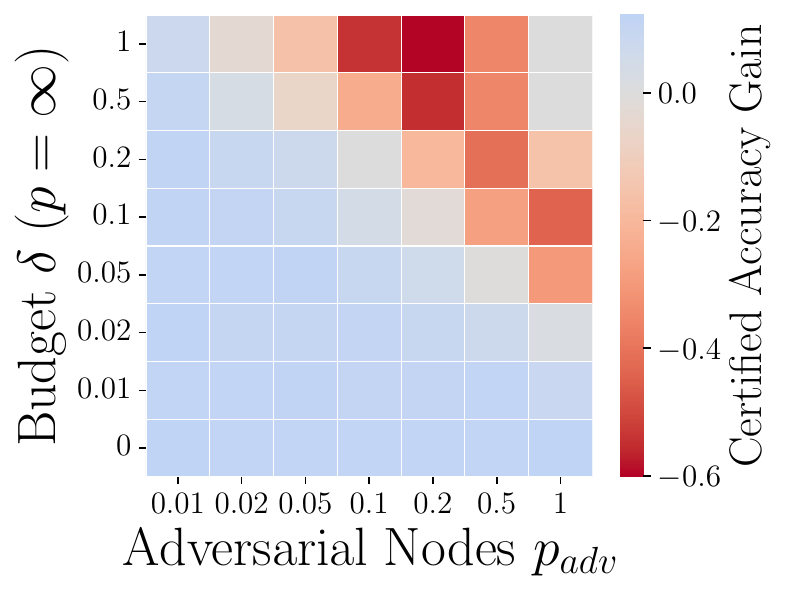}}
    \hfill
    \subcaptionbox{GCN Skip-$\alpha$}
    {\includegraphics[width=0.24\linewidth]{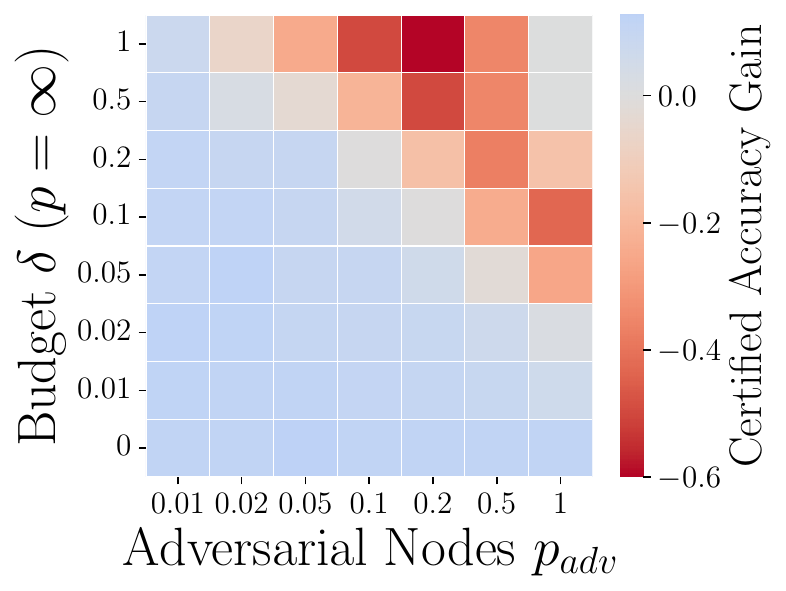}}
    \caption{Heatmaps of different GNNs for Poison Unlabeled ($PU$) setting.}
    \label{fig:graph_info_app_linf}
\end{figure}

\FloatBarrier

\subsection{On Graph Connectivity and Architectural Insights}
\label{exp:app_sparsity}
We present the sparsity analysis for SGC and APPNP in (a) and (b) of \Cref{fig:app_sparsity_appnp_linf_sbm}, showing a similar observation to GCN in \Cref{exp:app_sparsity}.
The APPNP $\alpha$ analysis for $PU$ and $PL$ are provided in (c) and (d) of \Cref{fig:app_sparsity_appnp_linf_sbm}, showing the inflection point in $PU$ but not in $PL$.
Additionally, we show the influence of depth, linear vs ReLU, regularization $C$ and row vs symmetric normalized adjacency in \Cref{fig:arch_insight}.

\begin{figure}[H]
    \centering
    \subcaptionbox{SGC, $PU$}
    {\includegraphics[width=0.245\linewidth]{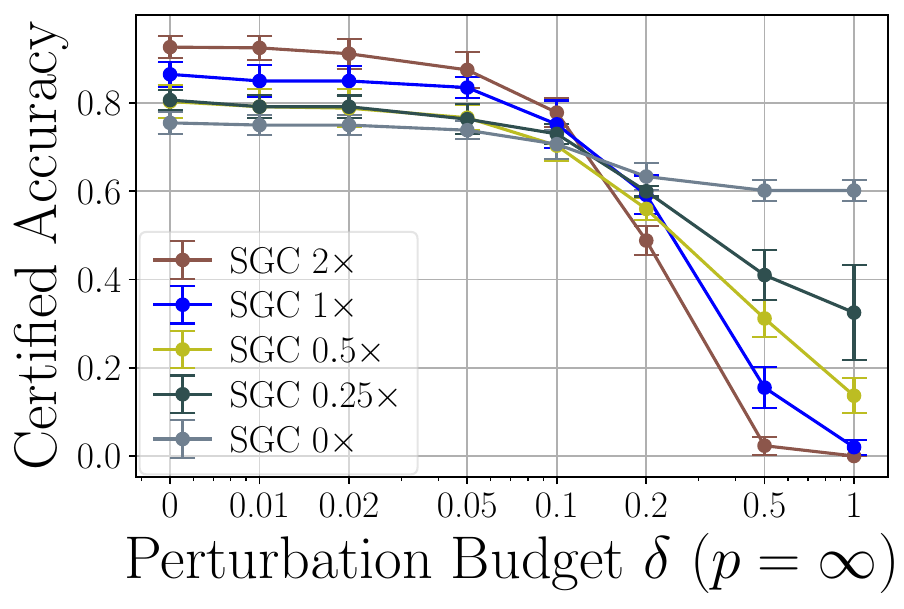}}
    \hfill
    \subcaptionbox{APPNP, $PU$}
    {\includegraphics[width=0.245\linewidth]{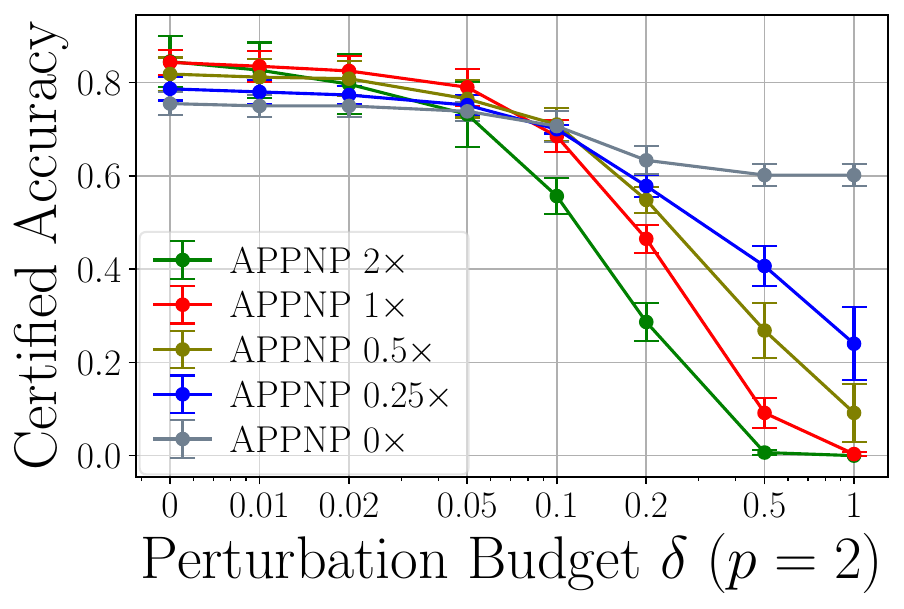}}
    \hfill
    \subcaptionbox{APPNP, $PU$}
    {\includegraphics[width=0.245\linewidth]{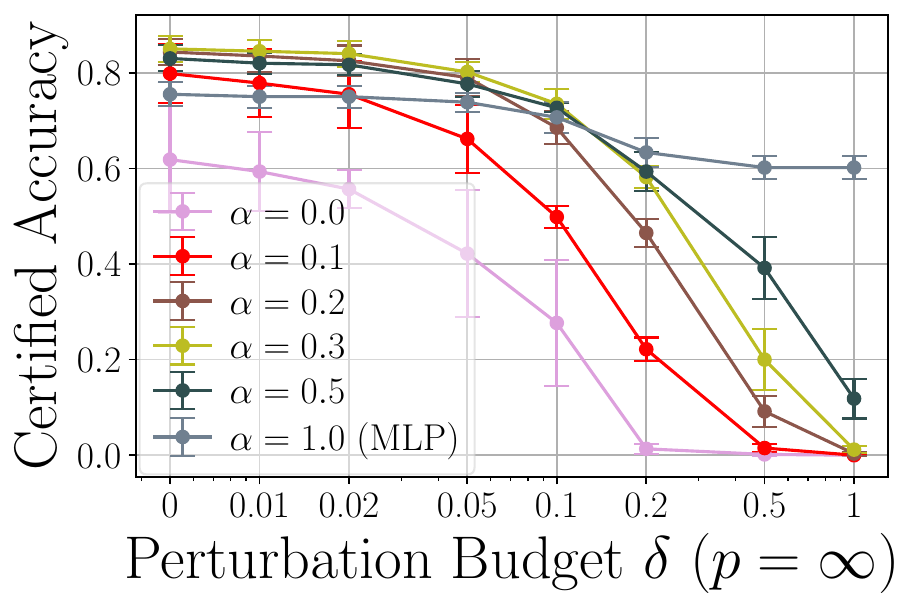}}
    \hfill
    \subcaptionbox{APPNP, $PL$}
    {\includegraphics[width=0.245\linewidth]{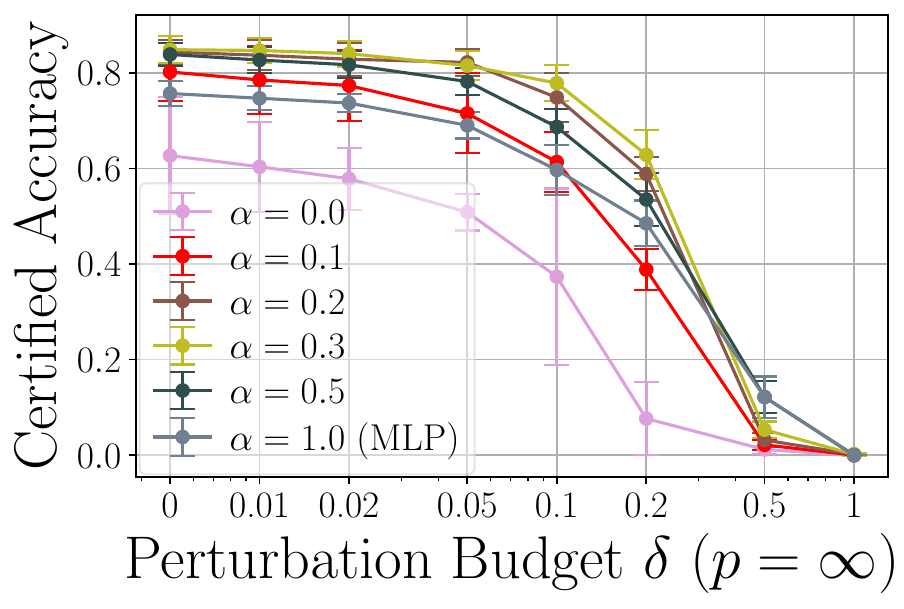}}
    \caption{(a)-(b): Graph connectivity analysis where $c\times$ is $cp$ and $cq$ in CSBM model. GCN is provided in \Cref{fig:sparsity}. (c)-(d): APPNP analysis based on $\alpha$.}
    \label{fig:app_sparsity_appnp_linf_sbm}
\end{figure}

\begin{figure}[h!]
    \centering
    \subcaptionbox{$\rmS$ in GCN, SGC}
    {\includegraphics[width=0.245\linewidth]{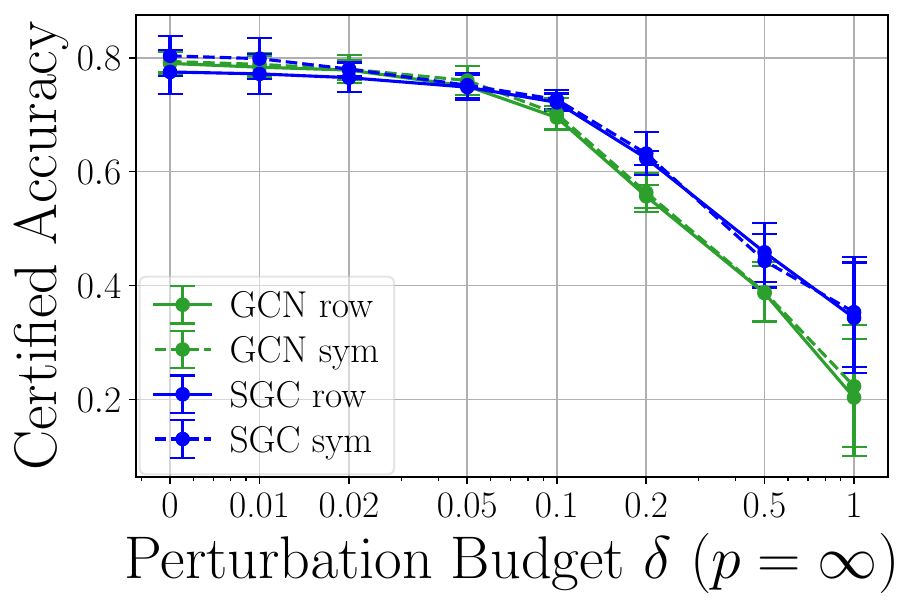}}
    \hfill
    \subcaptionbox{Hidden layers $L$}
    {\includegraphics[width=0.245\linewidth]{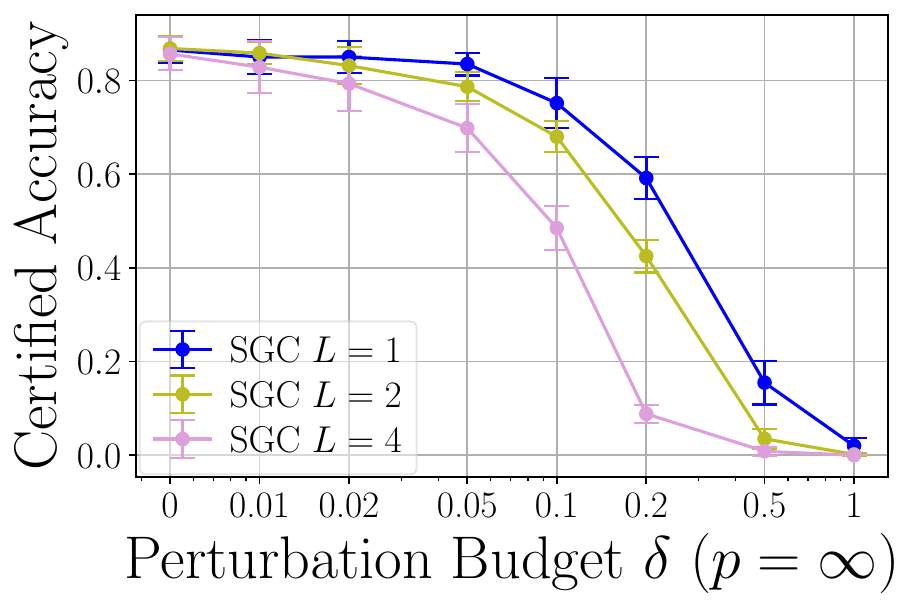}}
    \hfill
    \subcaptionbox{Skip connections}
    {\includegraphics[width=0.245\linewidth]{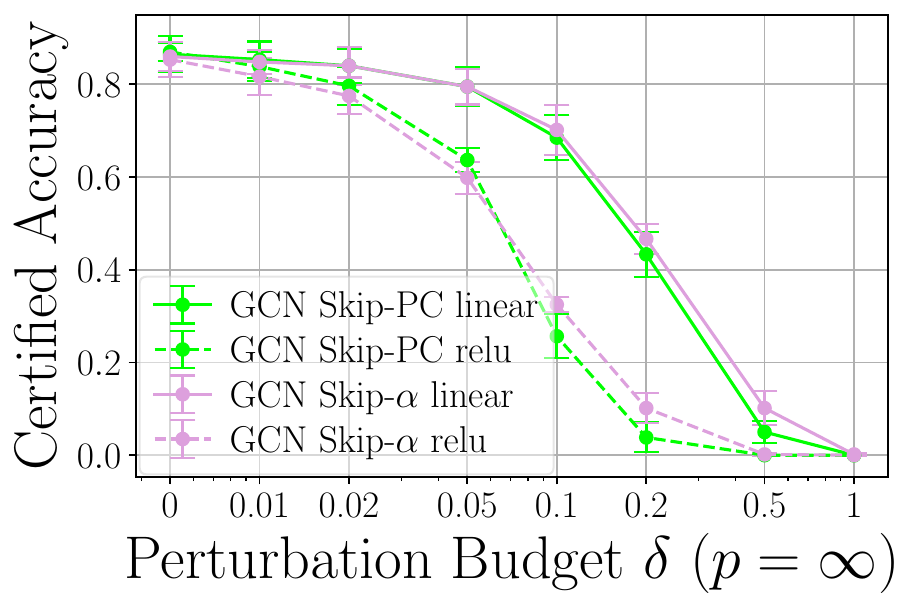}}
    \hfill
    \subcaptionbox{Regularization}
    {\includegraphics[width=0.245\linewidth]{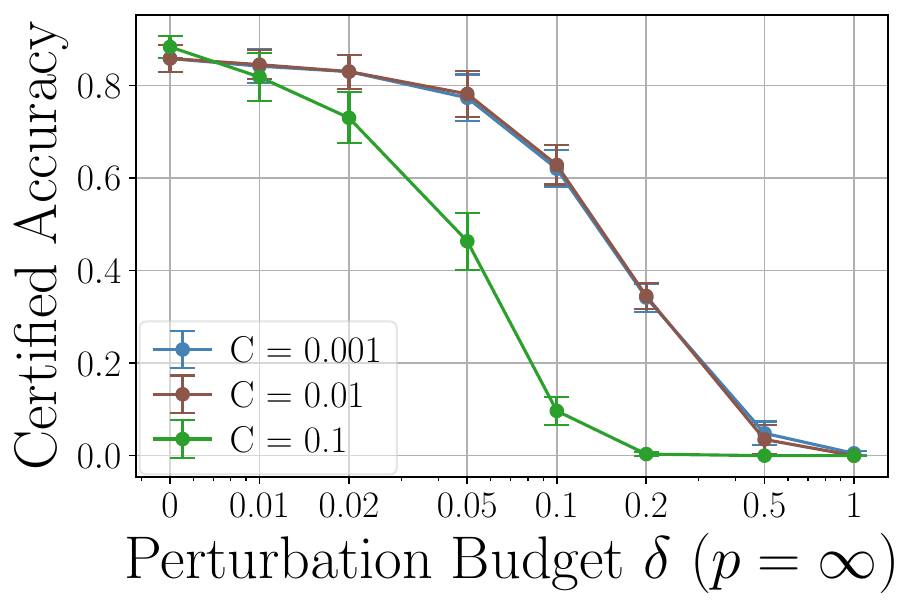}}
    \caption{(a): Symmetric and row normalized adjacencies as the choice for $\rmS$ in GCN and SGC. (b): Effect of number of hidden layers $L$. (c): Linear and relu for the Skip-PC and Skip-$\alpha$. (d): Regularization $C$ in GCN. All experiments in $PU$ setting and $p_{adv}=0.2$. }
    \label{fig:arch_insight}
\end{figure}

\FloatBarrier
\subsection{Tightness of QPCert}
\label{app:exp_csbm_attack}
First, we present the tightness of QPCert in \Cref{fig:cert_tightness_p,fig:cert_tightness_b} evaluated with our strongest employed attacks for each setting: For graph poisoning (\Cref{fig:cert_tightness_p}), APGD is employed with direct differentiation through the learning process (QPLayer) for the $PU$ setting in \Cref{fig:attack_pu} and for the $PL$ setting in \Cref{fig:attack_pl}. For the backdoor attack setting (\Cref{fig:cert_tightness_b}), first a poisoning attack is carried out with APGD (QPLayer) and then, the respective test node is additionally attacked with APGD in an evasion setting. \Cref{fig:attack_bu} shows the result for the $BU$ setting and \Cref{fig:attack_bl} for the $BL$ setting. Interstingly, QPCert seems to be more tight in a backdoor setting than in a pure poisoning setting. However, this could also be explained by the fact that an evasion attack is easier to perform than a poisoning attack and thus, APGD potentially provided lower upper bounds to the actual robustness than for the pure poisoning setting. Another interesting observation is that for the backdoor settings, the rankings of the GNNs regarding certified robustness seems to roughly correspond to the robust accuracies obtained by the backdoor attack.

In \Cref{fig:cert_tightness_p_comp} performing a gradient-based attack (APGD) using either exact gradients with QPLayer or meta-gradients obtained through MetaAttack's surrogate model is compared. For both the PL (\Cref{fig:attack_pl_comp}) and PU (\Cref{fig:attack_pu_comp}) setting using exact gradients results in a lower upper bound to the robust accuracy (i.e., a stronger attack). Thus, we use the exact gradients from QPLayer to measure the tightness of QPCert. Meta-gradients from MetaAttack are obtained by adapting Algorithm 2 in \citep{zugner2019metaattack} to feature perturbations, through setting a maximum number of iterations as the stop criterion and instead of choosing an edge with maximal score, update the feature matrix with the meta-gradient using APGD. In MetaAttack, $\lambda$ trading of the self-supervised with the training loss is set to $0.5$. Interestingly, for small budgets, MetaAttack can lead to the opposite intended effect. Exemplary, for a GCN in the PL setting with $\delta=0.1$, the generalization performance is slightly increased. This indicates that for small perturbation budgets, the meta-gradient of MetaAttack's surrogate model does not transfer well to the infinite-width networks. However, for larger budgets, MetaAttack still provides a strong, albeit weaker attack than exact gradients. In Figure \Cref{fig:cert_tightness_b_comp} we compare performing the above mentioned gradient-based backdoor attack with the simple backdoor strategy proposed by \citet{xing2024cleanlabel} with a trigger size of $0.5$. We observe that \citet{xing2024cleanlabel}'s attack is significantly weaker compared to the gradient-based attack and only starts to reduce accuracy of the models for high attack budgets. This can be explained by several observations: For small $\ell_p$-budgets, the backdoor trigger is often distorted in the backdoored nodes by having to project the perturbation back into the allowed $\ell_p$-ball and secondly, the attack is simple, static and not adaptive. Concretely, it simply copies certain features to other nodes without considering the attacked model. We want to note that similar to MetaAttack, for small budgets, for BU we can observe for MLP that the change actually results in slightly higher generalization of the model under attack, showing that for small budgets, the backdoor strategy in \citet{xing2024cleanlabel} is not effective.

\begin{figure}[h]
    \centering
    \subcaptionbox{CSBM: $PL$ \label{fig:attack_pl}}{\includegraphics[width=0.4\linewidth]{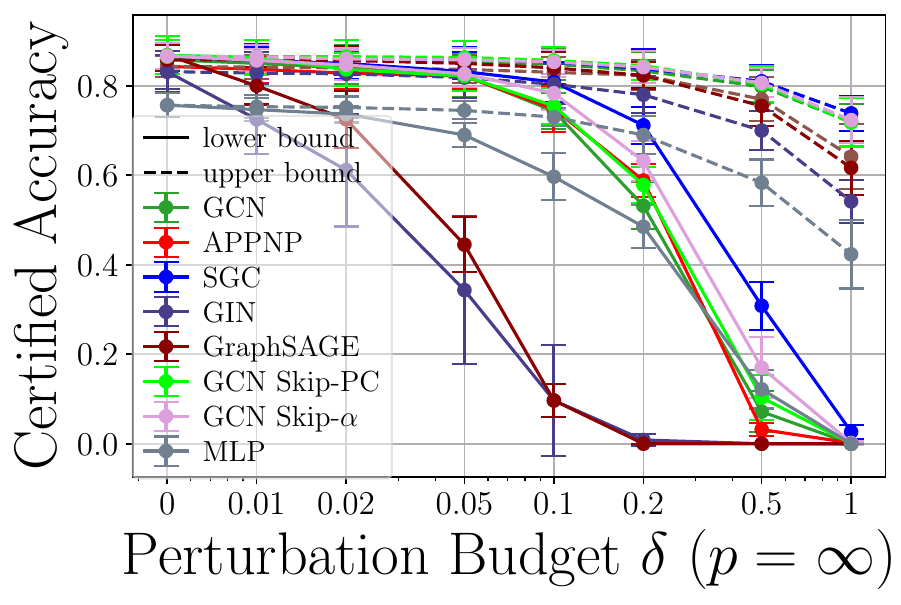}}
    \subcaptionbox{CSBM: $PU$\label{fig:attack_pu}}{\includegraphics[width=0.4\linewidth]{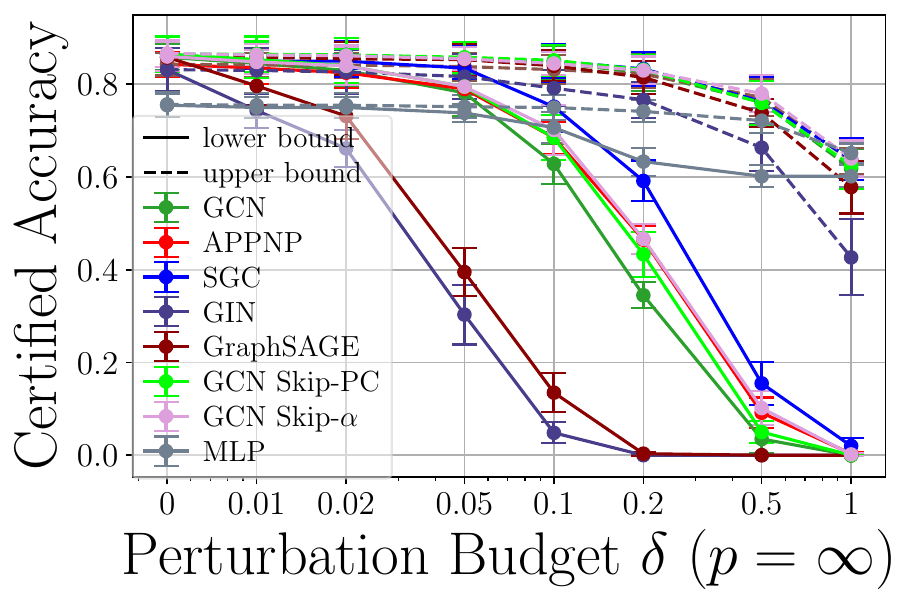}}
    \caption{Tightness of our certificate for data poisoning. Both $PU$ and $PL$ with $p_{adv}=0.2$ evaluated with APGD (QPLayer).}
    \label{fig:cert_tightness_p}
\end{figure}

\begin{figure}[h]
    \centering
    \subcaptionbox{CSBM: $BL$ \label{fig:attack_bl}}{\includegraphics[width=0.4\linewidth]{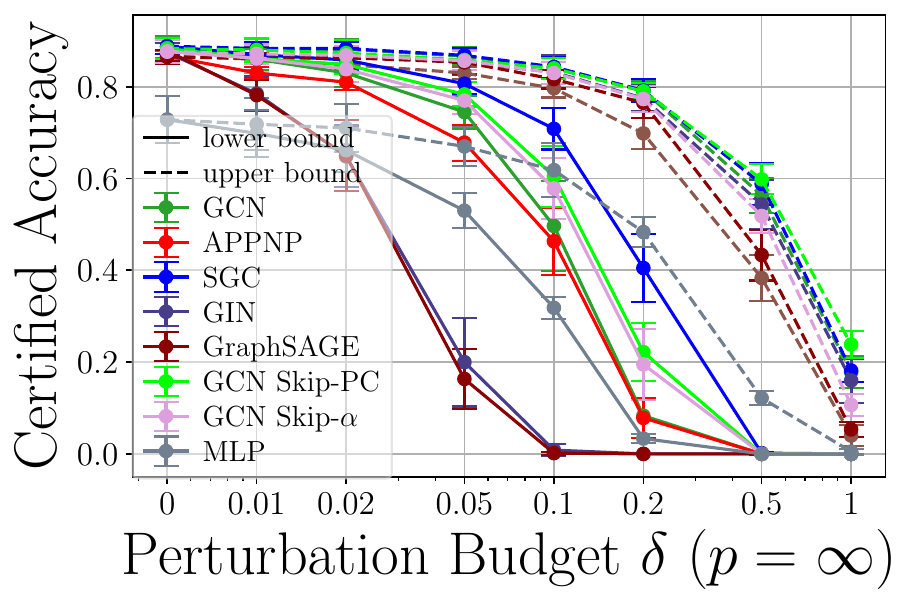}}
    \subcaptionbox{CSBM: $BU$\label{fig:attack_bu}}{\includegraphics[width=0.4\linewidth]{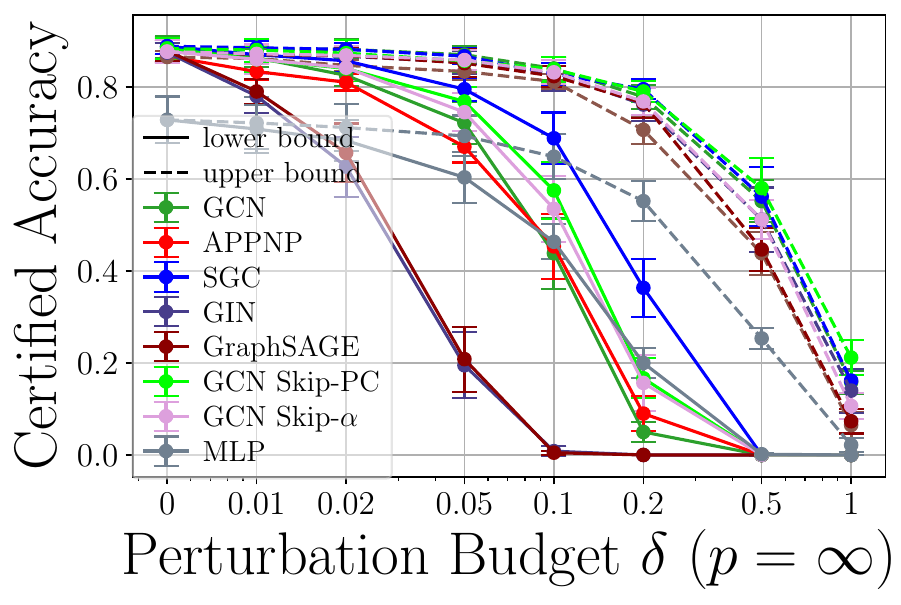}}
    \caption{Tightness of our certificate for backdoor attacks. Both $BU$ and $BL$ again with $p_{adv}=0.2$.}
    \label{fig:cert_tightness_b}
\end{figure}

\begin{figure}[h]
    \centering
    \subcaptionbox{CSBM: $PL$ \label{fig:attack_pl_comp}}{\includegraphics[width=0.4\linewidth]{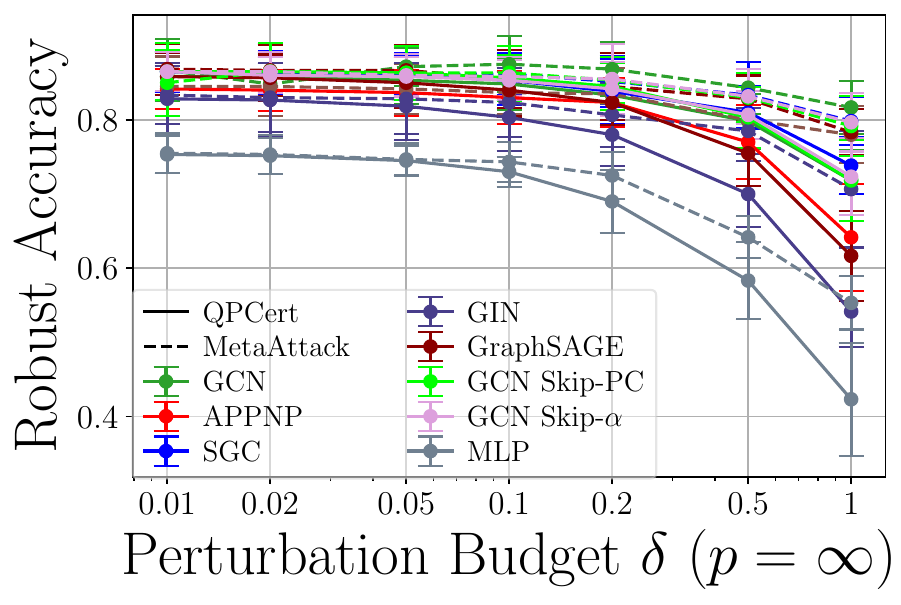}}
    \subcaptionbox{CSBM: $PU$\label{fig:attack_pu_comp}}{\includegraphics[width=0.4\linewidth]{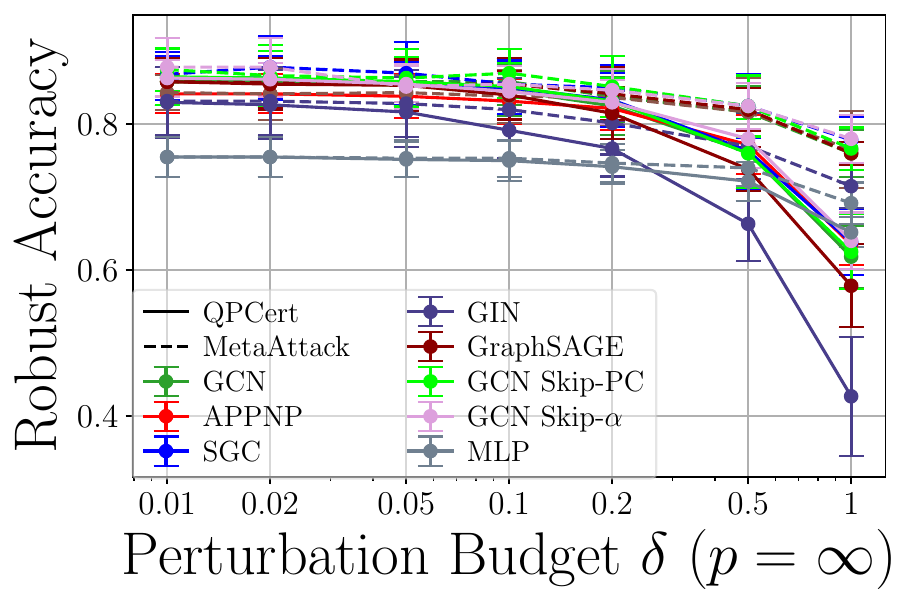}}
    \caption{Comparison of performing a gradient-based attack (APGD) using either exact gradients using QPLayer or using surrogate meta-gradients using MetaAttack's surrogate model. Both $PL$ and $PU$ with $p_{adv}=0.2$.}
    \label{fig:cert_tightness_p_comp}
\end{figure}

\begin{figure}[h]
    \centering
    \subcaptionbox{CSBM: $BL$ \label{fig:attack_bl_comp}}{\includegraphics[width=0.42\linewidth]{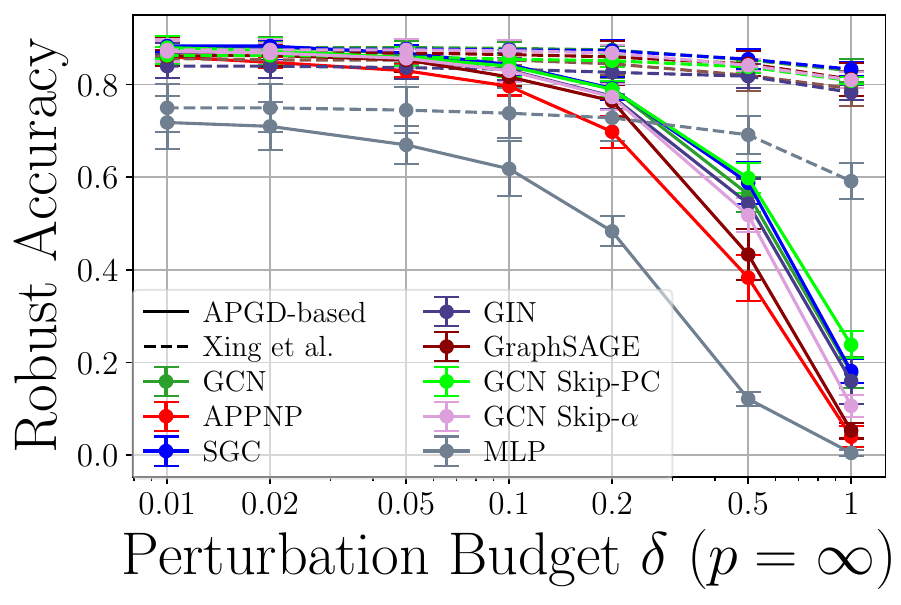}}
    \subcaptionbox{CSBM: $BU$\label{fig:attack_bu_comp}}{\includegraphics[width=0.42\linewidth]{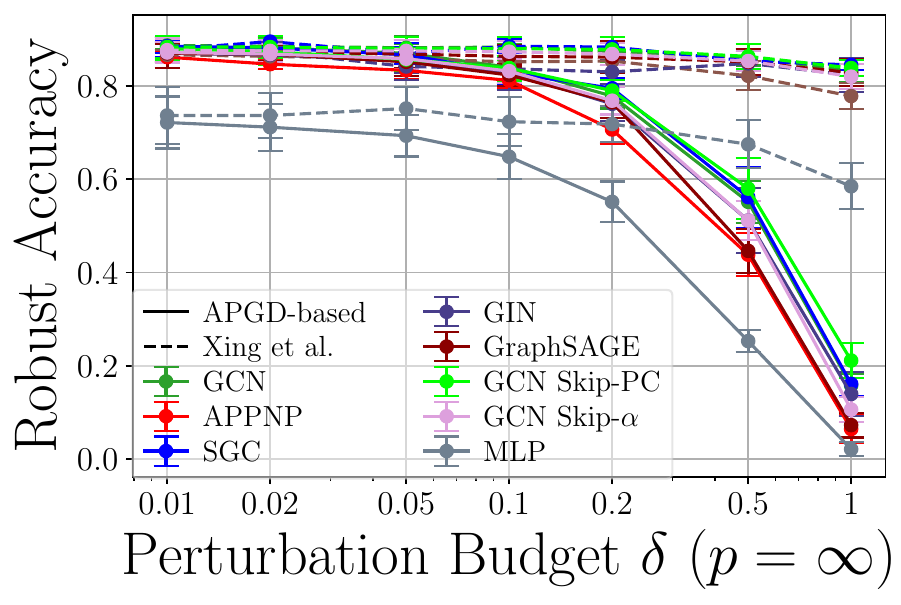}}
    \caption{Comparison of performing a gradient-based backdoor attack versus the simple backdoor attack proposed in \citet{xing2024cleanlabel}. }
    \label{fig:cert_tightness_b_comp}
\end{figure}
\FloatBarrier

\subsection{\rebuttal{Comparing QPCert with Deep Partition Aggregation (DPA)}} \label{app_sec:dpa}

\rebuttal{\textbf{Technical Details of Adapting DPA to Graphs.}
To compare QPCert with the split-and-majority voting-based certification strategy called Deep Partition Aggregation (DPA) first outlined by \citep{levine2021deep}, we partition the graph into $k$ subgraphs, by creating $k$ node-partitions $P_i$ through hashing. Denote the set of nodes by $\mathcal{V}$, then the node-partitions are created as follows:}

\begin{equation}
    P_i := \{v \in \mathcal{V}| v\ \text{mod}\ k = i\}
\end{equation}

\rebuttal{The (induced) subgraph $G_i$ corresponding to $P_i$ is then created by connecting all nodes in $P_i$ that have an original edge in $G$. The certificate is then obtained by training $k$ independent classifiers $f_i$ on the $k$ partitions and take the majority vote over all $f_i$ for a new incoming test data point. The certificate derived in \citep{levine2021deep} then pertains to the majority-voting classifier and not to the individual $f_i$. To make the certificate applicable to transductive node classification, if the test node $t$ is not already part of $P_i$, it is added to $G_i$ after training, where edges from $t$ to nodes in $P_i$ are added to $G_i$, if they are existing in the original graph $G$.}

\rebuttal{\textbf{Important Differences between QPCert to DPA.} We want to note some fundamental differences in the two certification approaches that are important to have in mind when discussing any comparison between them. QPCert is white-box vs DPA being black-box. This means that QPCert derives a certificate for the neural network under question, whereas DPA derives a certificate for the majority-ruling classifier and not for the underlying neural networks. Thus, they certify two different classifiers. As a result, DPA does not allow to gain insights into the poisoning robustness behavior of different neural networks, but a comparison to QPCert can illuminate which method under which setting allows for the highest robustness guarantees. Furthermore, QPCert assumes an $\ell_p$-normed adversary whereas DPA assumes that the adversary can arbitrarily perturb a controlled data point. As a result, DPA would always result in trivial (zero) certified robustness if all nodes can be perturbed, which is the setting we study in \Cref{fig:coramlbin_poison_labeled_all,fig:wikicsb_poison_labeled_all} and where QPCert provides non-trivial certified robustness. Furthermore, if not more is known about the adversary, based on its proof strategy, DPA can provide at most a certificate against $k/2$-perturbed nodes. This is problematic for semi-supervised learning settings such as node classification, if the underlying graph is sparsely labeled, as then the number of partitions $k$ must be set relatively low as there are otherwise no labeled samples to populate the partitions. %
}

\begin{figure}[t!]
    \centering
    \subcaptionbox{SGC}
    {\includegraphics[width=0.32\linewidth]{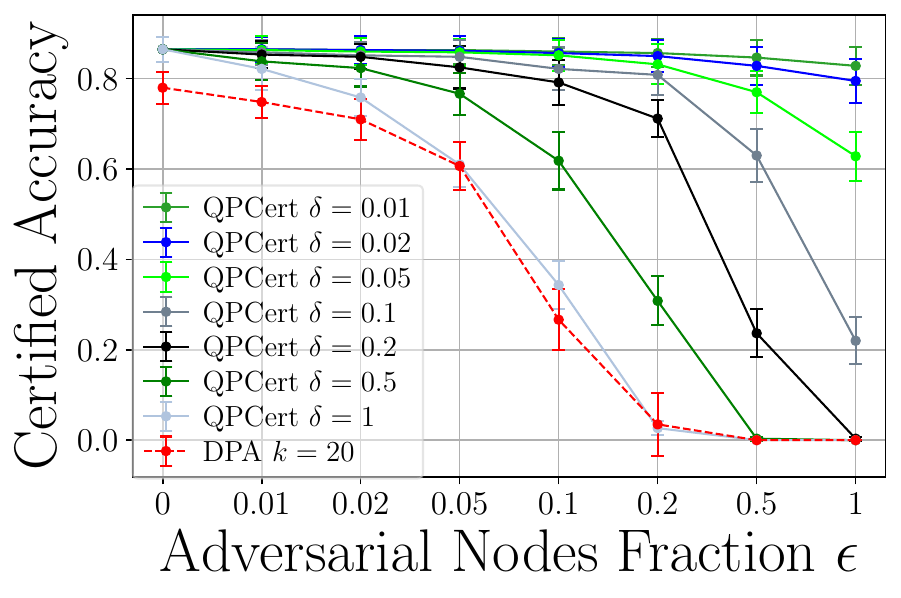}}
    \hfill
    \subcaptionbox{GCN}
    {\includegraphics[width=0.32\linewidth]{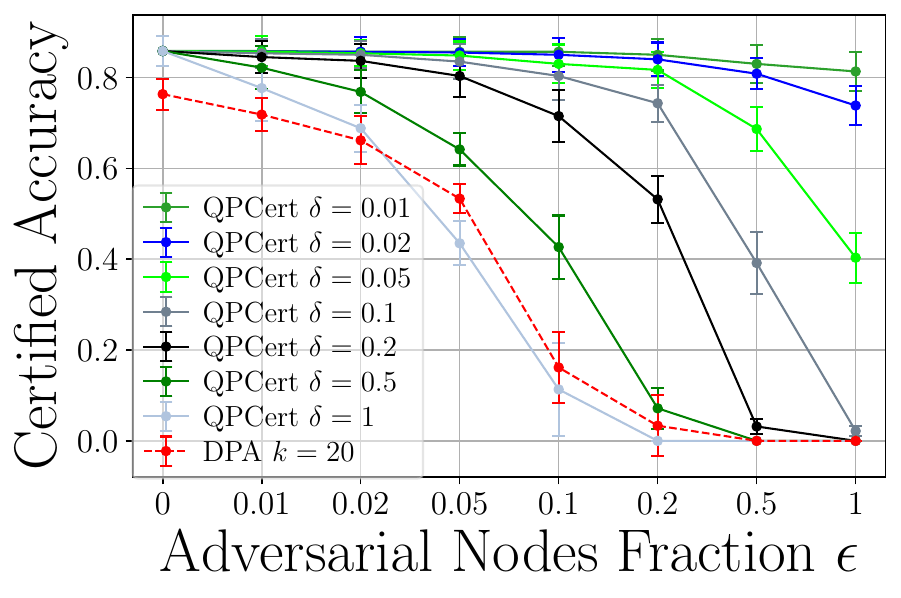}}
    \hfill
    \subcaptionbox{APPNP}
    {\includegraphics[width=0.32\linewidth]{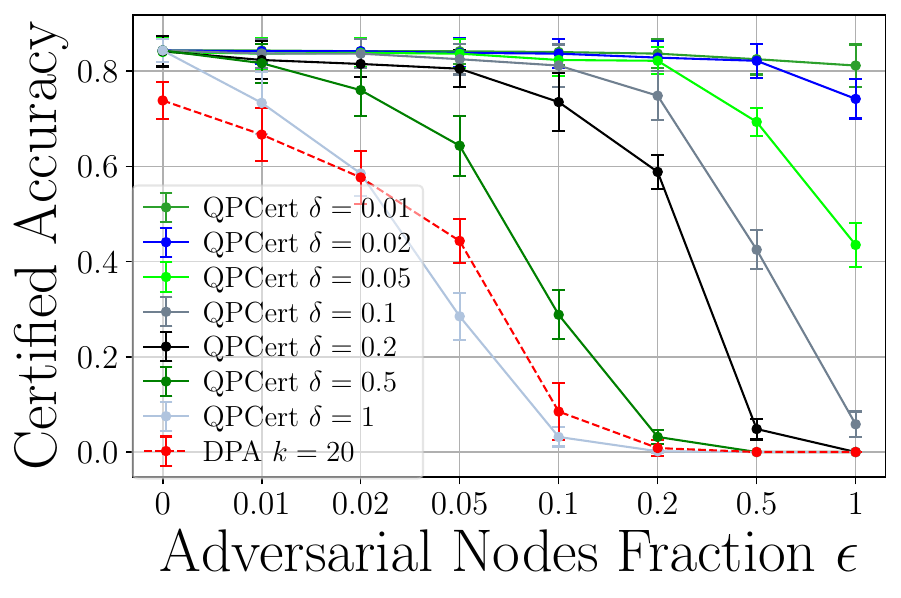}}
    \caption{QPCert ($\ell_\infty$) in comparison to DPA ($k=20$ partitions) for different architectures in the poison-labeled setting on CSBM. QPCert clearly outperforms DPA except for very large $\ell_\infty$-norms. 
    }
    \hfill
    \label{fig_app:qpcert_vs_dpa}
\end{figure}

\begin{figure}[t!]
    \centering
    \subcaptionbox{SGC}
    {\includegraphics[width=0.32\linewidth]{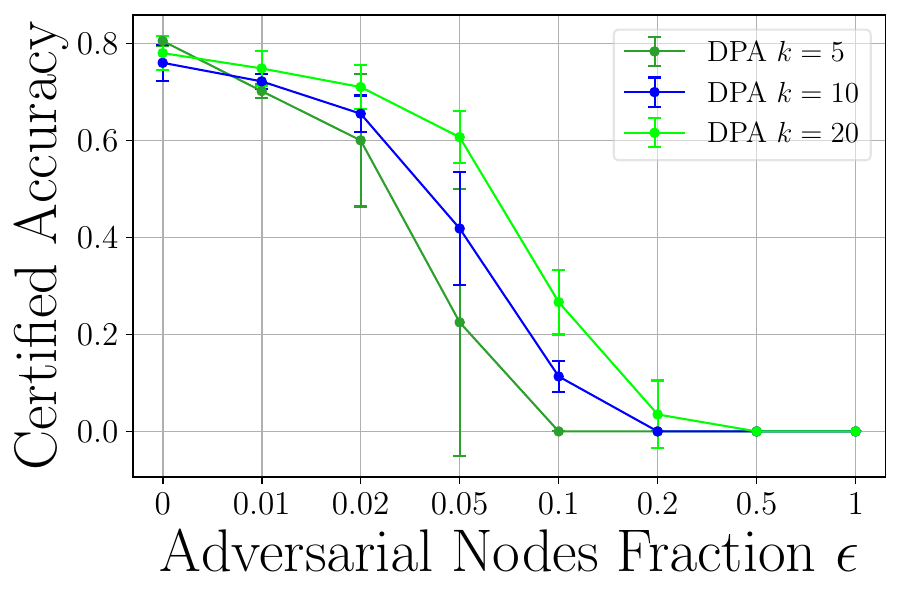}}
    \hfill
    \subcaptionbox{GCN}
    {\includegraphics[width=0.32\linewidth]{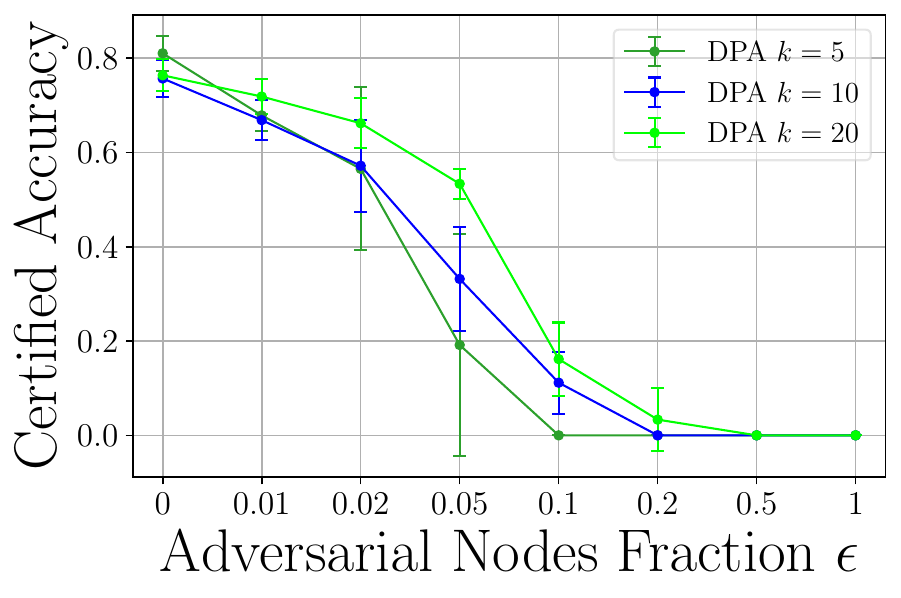}}
    \hfill
    \subcaptionbox{APPNP}
    {\includegraphics[width=0.32\linewidth]{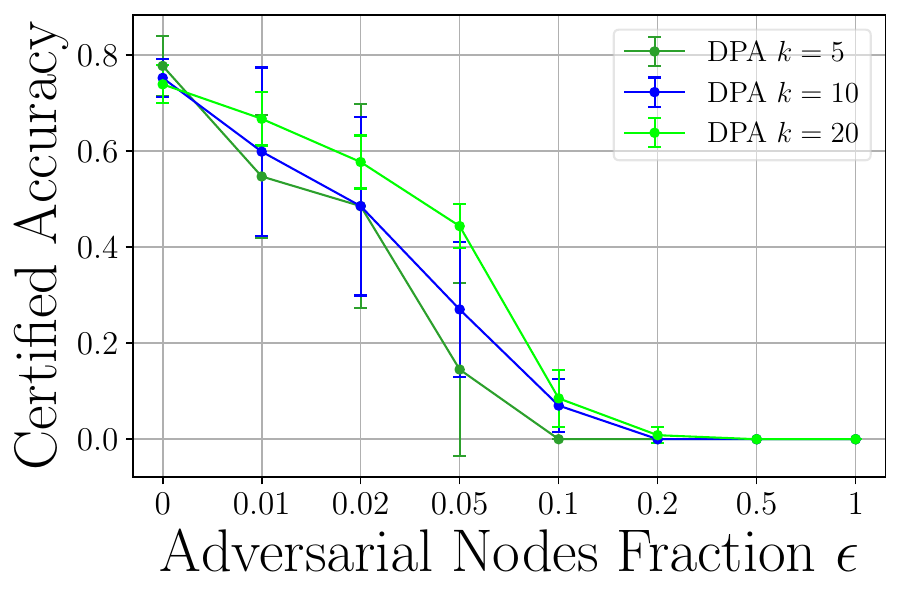}}
    \caption{Comparing different partition sizes $k$ in DPA for different GNN architectures on CSBM. On average, $k=5$ leads to $16$ labeled nodes per partition, $k=10$ to $8$ labeled nodes per partition, and $k=20$ to $4$ labeled nodes per partition. $k=20$ leads to the best trade-off between accuracy and certified robustness. 
    }
    \hfill
    \label{fig_app:dpa_k}
\end{figure}

\rebuttal{\textbf{Results.} 
\Cref{fig_app:qpcert_vs_dpa} compares QPCert with DPA in the poison-labeled setting. For DPA, we use $k=20$ partitions as it leads to the best accuracy-robustness tradeoff for DPA (see \Cref{fig_app:dpa_k}). \Cref{fig_app:qpcert_vs_dpa} clearly highlights that QPCert leads to significantly higher certified accuracies for $\ell_p$-bounded adversaries up to strong perturbations $\delta \le 0.5$. This clearly highlights given an $\ell_p$-bounded adversary QPCert significantly outperforms DPA. The high certified accuracy of QPCert also stems from leveraging white-box knowledge not available to DPA. As $\delta$ is in percent of the distance between the two class means of the Gaußian from which the node features are sampled, choosing $\delta=1$ and $\ell_\infty$-norm perturbations allows to completely recenter the node features in a high-probability area of the other class. Here, it is interesting to observe that the bounds of DPA and QPCert roughly overlap. If $\delta \rightarrow \infty$, the bounds from QPCert will become trivial (zero). However, $\delta >> 1$ would be strong outliers in the training data and could be easily detected and removed.}

\rebuttal{\textbf{Further Experimental Details.} We test $k=\{5, 10, 20\}$ as it leads to $16, 8$ and $4$ labeled nodes per partition, respectively. At $k=20$, the hashing can result in partitions with no labeled samples. If this happens, we ignore such partitions in the majority voting classifier. Furthermore, the perturbation model assumed in our work allows a certain fraction of trusted nodes. Thus, applying DPA can provide certificates slightly above the $k/2$ threshold.}

\FloatBarrier
\subsection{Results for $p=2$ Perturbation Budget}
We present the results for $p=2$ perturbation budget evaluated on CSBM and all the GNNs considered. We focus on Poison Unlabeled setting.
\Cref{fig:graph_info_app_l2} show the results of the
certifiable robustness for all GNNs and the heatmaps showing the accuracy gain with respect to MLP is in \Cref{fig:graph_info_app_l2}.
All the results are in identical to $p=\infty$ setting and we do not see any discrepancy.

\begin{figure}[h]
    \centering
    \subcaptionbox{CSBM: $PU$}
    {\includegraphics[width=0.245\linewidth]{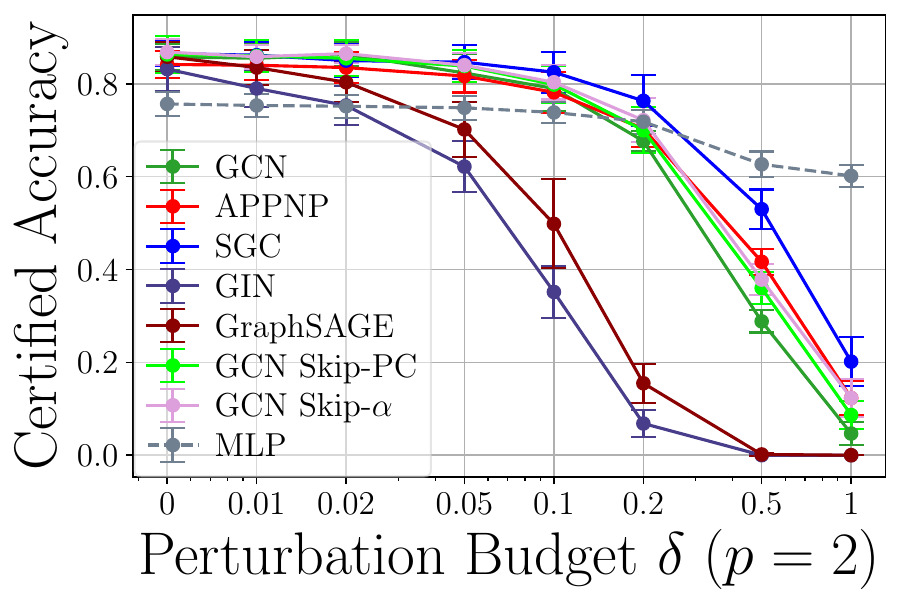}}
    \hfill
    \subcaptionbox{GIN}
    {\includegraphics[width=0.245\linewidth]{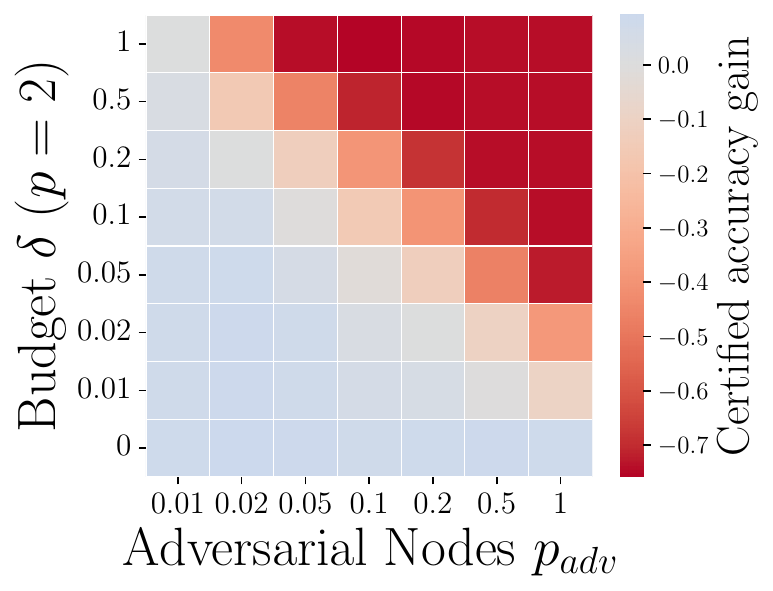}}
    \hfill
    \subcaptionbox{GraphSAGE}
    {\includegraphics[width=0.245\linewidth]{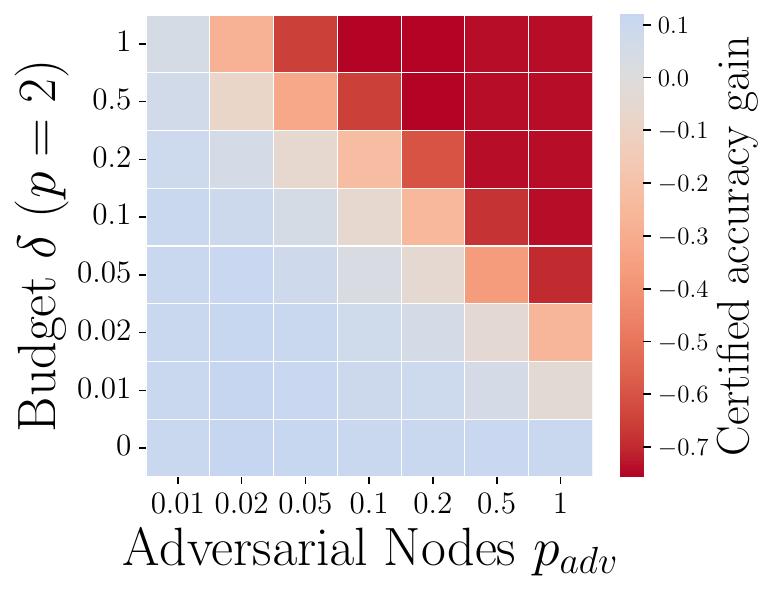}}
    \hfill
    \subcaptionbox{GCN}
    {\includegraphics[width=0.245\linewidth]{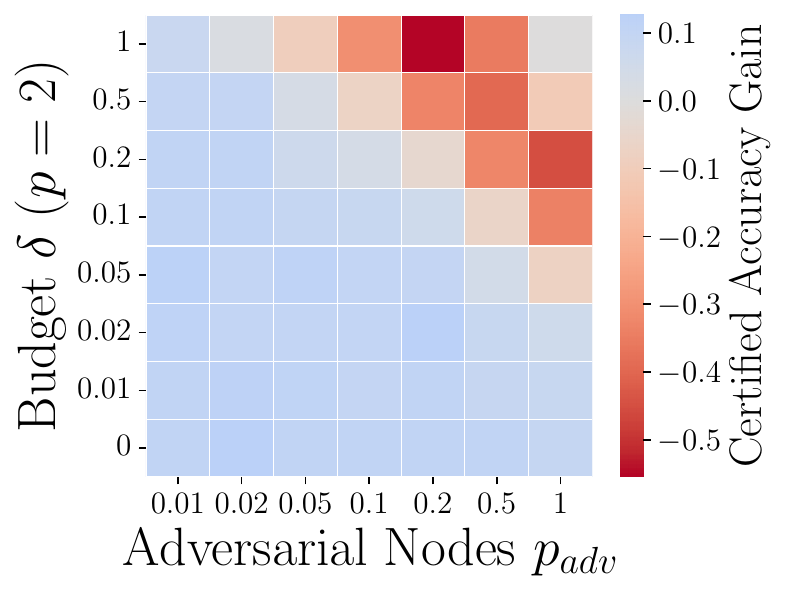}}

\hfill
    \subcaptionbox{SGC}
    {\includegraphics[width=0.245\linewidth]{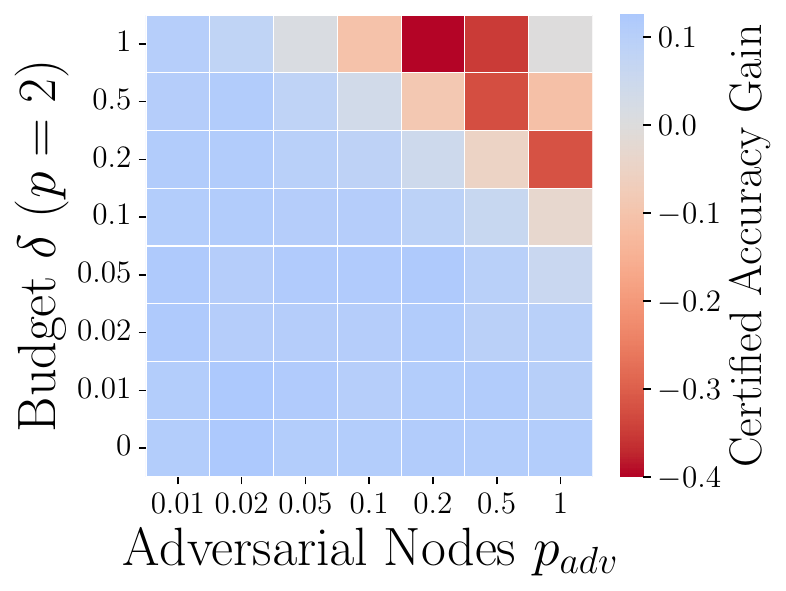}}
    \hfill
    \subcaptionbox{APPNP}
    {\includegraphics[width=0.245\linewidth]{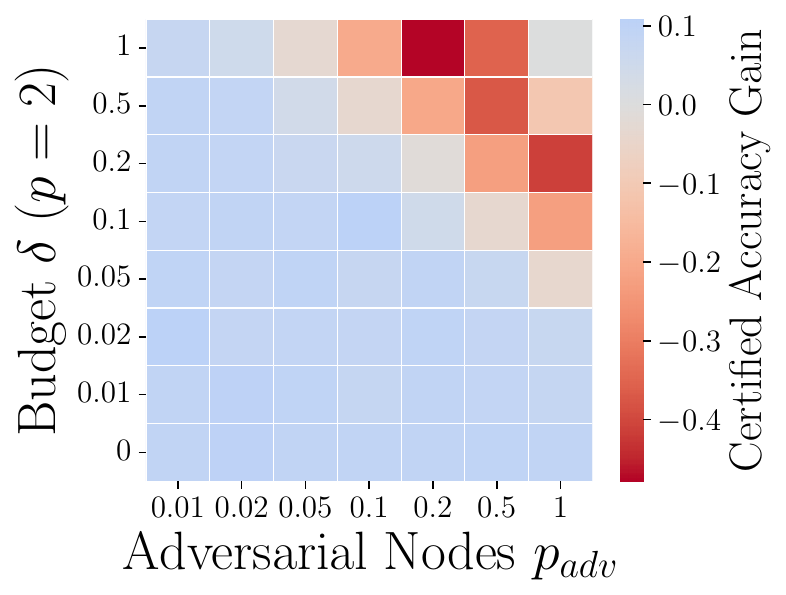}}
    \hfill
    \subcaptionbox{GCN Skip-PC}
    {\includegraphics[width=0.245\linewidth]{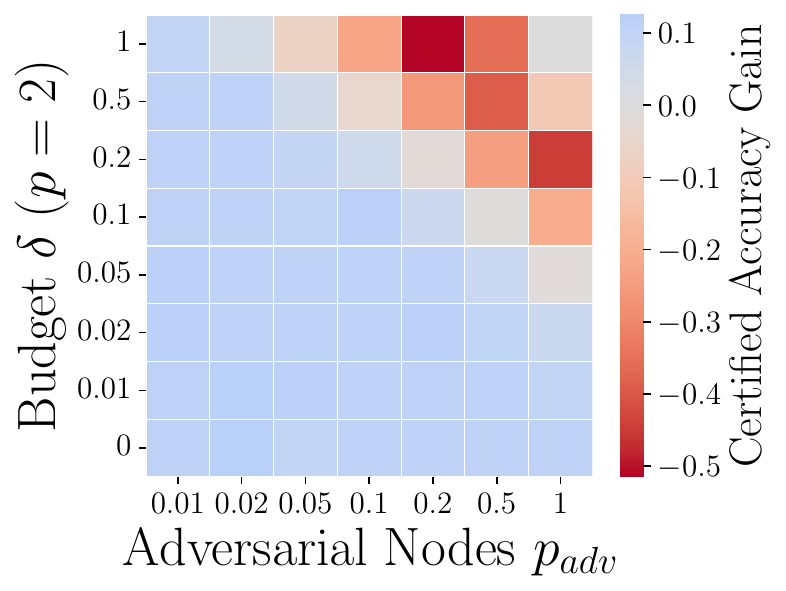}}
    \hfill
    \subcaptionbox{GCN Skip-$\alpha$}
    {\includegraphics[width=0.245\linewidth]{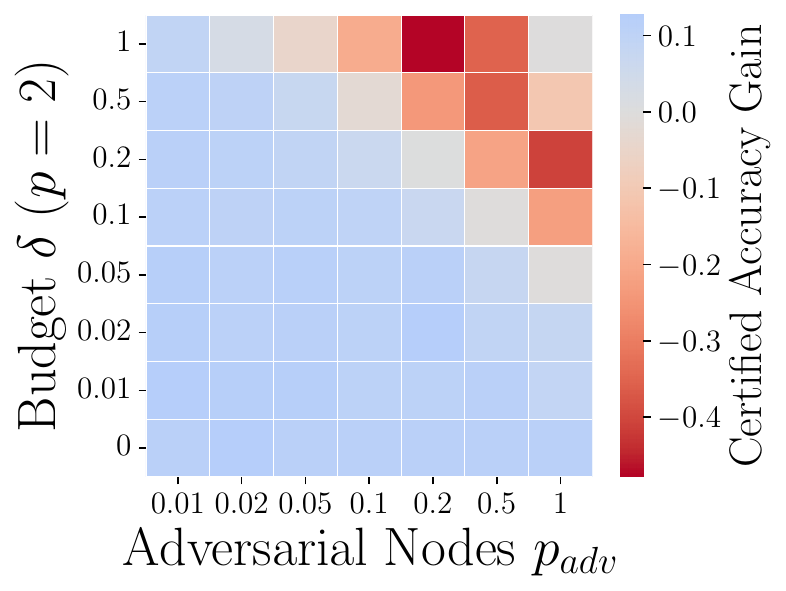}}
    \caption{(a): Certifiable robustness for different (G)NNs in Poisoning Unlabeled (PU) for $p=2$. (b)-(h): Certified accuracy gain for heatmap for all GNNs. All experiments with Poisoning Unlabeled (PU) and $p_{adv}=0.2$
    }
    \label{fig:graph_info_app_l2}
\end{figure}

\FloatBarrier
\subsection{Results for $p=1$ Perturbation Budget}
\label{app_exp:csbm_l1}
Similar to $p=2$, we also present the results for $p=1$ perturbation budget evaluated on CSBM and all the GNNs considered for Poison Unlabeled setting in \Cref{fig:csbm_l1}.
All the results are in identical to $p=\infty$ setting and we do not see any discrepancy.
\begin{figure}[h]
    \centering
    \subcaptionbox{CSBM: $PU$}
    {\includegraphics[width=0.245\linewidth]{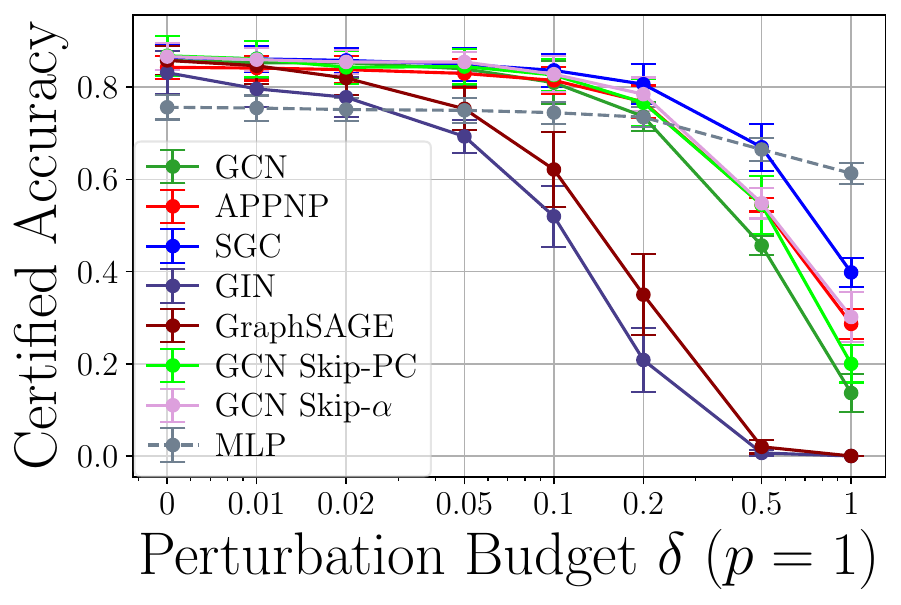}}
    \hfill
    \subcaptionbox{GCN}
    {\includegraphics[width=0.245\linewidth]{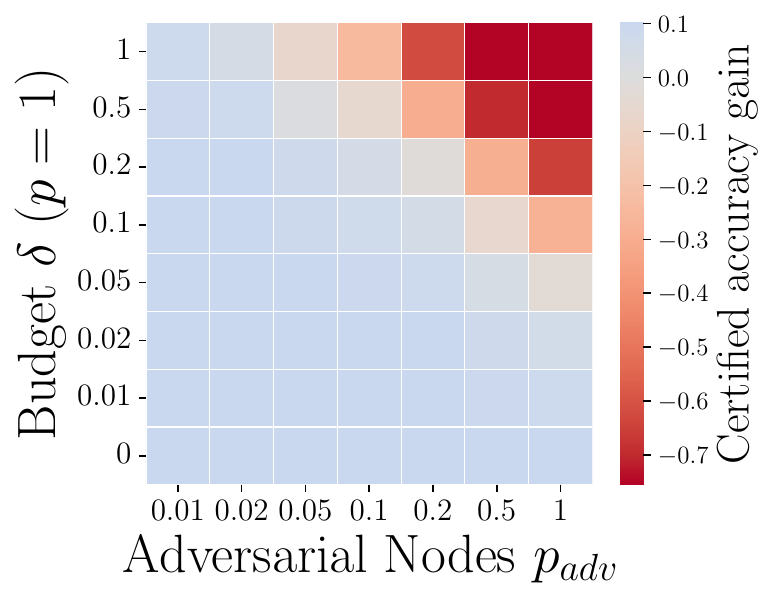}}
    \hfill
    \subcaptionbox{SGC}
    {\includegraphics[width=0.245\linewidth]{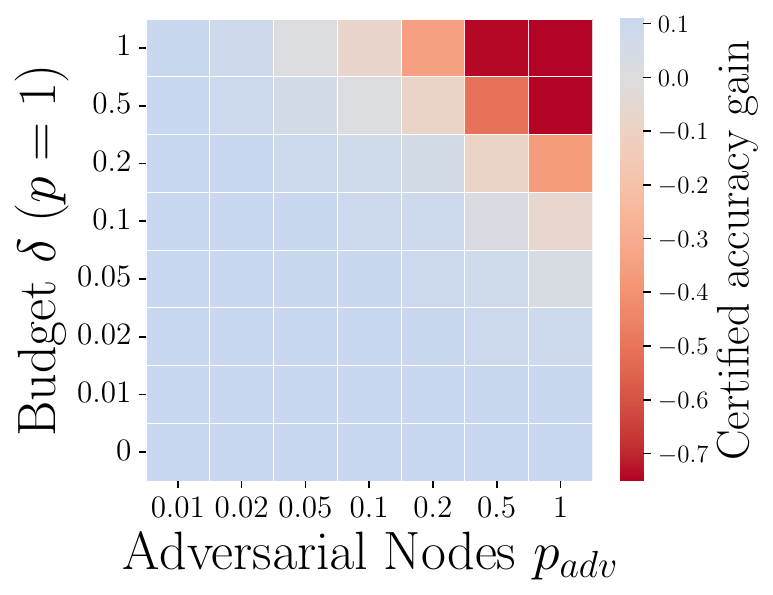}}
    \hfill
    \subcaptionbox{APPNP}
    {\includegraphics[width=0.245\linewidth]{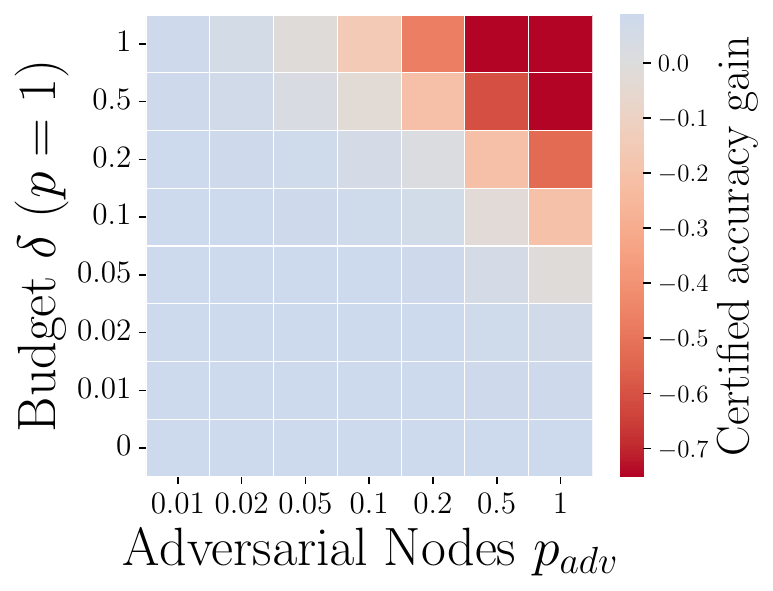}}
    \hfill
    \subcaptionbox{GCN Skip-PC}
    {\includegraphics[width=0.245\linewidth]{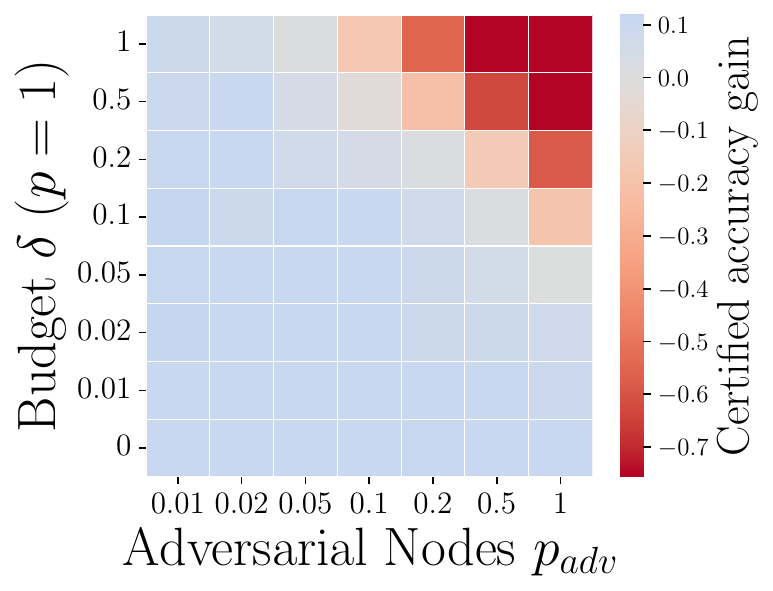}}
    \hfill
    \subcaptionbox{GCN Skip-$\alpha$}
    {\includegraphics[width=0.245\linewidth]{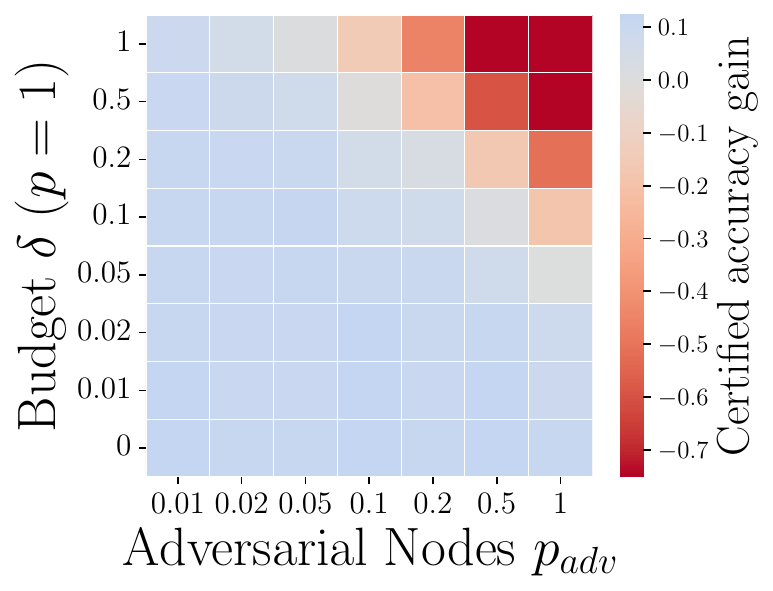}}
    \hfill
    \subcaptionbox{GIN}
    {\includegraphics[width=0.245\linewidth]{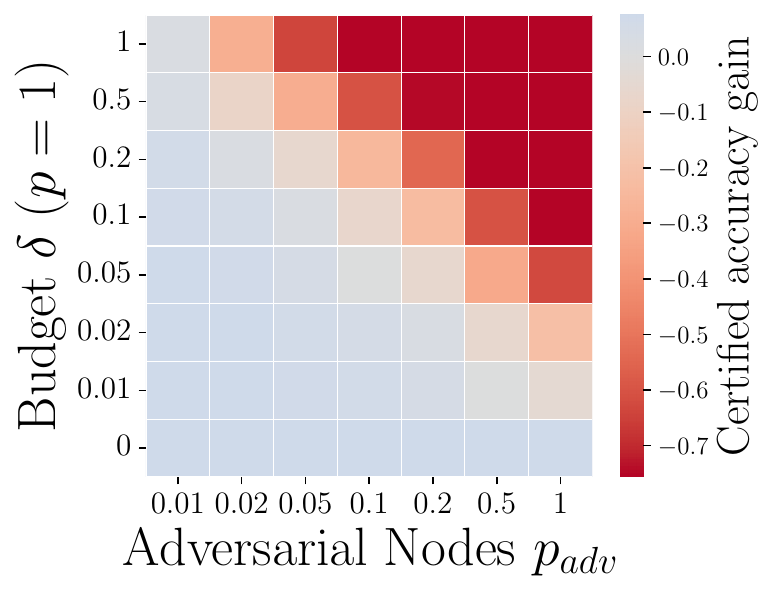}}
    \hfill
    \subcaptionbox{GraphSAGE}
    {\includegraphics[width=0.245\linewidth]{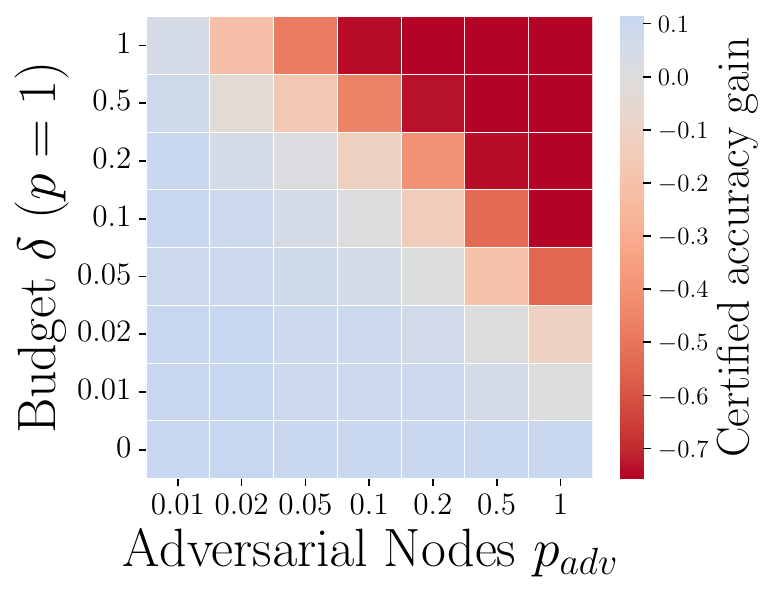}}
    \caption{(a): Certifiable robustness for different (G)NNs in Poisoning Unlabeled (PU) for $p=1$. (b)-(h): Certified accuracy gain for heatmap for all GNNs. All experiments with Poisoning Unlabeled (PU) and $p_{adv}=0.2$.}
    \label{fig:csbm_l1}
\end{figure}

\FloatBarrier
\subsection{Comparison Between $p=\infty$ and $p=2$}
We provide a comparison between $p=\infty$ and $p=2$ perturbation budget, showing that $p=2$ is tighter than $p=\infty$ for the same budget as expected.

\begin{figure}[h]
    \centering
    \includegraphics[width=0.45\linewidth]{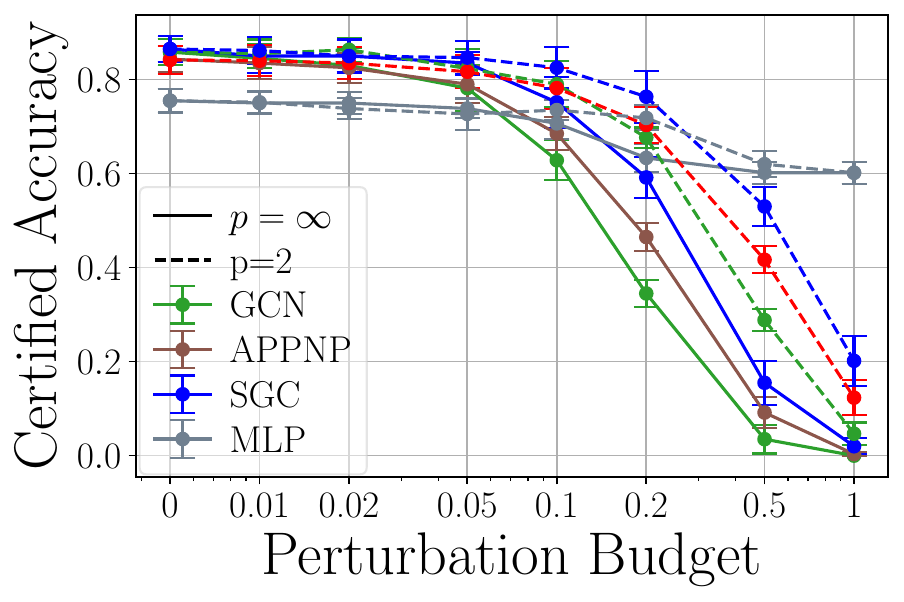}
    \caption{Comparison between $p=\infty$ and $p=2$ for Poison Unlabeled setting. $p_{adv}=0.2$.}
    \label{fig:l2_linf}
\end{figure}

\newpage
\subsection{Comparison to Common Poisoning Defenses} \label{sec_app:comparison_baselines}

In \Cref{fig:baseline_defenses} we compare two common poisoning defenses namely GNNGuard \citep{zhang2020gnnguard} and ElasticGNN \citep{liu2021elastic} with the certified accuracy provided by QPCert. While the accuracies provided by a defense are not certified accuracies (i.e., they are only upper bounds to the true robust accuracy) and hence, can only be compared partly with the certified accuracy which represents a true lower bound to the robust accuracy. However, a comparison is still interesting as it allows to answer the question, of how big the gap between the best-certified accuracy to the robust accuracy provided by defenses is and if we could even get a certified accuracy result comparable to a poisoning defense's accuracy. Interestingly, \Cref{fig:baseline_defenses} shows that for small to intermediate budgets, the certified accuracy of an infinite-width GCN as provided by QPCert is higher than the robust accuracy provided by the defense baselines. This can be explained by the fact that even ElasticGNN and GNNGuard show lower base clean accuracy despite significant hyperparameter tuning (experimental details see below paragraph) paired with a few very brittle predictions. We hypothesize that this is due to the difficult learning problem a CSBM poses (despite being a small dataset) paired with the fact that both poisoning defenses have GCN-like base models where the graph / propagation scheme is adapted to be more robust to poisoning while potentially trading off clean accuracy.

Both poisoning defenses are trained using the non-negative likelihood loss and the ADAM optimizer following \citet{zhang2020gnnguard}. GNNGuard uses a 2-layer GCN as a baseline model and hyperparameters are searched in the grid: $(i)$ number of filters $\{8,16,32\}$, $(ii)$ dropout $\{0,0.2,0.5\}$, $(iii)$ learning rate $\{0.01, 0.001\}$, $(iv)$ weight decay $\{5e-3,1e-3,5e-4,1e-4\}$ over $10$ seeds resulting in 720 models. For ElasticGNN the hyperparameter grid reported in \citet{liu2021elastic} is explored over $10$ seeds resulting in $11520$ models due to ElasticGNN having more hyperparameters to tune. It's hidden layer size is fixed to $32$. Both baseline defenses are attacked using MetaAttack adapted to feature perturbations as done in \Cref{app:exp_csbm_attack}. The infinite-width GCN is attacked using the exact gradient obtained from the QPLayer implementation.

\begin{figure}[h]
    \centering
    \includegraphics[width=0.45\linewidth]{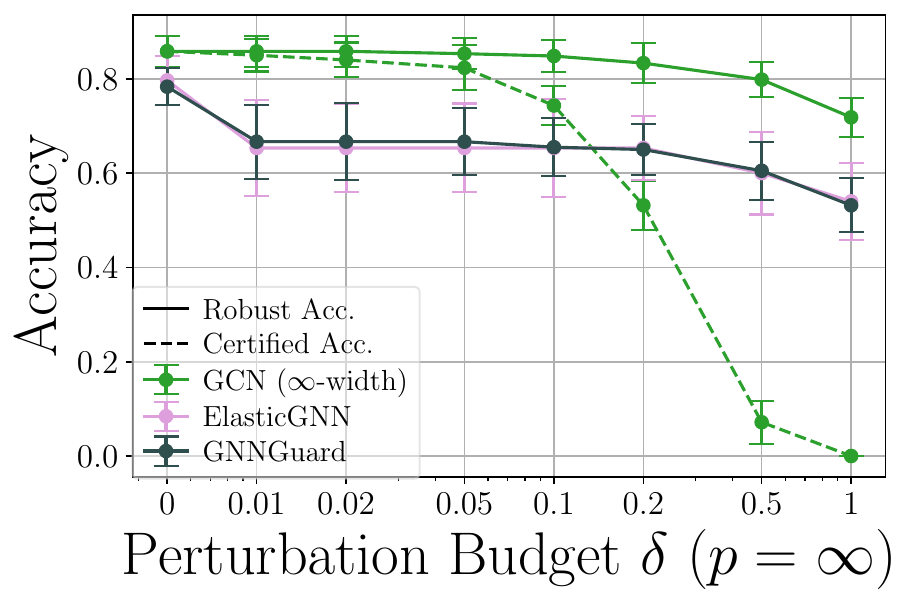}
    \caption{Comparison of different poisoning defenses with the certified accuracy obtained by QPCert on CSBM.}
    \label{fig:baseline_defenses}
\end{figure}

\FloatBarrier
\section{Additional Results: Cora-MLb}

\subsection{Evaluating QPCert} \label{app_sec:coramlbin_evalqpcert}

\Cref{fig_app:coramlb_bl} shows the certified accuracy on Cora-MLb for the $BL$ settings for $p_{cert}=0.1$. \Cref{fig:heatmaps_coramlb1a,fig:heatmaps_coramlb1b,fig:heatmaps_coramlb1c,fig:heatmaps_coramlb2} show a detailed analysis into the certified accuracy difference of different GNN architectures for PU setting for $p_{cert}=0.1$. 

\begin{figure}[h]
    \centering
    \subcaptionbox{BL, $p_{adv}=0.1$\label{fig_app:coramlb_bl}}
    {\includegraphics[width=0.24\linewidth]{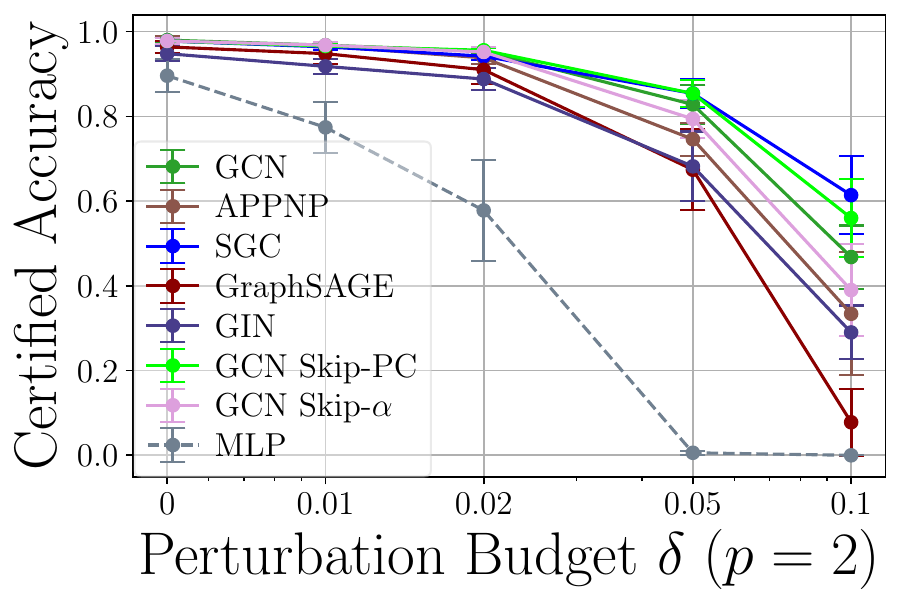}}
    \hfill
    \subcaptionbox{GCN\label{fig:heatmaps_coramlb1a}}
    {\includegraphics[width=0.24\linewidth]{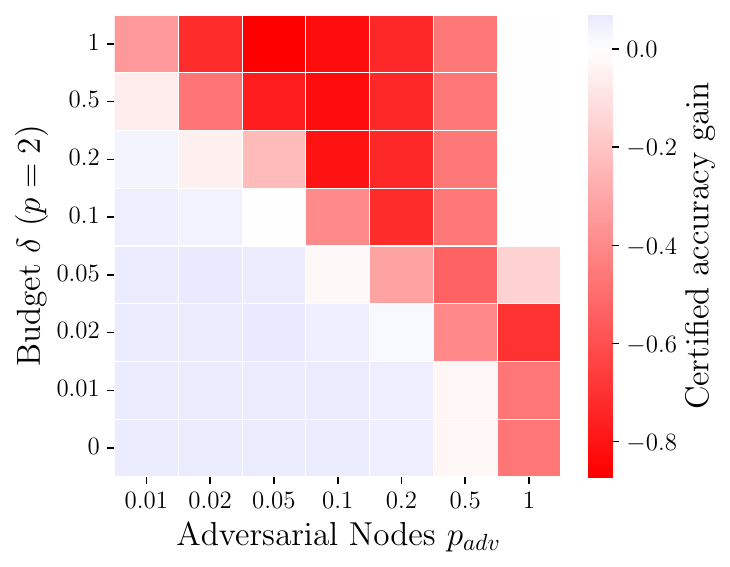}}
    \hfill
    \subcaptionbox{SGC\label{fig:heatmaps_coramlb1b}}
    {\includegraphics[width=0.24\linewidth]{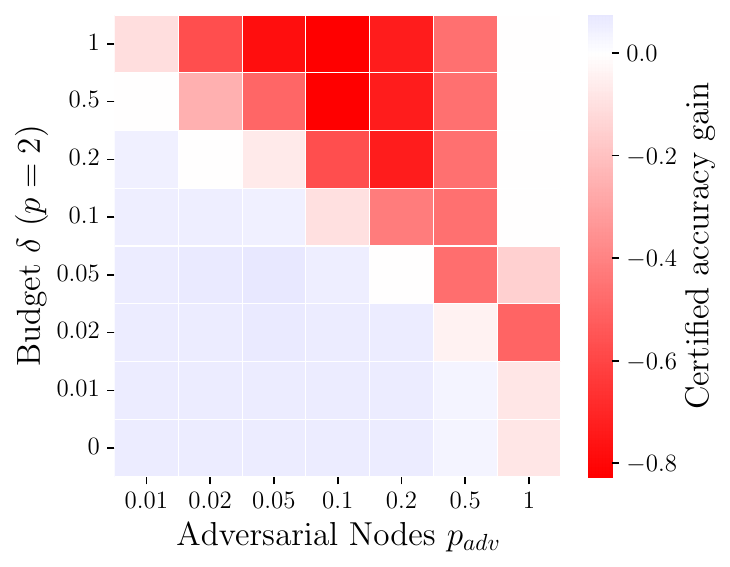}}
    \hfill
    \subcaptionbox{APPNP\label{fig:heatmaps_coramlb1c}}
    {\includegraphics[width=0.24\linewidth]{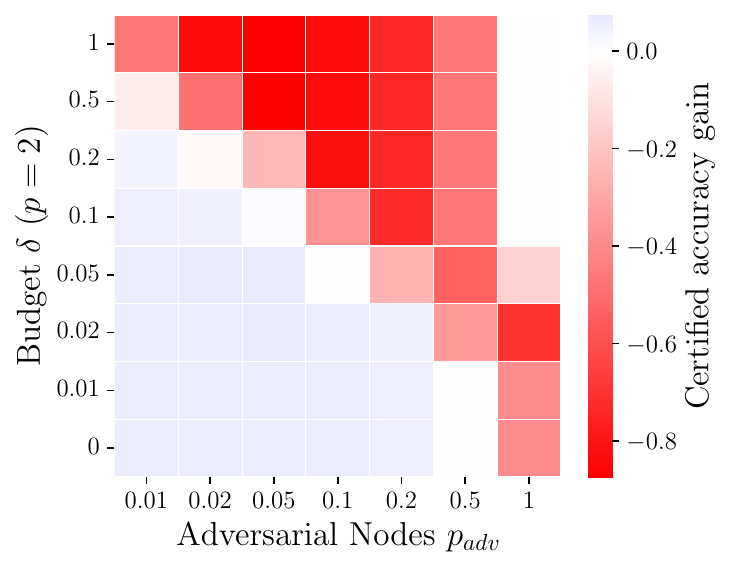}}
    \caption{(a) Backdoor Labeled $(BL)$ Setting. (b)-(d) Heatmaps of GCN, SGC, and APPNP for Poison Unlabeled ($PU$) setting on Cora-MLb with $p_{adv}=0.1$.}
    \label{fig:heatmaps_coramlb1}
\end{figure}

\begin{figure}[h]
    \centering
    \subcaptionbox{GCN Skip-$\alpha$}
    {\includegraphics[width=0.24\linewidth]{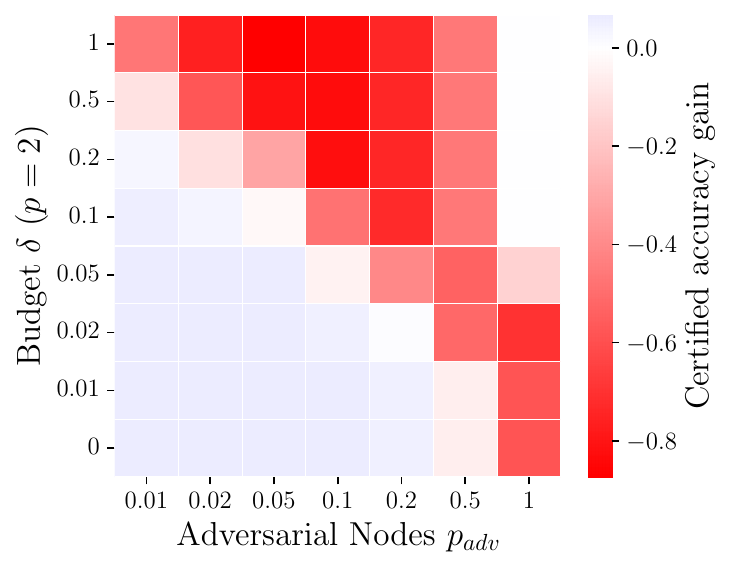}}
    \hfill
    \subcaptionbox{GCN Skippc}
    {\includegraphics[width=0.24\linewidth]{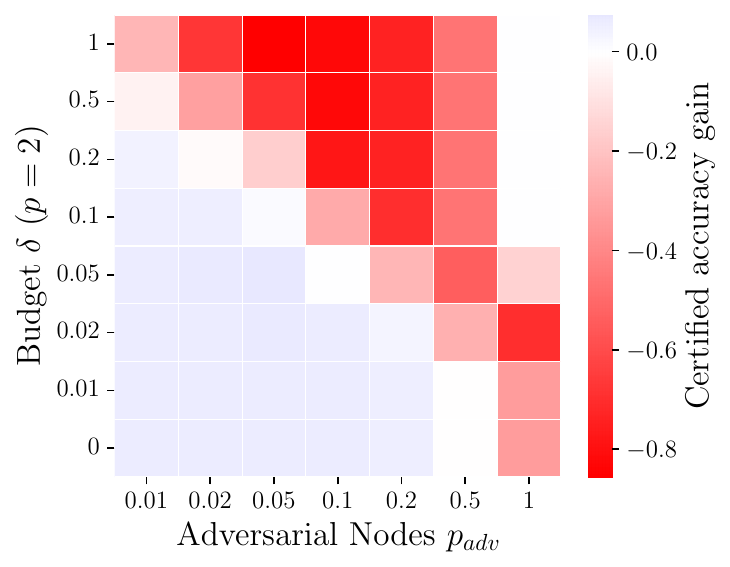}}
    \hfill
    \subcaptionbox{GraphSAGE}
    {\includegraphics[width=0.24\linewidth]{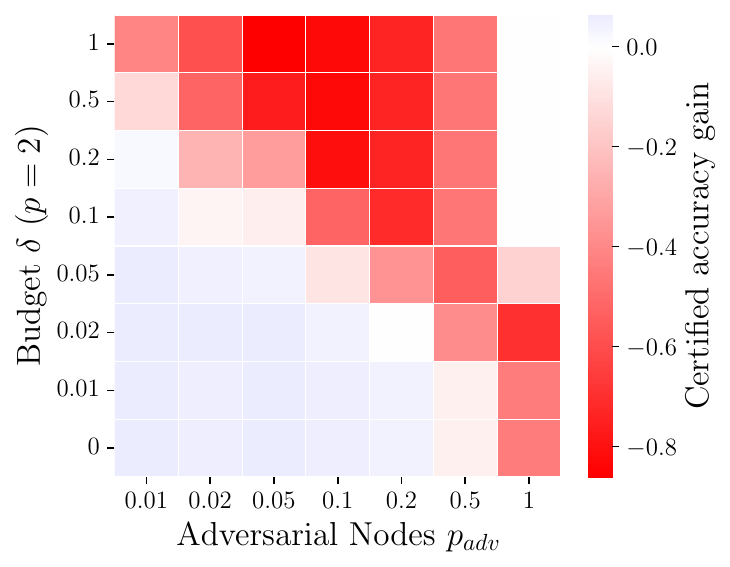}}
    \hfill
    \subcaptionbox{GIN}
    {\includegraphics[width=0.24\linewidth]{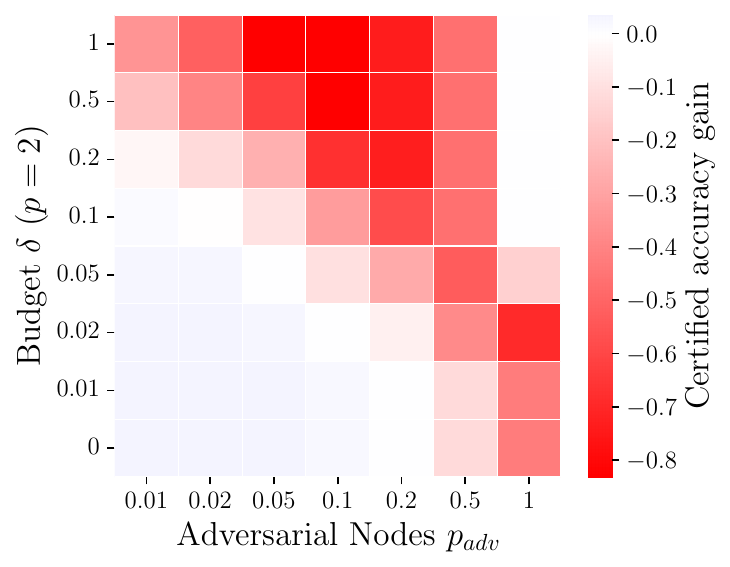}}
    \caption{Heatmaps of GCN Skip-$\alpha$, GCN Skippc, GraphSAGE, and GIN for Poison Unlabeled ($PU$) setting on Cora-MLb with $p_{adv}=0.1$.}
    \label{fig:heatmaps_coramlb2}
\end{figure}

\FloatBarrier
\subsection{APPNP} \label{app_sec:coramlbin_appnp}

\Cref{app_fig:coramlbin_appp} shows that the inflection point observed in \Cref{fig:appnp_pu} is not observed in the other settings.

\begin{figure}[h]
    \centering
    \subcaptionbox{PL}
    {\includegraphics[width=0.32\linewidth]{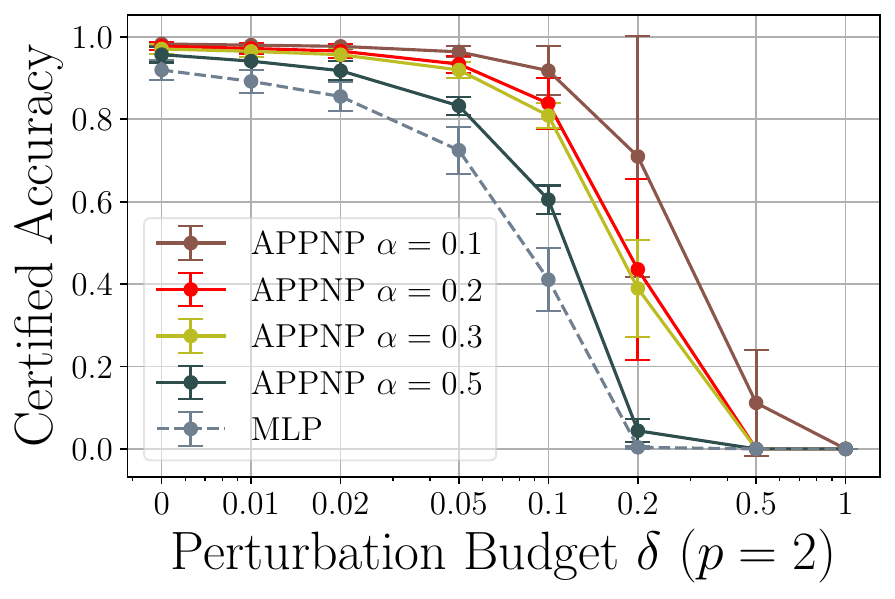}}
    \hfill
    \subcaptionbox{BL}
    {\includegraphics[width=0.32\linewidth]{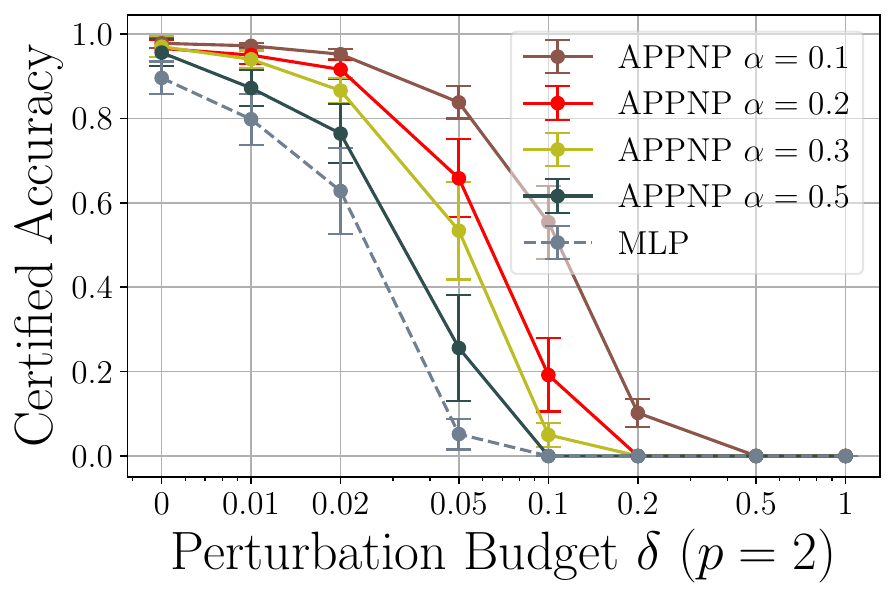}}
    \hfill
    \subcaptionbox{BU}
    {\includegraphics[width=0.32\linewidth]{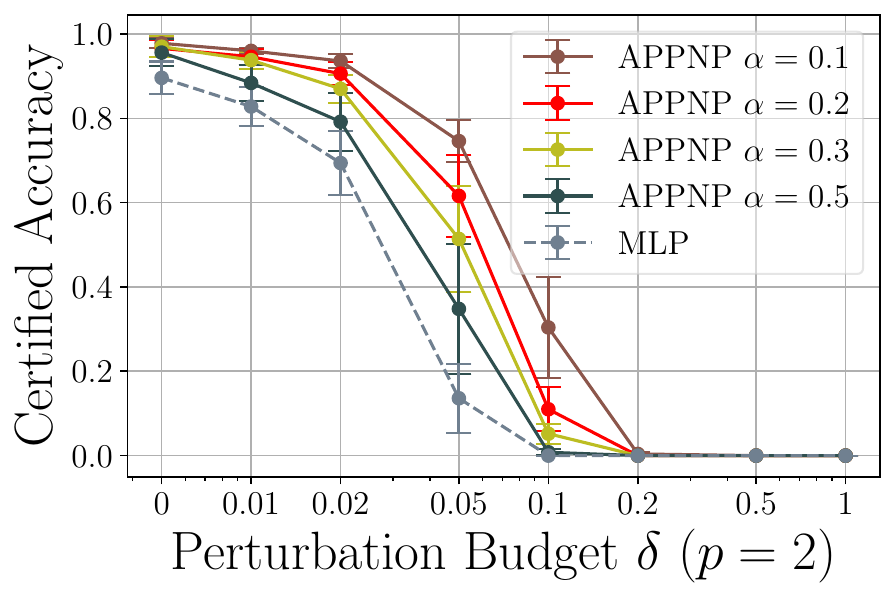}}
    \caption{Cora-MLb, all settings with $p_{adv}=0.05$.}
	\label{app_fig:coramlbin_appp}
\end{figure}

\FloatBarrier
\subsection{Symmetric vs. Row Normalization of the Adjacency Matrix}
\label{app:cora_ml_sym_row}
\begin{figure}[h]
    \centering
    \subcaptionbox{$\rmS$ in GCN, SGC, $PU$}
    {\includegraphics[width=0.33\linewidth]{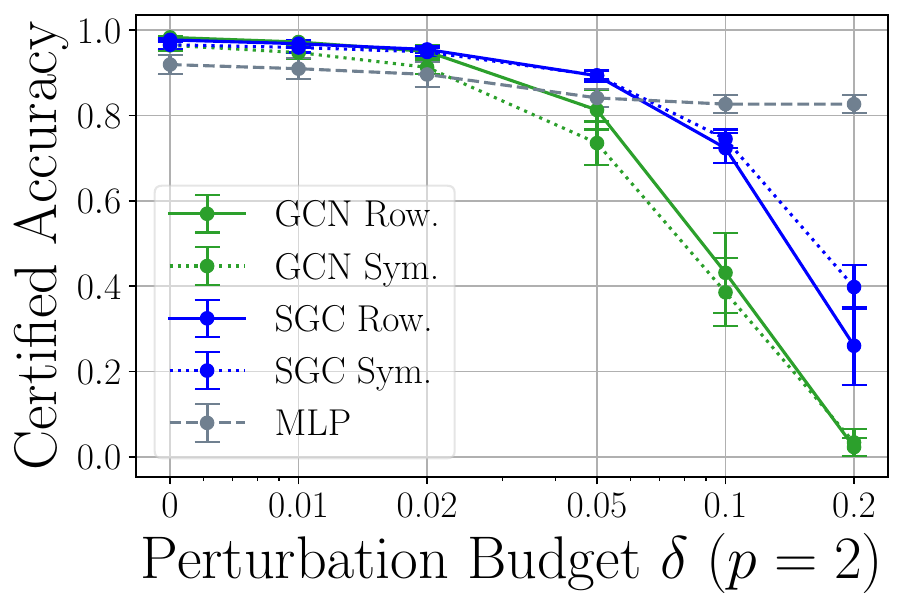}}
    \hfill
    \subcaptionbox{$\rmS$ in GCN, SGC, $PL$}
    {\includegraphics[width=0.33\linewidth]{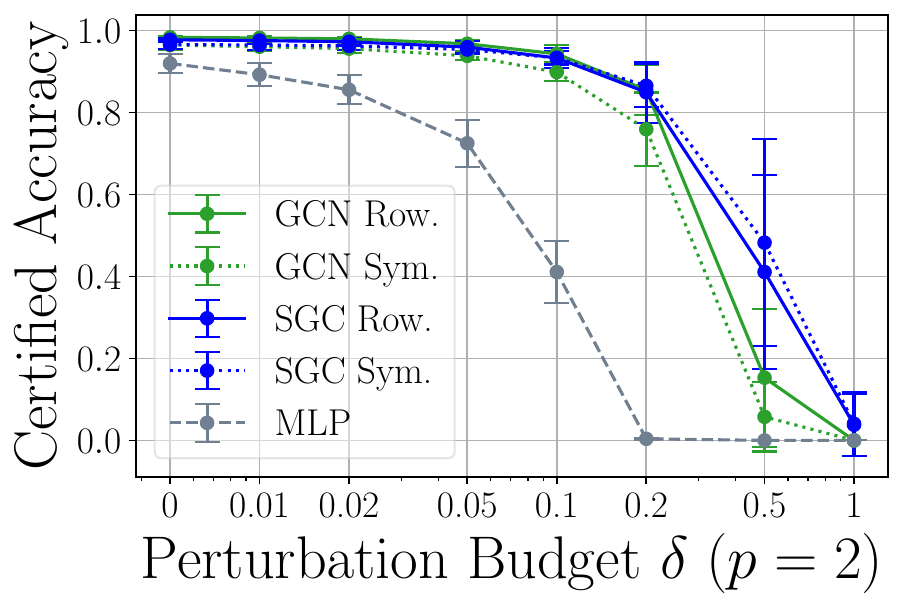}}
    \caption{Influence of symmetric and row normalized adjacency in GCN and SGC for poison unlabeled and poison labeled settings.}
    \label{fig:app_sparsity_appnp_linf}
\end{figure}
\FloatBarrier
\subsection{Results on $p=1$ Adversary}
\label{app_exp:coramlb_l1}
\Cref{fig:cora_mlb_l1} shows the certifiable robustness to $p=1$ adversary on Cora-MLb dataset. The observation is consistent to the CSBM case.

\begin{figure}[h]
    \centering
    \subcaptionbox{ PU $p_{adv}=0.1$\label{fig:cora_mlb_pl}}
    {\includegraphics[width=0.23\linewidth]{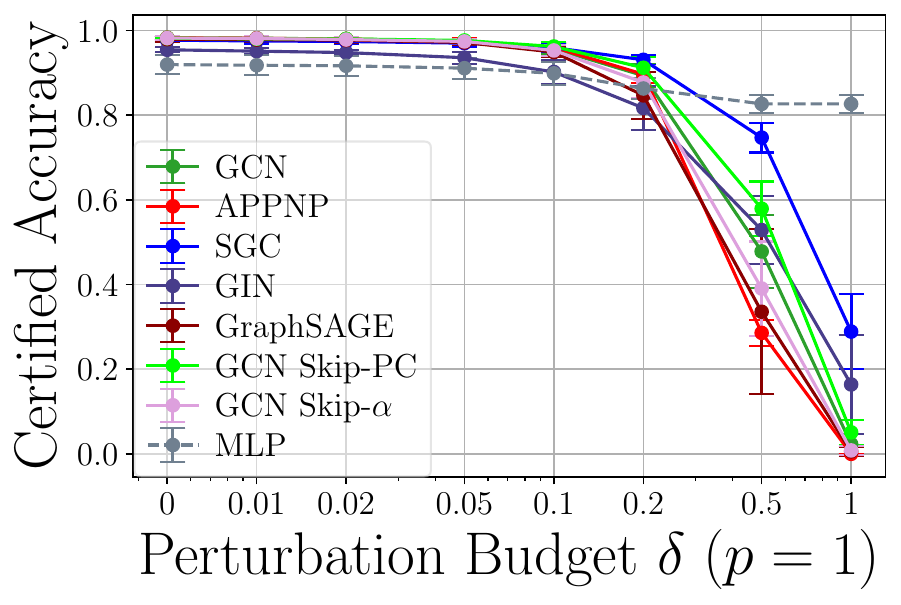}}
    \hfill
    \subcaptionbox{PL $p_{adv}=1$\label{fig:cora_mlb_pl}}
    {\includegraphics[width=0.23\linewidth]{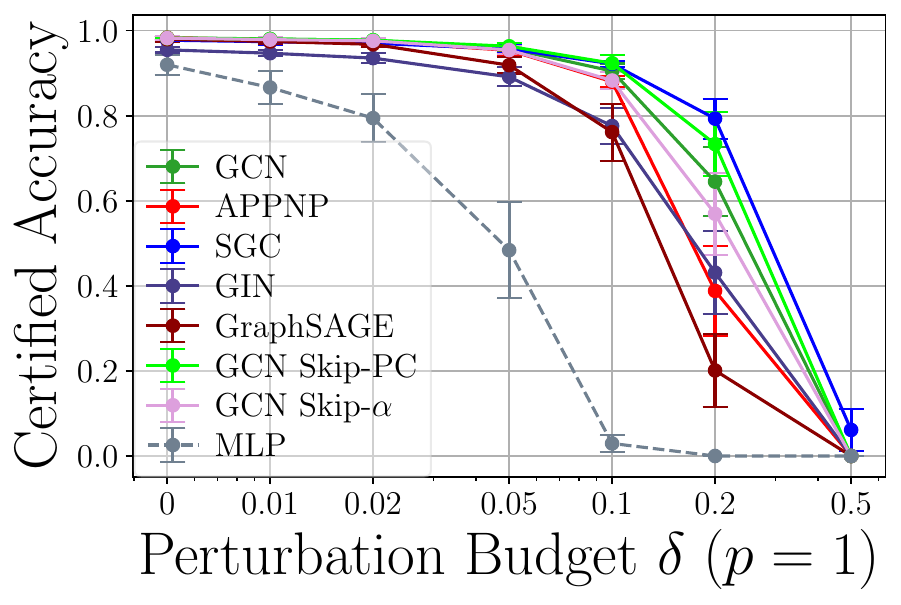}}
    \hfill
    \subcaptionbox{ BU $p_{adv}=0.1$\label{fig:cora_mlb_bl}}
    {\includegraphics[width=0.23\linewidth]{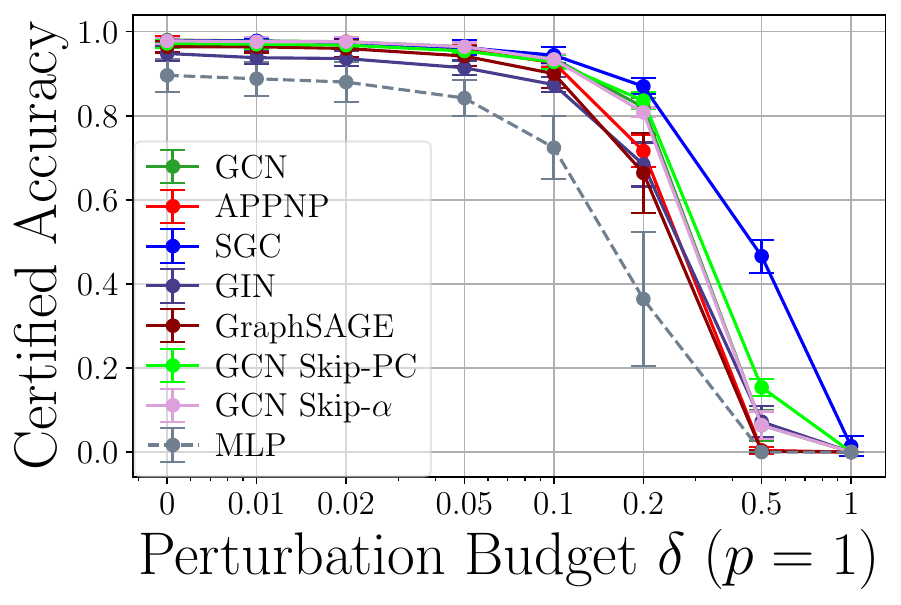}}
    \hfill
    \subcaptionbox{ BL $p_{adv}=0.1$\label{fig:cora_mlb_bu}}
    {\includegraphics[width=0.23\linewidth]{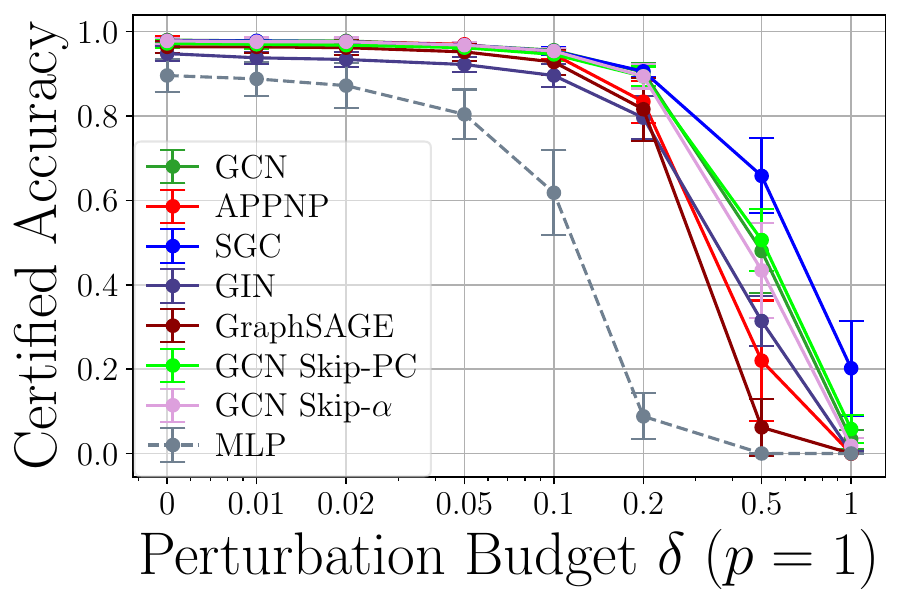}}
    \caption{Cora-MLb results for PL, PU, BL and BU under $p=1$ perturbation.}
    \label{fig:cora_mlb_l1}
\end{figure}

\FloatBarrier
\section{Additional Results: Cora-ML} \label{app_sec:cora_ml_exp}

For Cora-ML we choose 100 test nodes at random and investigate in \Cref{fig:cora_ml_pl} the poison labeled $(PL)$ setting with a strong adversary $p_{adv}=1.0$ for GCN, SGC and MLP. It shows that QPCert can provide non-trivial robustness guarantees even in multiclass settings. \Cref{fig:cora_ml_pu} shows the results for poison unlabeled $(PU)$ and $p_{adv}=0.05$. Only SGC shows better worst-case robustness than MLP. This, together with both plots showing that the certified radii are lower compared to the binary-case, highlights that white-box certification of (G)NNs for the multiclass case is a more challenging task.

\begin{figure}[h]
    \centering
    \subcaptionbox{Cora-ML PL $p_{adv}=1.0$\label{fig:cora_ml_pl}}
    {\includegraphics[width=0.4\linewidth]{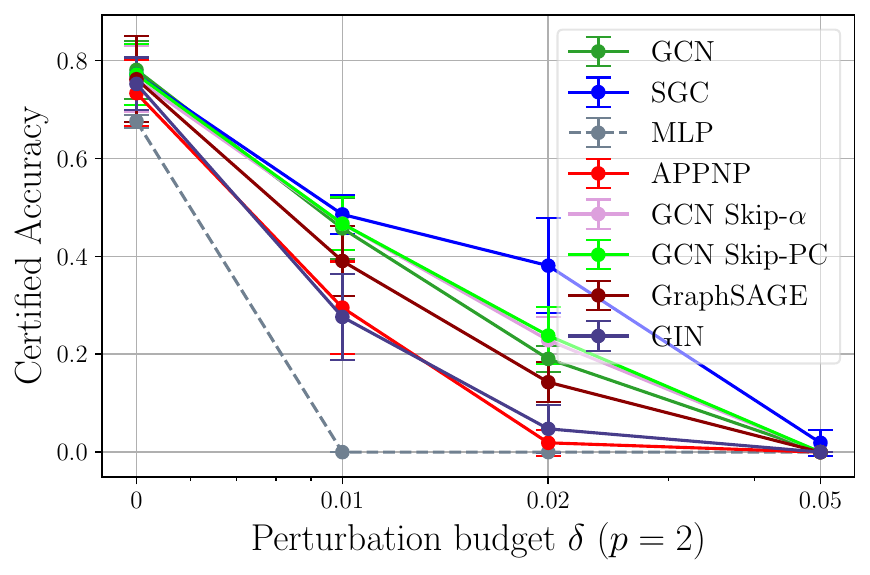}}
    \hfill
    \subcaptionbox{Cora-ML PU $p_{adv}=0.05$\label{fig:cora_ml_pu}}
    {\includegraphics[width=0.4\linewidth]{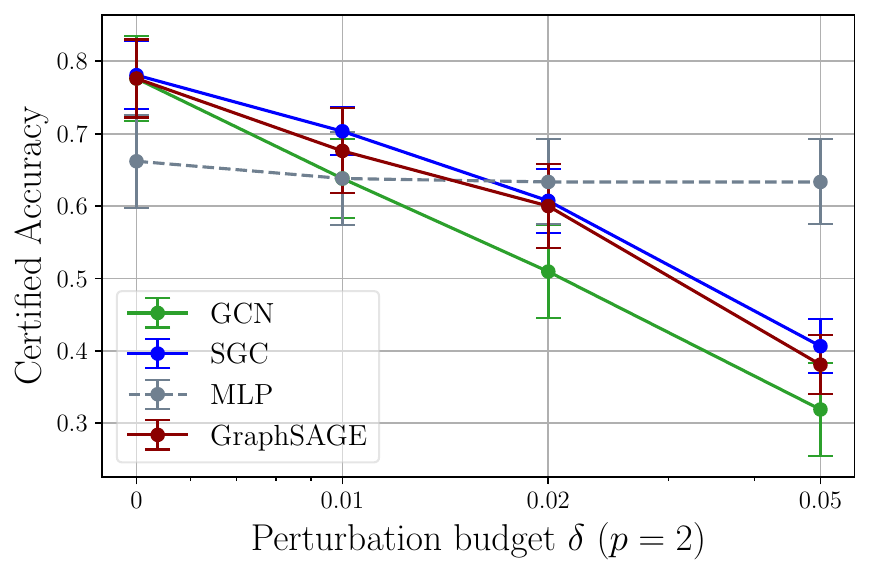}}
    \caption{Cora-ML results for PL and PU.}
    \label{fig:cora_ml}
\end{figure}

\FloatBarrier

\section{Additional Results: WikiCSb} \label{app_sec:wiki_csb_exp}

\Cref{fig:wikics_pu} shows for the poisoned unlabeled setting that until a certain perturbation budget, GNNs lead to higher certified accuracy as an MLP. However, as $p_{adv}=0.02$ it also shows that the certified accuracy of GNNs can be highly susceptible even to few perturbed nodes. \Cref{fig:heatmaps_wikicsba,fig:heatmaps_wikicsbb,fig:heatmaps_wikicsbc,fig_app:heatmaps_wikicsb2} show a more detailed analysis into the certified accuracy difference of different GNN architectures for PU setting for $p_{cert}=0.02$. We want to note the especially good performance of choosing linear activations (SGC). \Cref{fig:wikics_bu} shows that all GNNs achieve better certified accuracy as an MLP. Lastly, \Cref{fig_app:wikicsb_bl} shows the certified accuracy on WikiCSb for the $BL$ settings for $p_{cert}=0.1$. 

\begin{figure}[h!]
    \centering
    \subcaptionbox{WikiCS: $PU$\label{fig:wikics_pu}}{\includegraphics[width=0.33\linewidth]{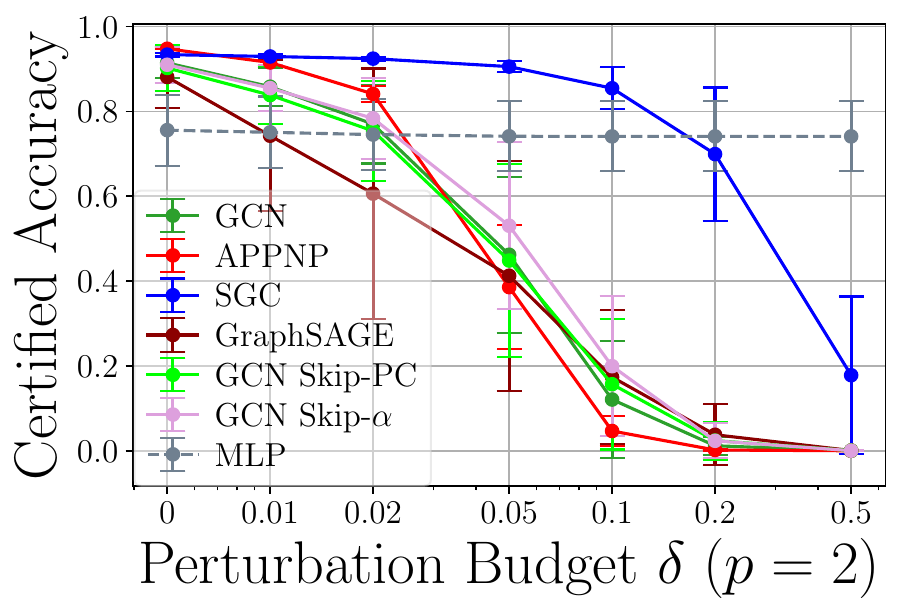}} 
    \hspace{2cm}
    \subcaptionbox{WikiCS: $BU$\label{fig:wikics_bu}}
    {\includegraphics[width=0.33\linewidth]{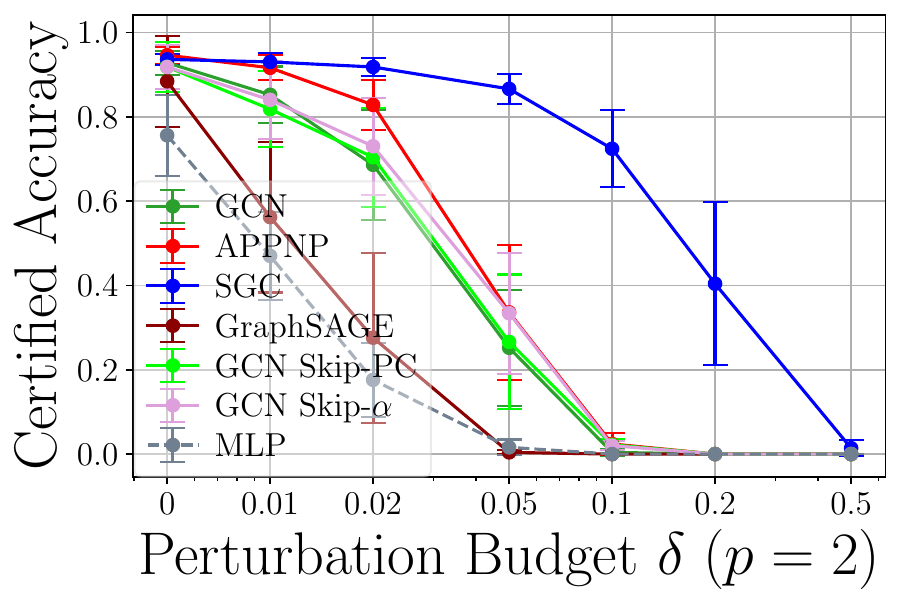}}
    \caption{Certifiable robustness for different (G)NNs in Poisoning Unlabeled $(PU)$ and Backdoor Unlabeled $(BU)$ setting with $p_{adv}=0.02$ for WikiCSb.}
    \label{fig:app_wikics_unlabeled}
    \vspace{-0.1cm}
\end{figure}

\begin{figure}[h]
    \centering
    \subcaptionbox{BL, $p_{adv}=0.1$\label{fig_app:wikicsb_bl}}
    {\includegraphics[width=0.24\linewidth]{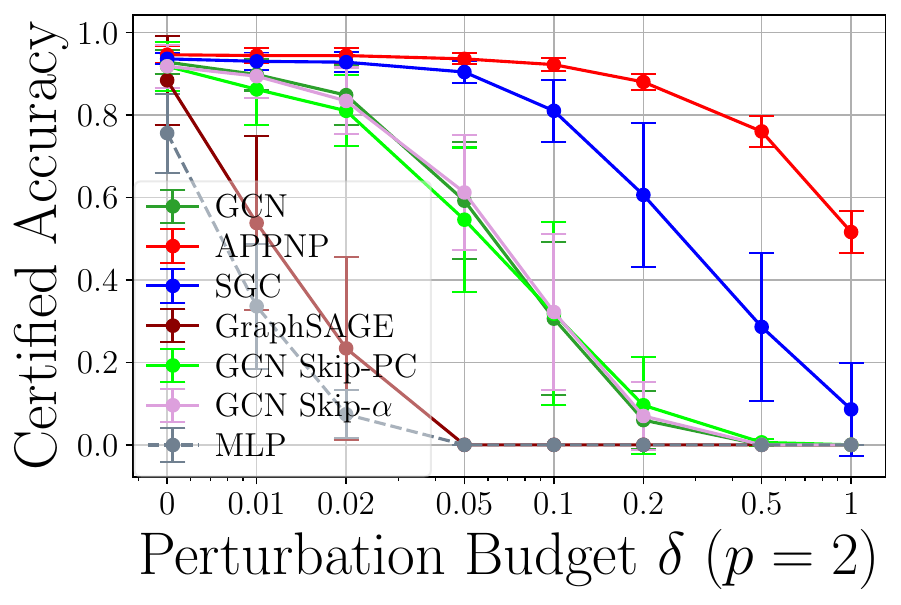}}
    \hfill
    \subcaptionbox{GCN\label{fig:heatmaps_wikicsba}}
    {\includegraphics[width=0.24\linewidth]{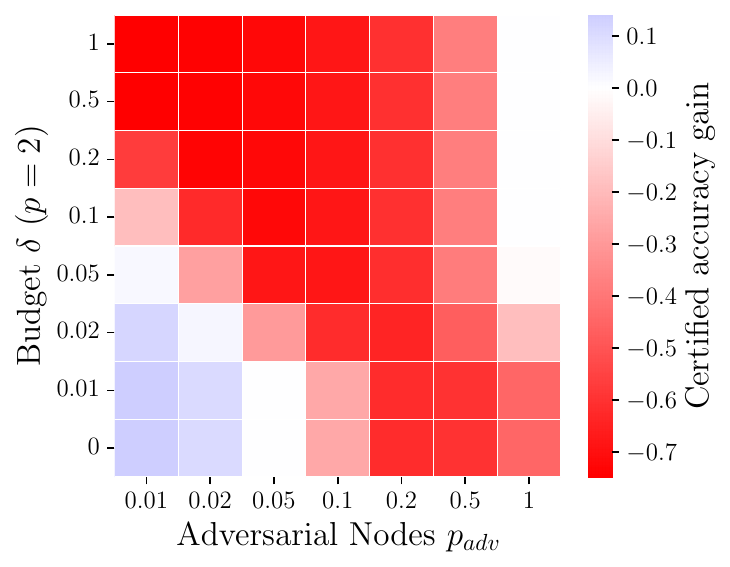}}
    \hfill
    \subcaptionbox{SGC\label{fig:heatmaps_wikicsbb}}
    {\includegraphics[width=0.24\linewidth]{figures/wikics/heatmap_wikics_pu_SGC.pdf}}
    \hfill
    \subcaptionbox{APPNP\label{fig:heatmaps_wikicsbc}}
    {\includegraphics[width=0.24\linewidth]{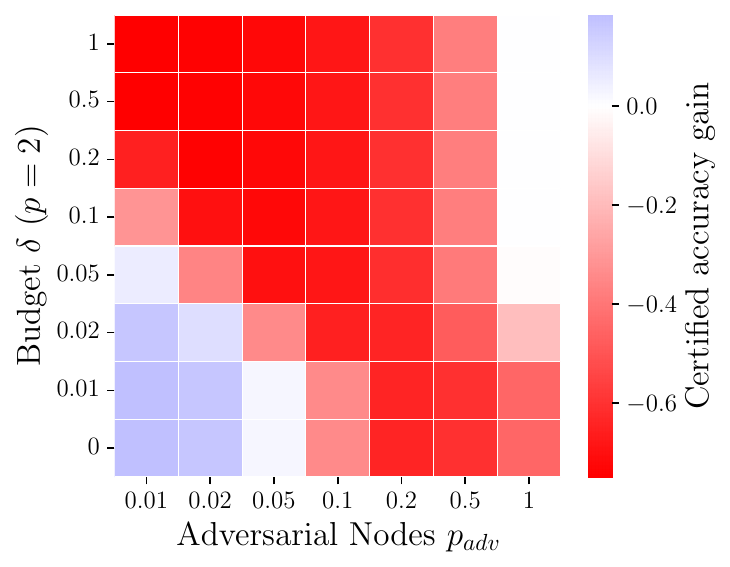}}
    \caption{(a) Backdoor Labeled $(BL)$ Setting. (b)-(d) Heatmaps of GCN, SGC, and APPNP for Poison Unlabeled ($PU$) setting on WikiCSb with $p_{adv}=0.02$.}
    \label{fig_app:heatmaps_wikicsb1}
\end{figure}

\begin{figure}[h]
    \centering
    \subcaptionbox{GCN Skip-$\alpha$}
    {\includegraphics[width=0.3\linewidth]{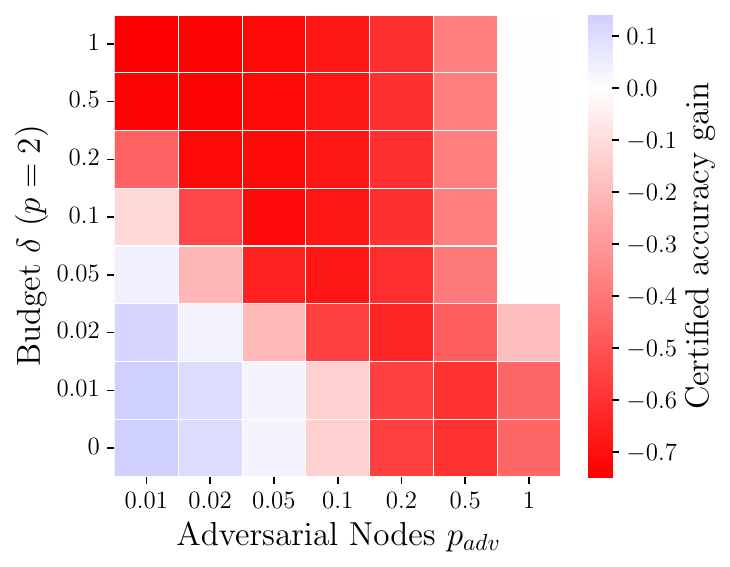}}
    \hfill
    \subcaptionbox{GCN Skippc}
    {\includegraphics[width=0.3\linewidth]{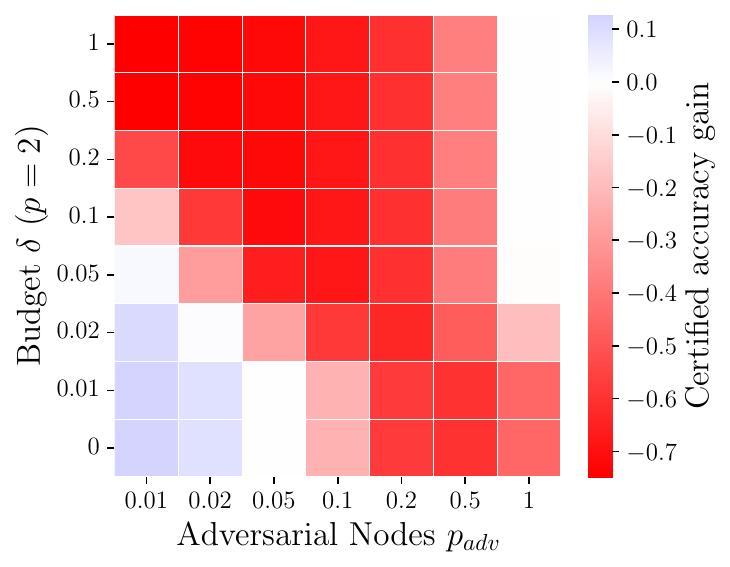}}
    \hfill
    \subcaptionbox{GraphSAGE}
    {\includegraphics[width=0.3\linewidth]{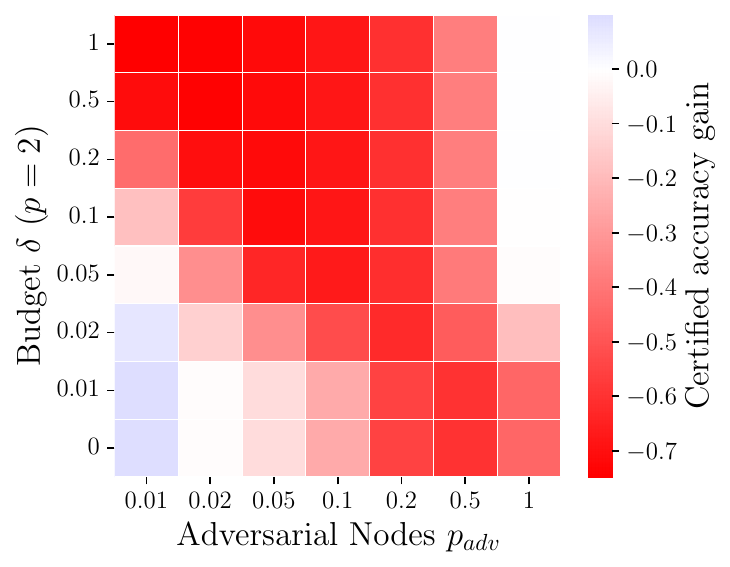}}
    \caption{Heatmaps of GCN Skip-$\alpha$, GCN Skippc, GraphSAGE, and GIN for Poison Unlabeled ($PU$) setting on WikiCSb with $p_{adv}=0.02$.}
    \label{fig_app:heatmaps_wikicsb2}
\end{figure}

\FloatBarrier

\section{\rebuttal{Computational Limits of QPCert}}
\label{app:scalability}
\rebuttal{To understand the computational limits of QPCert, we evaluate the certification runtime on synthetic graphs generated using the CSBM with $1000$ and $10000$ nodes. For each graph, we vary the number of labeled nodes per class as $\{10, 20, 50, 100, 200, 500\}$. Thereby, we effectively simulate the semi-supervised learning setting. Note that in the case of $1000$ nodes, $500$ labeled nodes per class leaves no test node, thus we do not evaluate this setting. We compute the average runtime to certify a single test node, computed over $50$ test nodes and $3$ random seeds as shown in \Cref{fig:scalability}. %
}

\begin{figure}[h]
    \centering
    \includegraphics[width=0.55\linewidth]{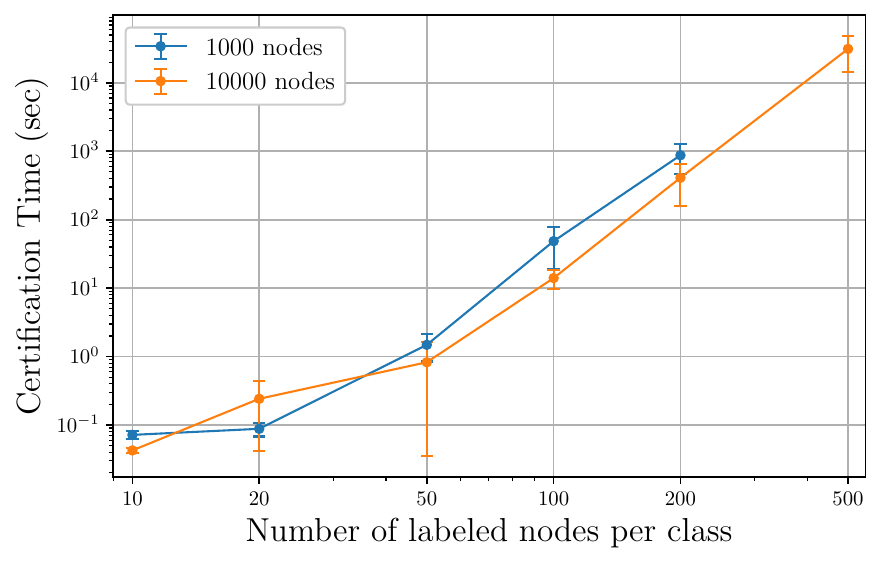}
    \caption{Certification time measured in seconds for graphs of sizes $1,000$ and $10,000$ nodes with varying number of labeled nodes per class. }
    \label{fig:scalability}
\end{figure}

\rebuttal{
Importantly, \Cref{fig:scalability} shows that the computational complexity of QPCert is \emph{not} determined by the total number of nodes in the graph, as evidenced by the difference in time between $1000$ and $10000$ nodes is within the standard deviation. It rather scales near-exponentially with the \emph{number of labeled nodes}, which is typically small in practical semi-supervised settings. As a result, QPCert remains tractable even on significantly large graphs, provided the number of labeled nodes remains moderate. This type of scaling behavior may be different to other approaches for poisoning certification such as the split-and-majority voting approach, whose computation time depends on the total size of the graph and the number of partitions, but not necessarily on the number of labeled data. However, all approaches on poisoning certification independent of the data domain have some computational bottleneck and are thus, currently limited to small to medium-sized datasets and we refer to the references in \Cref{sec:related_work} for more details.}

\FloatBarrier

\newpage
\section{Further Discussions}

\subsection{Applicability to Commonly Studied Perturbation Models and Attacks} \label{app_sec:common_pert_models}

QPCert applies to any poisoning or backdoor attack that performs $\ell_p$-bounded feature perturbations. As such, QPCert is directly applicable to clean-label (graph) backdoor attacks as proposed by \citet{turner2019cleanlabel} and \citet{xing2024cleanlabel}, and clean-label poisoning attacks such as \citet{huang2020metapoison} and \citet{geiping2021witches}. It is not directly applicable to poisoning of the graph structure as performed by MetaAttack \citep{zugner2019metaattack}. However, the MetaAttack strategy can be easily adapted to poison node features as done in \Cref{app:exp_csbm_attack} and we discuss the challenges to extend QPCert to structure perturbations in \Cref{app_sec:graph_structure}. The backdoor attack proposed by \citet{dai2023unnoticeable} changes the node features and training labels jointly and thus, our method is applicable if the training labels will be kept constant or the poisoned nodes are sampled only from the class, the training label should be changed to. Similar, \citet{xi2021graphbackdoor} develops a backdoor attack that changes the features and graph structure jointly and thus, QPCert is not applicable given the graph structure changes.

\subsection{Certifying Against Graph Structure Perturbations} \label{app_sec:graph_structure}

In the following, we discuss how to approach certifying against poisoning of the graph structure and the open challenges that arise in the process. To certify against poisoning the graph structure, again \Cref{eq:bilevel_problem} has to be solved but now, the adversary $\mathcal{A}$ can change the graph structure instead of the node features, meaning the optimization in \Cref{eq:bilevel_problem} is performed w.r.t. the graph structure matrix $\mathbf{S}$ and thus, reads

\begin{align}\label{eq:bilevel_problem_structure}
    \min_{\mtS, \theta} \mathcal{L}_{att}(\theta, \mtG)\quad
    \text{s. t. }\quad \mtS &\in \mathcal{A}(\mS)\ \land\ 
    \theta \in \argmin_{\theta'} \mathcal{L}(\theta', \mtG)
\end{align}

with $\mtS \in \mathcal{A}(\mS)$ representing the perturbed graph structure matrices constructable by the adversary and $\mtG = (\mtS, \mX)$. Indeed, it will be possible to reformulate this problem into a single-level problem similar to the description in \Cref{sec:method}. While in theory, QPCert \Cref{thm:milp} also applies to structure perturbations, the bounding strategy from \Cref{sec:bounds} does result in loose bounds for structure perturbations, as an untrusted node will always result in a lower bound in the respective adjacency matrix entry of $0$ and an upper bound of $1$ - thus, spanning the whole space of possible entries.

To overcome this, one can approach certifying graph structure perturbations by including the NTK computation into the optimization problem with the drawback that each type of GNN architecture will require slight adaptations of the optimization problem depending on its corresponding NTK. Assuming that the chosen model is an $L=1$ layer GCN (the formulation can be easily extended to arbitrary layers, see \Cref{app_sec:ntk_for_gcn}), the bilevel optimization problem reads as follows

\begin{align}\label{eq:bilevel_problem_structure2}
     \min_{\rvalpha, \mtS, \mQ, \mathbf{\Sigma}_1, \mathbf{\Sigma}_2, \mE_1, \dot{\mE}_1, \dot{\mE}_2}\  \sign(\hat{p}_t) \sum_{i=1}^{m} y_i \alpha_i  \Theta_{ti} \quad s.t. \quad \mtS &\in \mathcal{A}(\mS)\ \land\ 
    \rvalpha \in \mathcal{S}(\mTheta) \\
    \mQ &= \mtS(\mathbf{\Sigma}_1 \odot \dot{\mE}_1)\mtS^T + \mathbf{\Sigma}_2 \odot \dot{\mE}_2  \label{eq_app:bilevel1}\\
    \mathbf{\Sigma}_1 &= \mtS\mX\mX\mtS^T \label{eq_app:bilevel2}\\
    \mathbf{\Sigma}_2 &{= \mtS\mE_1\mtS^T \label{eq_app:bilevel3}}\\
{\rmE_1}&{= c_\sigma \expectation{\rmF \sim \mathcal{N}(\mathbf{0}, \mathbf{\Sigma}_{l})}{\sigma(\rmF) \sigma(\rmF)^T}} \\
{\dot{\rmE}_1}& {= c_\sigma \expectation{\rmF \sim \mathcal{N}(\mathbf{0}, \mathbf{\Sigma}_{l})}{\dot\sigma(\rmF) \dot\sigma(\rmF)^T} }\\
 {\dot{\rmE}_{2}}& {=\mathbf{1}_{n \times n}}
\end{align}

 This (non-linear) bilevel problem can be transformed into a single-level problem as described in \Cref{sec:method}, as the inner-level problem $\alpha \in \mathcal{S}(\mQ)$ is the same as in \Cref{eq:bilevel_svm} and the same strategy can be applied to linearly model the resulting constraints from the KKT conditions. However, a crucial difference to \Cref{eq:bilevel_svm} are the additional non-linear constraints arising from optimizing over the NTK computation.  \Cref{eq_app:bilevel1,eq_app:bilevel2,eq_app:bilevel3} are multilinear constraints that can be reduced to bilinear constraints by introducing additional variables as follows (for brevity, again writing the problem in its bilevel form):

\begin{align}\label{eq:bilevel_problem_structure3}
      {\min\  \sign(\hat{p}_t) \sum_{i=1}^{m} y_i \alpha_i  \Theta_{ti} \quad s.t. \quad\mtS }& {\in \mathcal{A}(\mS)\ \land\ 
    \rvalpha \in \mathcal{S}(\mTheta) }\\
     {\mQ }& {= \mH_1''+ \mH_2  }\label{eq_app:linear1}\\
     {\mH_1 }& {= \mathbf{\Sigma}_1 \odot \dot{\mE}_1 }\label{eq_app:element1}\\
     {\mH_1' }& {= \mH_1 \mS^T  }\label{eq_app:matix1}\\
     {\mH_1'' }& {= \mS \mH_1'}\label{eq_app:matix2}\\
     {\mH_2 }& {= \mathbf{\Sigma}_2 \odot \dot{\mE}_2 }\label{eq_app:element2}\\
     {\mathbf{\Sigma}_1 }&= { \mM_1 \mM_1^T}\label{eq_app:matix3}\\
     {\mM_1 }& {= \mS \mX }\label{eq_app:linear2}\\
     {\mM_2} & {= \mE_1 \mS^T}\label{eq_app:matix4}\\
     {\mathbf{\Sigma}_2} & {= \mS \mM_2}\\
 {\rmE_1}& {= c_\sigma \expectation{\rmF \sim \mathcal{N}(\mathbf{0}, \mathbf{\Sigma}_{1})}{\sigma(\rmF) \sigma(\rmF)^T}} \label{eq_app:e1} \\
 {\dot{\rmE}_1}& {= c_\sigma \expectation{\rmF \sim \mathcal{N}(\mathbf{0}, \mathbf{\Sigma}_{1})}{\dot\sigma(\rmF) \dot\sigma(\rmF)^T} }\label{eq_app:e2}\\
 {\dot{\rmE}_{2}}& {=\mathbf{1}_{n \times n}}
\end{align}

 where the optimization is over the same variables as in the previous problem, and additionally the variables $\mH_1$, $\mH_1'$, $\mH_1''$, $\mH_2$, $\mM_1$, and $\mM_2$. \Cref{eq_app:linear1,eq_app:linear2} are linear constraints, the rest of the newly introduced constraints represent bilinear terms. The same bilinearization strategy can be applied given the NTK computation over arbitrary layers. %
The remaining non-linear and non-bilinear terms are \Cref{eq_app:e1,eq_app:e2}. They can be solved in closed-form resulting in relatively well-behaved functions as shown in \Cref{app_sec:bound_ntk}. Thus, a convex relaxation of the expectation terms can be derived by e.g., choosing linear functions that lower and upper bound the expectation terms.

 Assume for now that one can linearly model $\mtS \in \mathcal{A}(\mS)$, this can be achieved by e.g., choosing the adjacency matrix without normalization as graph structure matrix as done by \citet{hojny2024verify}. Then, the crucial question is:

\begin{quote}
     \textit{How to effectively solve the arising bilinear problem?}
\end{quote}

 In particular, the bilinearities arise in both, the constraints and objective, and thereby this contrasts e.g., with \citet{zugner2020cert} who only have to deal with a bilinear objective but have linear constraints. The problem can be slightly simplified if $\mS$ is chosen to be the unnormalized adjacency matrix, as then any $\mtS$ is discrete and thus \Cref{eq_app:matix1,eq_app:matix2,eq_app:matix4} represent multiplications of a continuous with a discrete variable and thus, can be linearly modeled using standard modeling techniques. However, the objective and \Cref{eq_app:element1,eq_app:element2,eq_app:matix3} remain products of continuous variables and thus, can fundamentally not be modeled linearly. One potential way to tackle this, is to use techniques of convex relaxations of bilinear functions as e.g., the so called McCormic envelope \citep{mccormick1976envelope}. However, it is not clear if common bilinear relaxation techniques can scale to problems of the size necessary to compute practical certificates for machine learning datasets, nor is it clear if the relaxations introduced in the process result in tight enough formulations to yield non-trivial certificates. This is complicated by the fact that problems that are studied in the bilinear optimization literature are often significantly smaller than the problem size we can expect from certifying graph structure perturbations. However, it is not unlikely that further progress in bilinear optimization will make this problem tractable.

 We want to note that linearly modeling $\mtS \in \mathcal{A}(\mS)$ by choosing an unnormalized adjacency matrix as the graph structure matrix results in a restriction of possible architecture to certify. It could be possible to adapt \citet{zugner2020cert}'s modeling technique for the symmetric-degree normalized adjacency matrix they use to certify a finite-width GCN to the above optimization problem without increasing its difficulty as it is already bilinear. %

\subsection{ Practical Implication of Feature and Structure Perturbations} \label{app_sec:feature_vs_structure}

 Both feature and structure perturbations find complementary applications in real-world scenarios. In particular, important application areas for graph learning methods with adversarial actors are fake news detection \citep{hu2024fake} and spam detection \citep{li2019spam}. Regarding fake news detection, feature perturbations can model changes to the fake news content or to (controlled) user account comments and profiles to mislead detectors \citep{hu2024fake,le2020malcom}. Structure perturbations allow to model a change in the propagation patterns (e.g., through changing a retweet graph) \citep{wang2023fake}. The qualitative difference in the application of feature compared to structure perturbations is similar for spam detection. Here, feature perturbation can model spammers trying to adapt their comments to avoid detection \citep{li2019spam,wang2012spam}. In contrast, structure perturbations can model behavioral changes in the posting patterns of spammers to imitate real users \citep{soliman2017adagraph,wang2012spam}.

\subsection{QPCert for Other GNNs} 
\label{app:other_gnns}
While our analysis focused on commonly used GNNs with and without skip connections, QPCert is broadly applicable to any GNN and NN with a well-defined analytical form of the NTK. The following challenges and considerations have to be taken into account when extending QPCert to other architectures:
\begin{enumerate}
    \item \textbf{NTK-network equivalence:} The equivalence between the network and NTK breaks down when the network has non-linear last layer or bottleneck layers \citep{liu2020linearity}. Consequently, our certificates do not hold for such networks. 
    \item \textbf{Analytical form of NTK:} Deriving a closed-form expression for NTK is needed to derive bounds on the kernel. This might be challenging for networks with batch-normalizations or advanced pooling layers. 
    \item \textbf{Bounds for the NTK:}  Ensuring non-trivial certificates requires deriving tight bounds for the NTK. Depending on the NTK, additional adaptation of our bounding strategy may be necessary.
\end{enumerate}
Most message-passing networks satisfy these criteria, making QPCert readily applicable to a wide range of architectures. \rebuttal{However, we note that the equivalence between the network and NTK becomes difficult to establish when the training optimization procedure is changed to any other momentum-based or second-order gradient descent, or adversarial training where the optimization is minimization of the maximum adversarial loss. Specifically, the bounded parameter evolution required for the NTK equivalence as discussed in \Cref{app_sec:gnn_svm_equi} is hard to guarantee and can significantly deviate from initialization even in the infinite-width setting.}

\subsection{Comparison to \citet{sosnin2024certifiedrobustnessdatapoisoning}} \label{app:concurrent_work}

\begin{figure}[h!]
    \centering
    \includegraphics[width=0.45\linewidth]{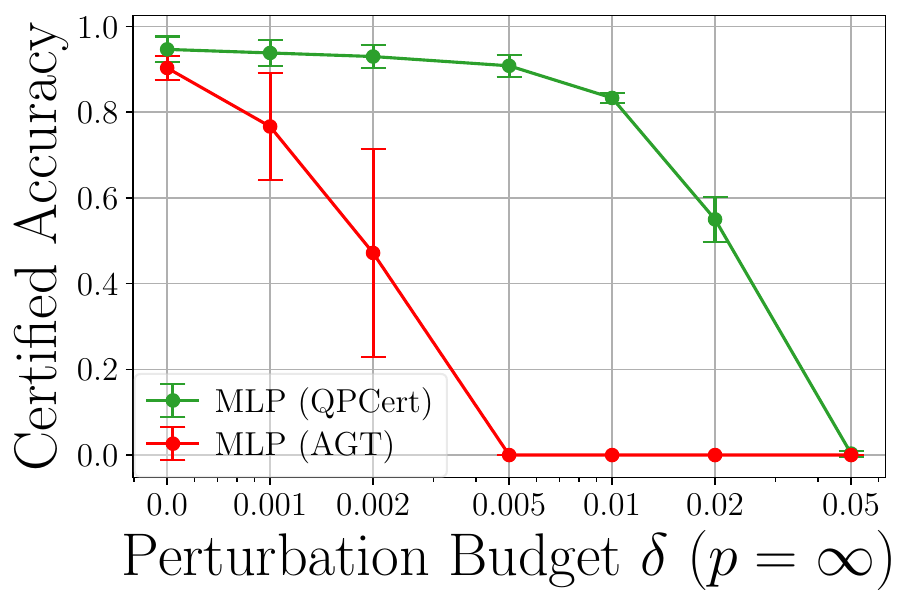}
    \caption{Certifiable robustness of an MLP as provided by QPCert compared to AGT from \citet{sosnin2024certifiedrobustnessdatapoisoning} for the halfmoon dataset in a poison labeled ($p_{adv}$=1) setting.}
    \label{fig_app:concurrent}
    \vspace{-0.1cm}
\end{figure}

As mentioned in \Cref{sec:discussion}, a concurrent work \citep{sosnin2024certifiedrobustnessdatapoisoning} develops a gradient-based certificate  for data poisoning for an MLP called Abstract Gradient Training (AGT). To compare QPCert with their method, we use the halfmoon dataset from \citet{sosnin2024certifiedrobustnessdatapoisoning} and investigate the poison labeled setting with $p_{adv}=1$ in \Cref{fig_app:concurrent}, where a trivial certificate would result in no certified accuracy. \Cref{fig_app:concurrent} shows that our method allows to certify over one order of magnitude larger perturbations than if using AGT, highlighting that QPCert is likely significantly tighter. We want to note that AGT gets less tight the more epochs are used for training. Thus, \citet{sosnin2024certifiedrobustnessdatapoisoning} train using only 1-4 epochs and the MLP for AGT in \Cref{fig_app:concurrent} is also only trained using four epochs following their work. In contrast, QPCert certifies a network trained to convergence. Note that AGT certifies a finite-width network, whereas QPCert certifies an infinite-width network.

\end{document}